\pdfoutput=1
\documentclass[examplefnt,biber]{nowfnt} 

\usepackage[utf8]{inputenc}
\usepackage{microtype}

\usepackage{amsmath}
\usepackage{amsfonts}
\usepackage{amssymb}
\usepackage{thmtools}
\usepackage{mathtools}
\usepackage{bm}
\usepackage{physics}
\usepackage{xfrac}

\numberwithin{theorem}{chapter} 
\newtheorem{assumption}[theorem]{Assumption} 

\DeclarePairedDelimiterX{\kldiv}[2]{(}{)}{%
  #1\;\delimsize\|\;#2%
}
\newcommand{\KL}{\textup{KL}\kldiv} 

\newcommand{\gpy}{{GPy}}
\newcommand{\numpy}{{NumPy}}
\newcommand{\scipy}{{SciPy}}
\newcommand{\gail}{{GAIL}}
\newcommand{\emukit}{{EmuKit}}
\newcommand{\probnum}{{ProbNum}}
\newcommand{\matlab}{{MATLAB}}

\newcommand{\domain}{D}
\newcommand{\measure}{P}
\newcommand{\f}{f}
\newcommand{\points}{X}
\newcommand{\pointsb}{X_\textup{batch}}
\newcommand{\pointsi}{X_\textup{init}}
\newcommand{\dataset}{{\cal D}}

\newcommand{\GP}{\textup{GP}}
\newcommand{\MI}{\textup{MI}}  
\newcommand{\IVR}{\textup{IVR}}  
\newcommand{\NIV}{\textup{NIV}}  
\newcommand{\US}{\textup{US}}  
\newcommand{\PVC}{\textup{PVC}}  

\newcommand{\hparam}{\theta} 
\newcommand{\hparams}{{\vectorbm \hparam}} 
\newcommand{\kernel}{k} 
\newcommand{\mean}{m} 
\newcommand{\transformation}{\varphi} 

\newcommand{\gGP}{\randvar{g}}
\newcommand{\fGP}{\randvar{f}}

\newcommand{\Normal}{\textup{N}}

\newcommand{\Id}{\mI} 

\newcommand{\rkhs}{\mathcal{H}}

\newcommand{\algres}{res}

\DeclarePairedDelimiterX\Set[2]{\lbrace}{\rbrace}%
{ #1 \,:\, #2 }                                         

\newcommand{\N}{\mathbb{N}}

\newcommand{\R}{\mathbb{R}}

\newcommand{\Rd}{\mathbb{R}^d}

\newcommand{\Trans}{^{\intercal}}

\DeclareSymbolFont{stmry}{U}{stmry}{m}{n}
\DeclareMathSymbol\obar\mathrel{stmry}{"3A}
\DeclareMathSymbol\otimes\mathrel{stmry}{"0F}
\DeclareMathSymbol\ominus\mathrel{stmry}{"17}
\makeatletter
\newcommand{\superimpose}[2]{
  {\ooalign{$#1\@firstoftwo#2$\cr\hfil$#1\@secondoftwo#2$\hfil\cr}}}
\makeatother

\makeatother

\renewcommand{\Pr}{\mathbb{P}}
\newcommand{\Exp}{\mathbb{E}}
\newcommand{\Var}{\mathrm{Var}}
\newcommand{\Cov}{\mathrm{Cov}}

\DeclareMathOperator*{\argmax}{arg\,max}
\DeclareMathOperator*{\argmin}{arg\,min}

\newcommand{\randvar}[1]{\mathsf{#1}} 
\newcommand{\vectorbm}[1]{\bm{#1}} 

\def\rvf{{\vectorbm{\randvar{f}}}}

\def\va{{\vectorbm{a}}}

\def\vf{{\vectorbm{f}}}

\def\vk{{\vectorbm{k}}}

\def\vm{{\vectorbm{m}}}
\def\vn{{\vectorbm{n}}}

\def\vp{{\vectorbm{p}}}

\def\vu{{\vectorbm{u}}}
\def\vv{{\vectorbm{v}}}
\def\vw{{\vectorbm{w}}}
\def\vx{{\vectorbm{x}}}
\def\vy{{\vectorbm{y}}}
\def\vz{{\vectorbm{z}}}

\def\vbeta{{\vectorbm{\beta}}}

\def\vgamma{{\vectorbm{\gamma}}}

\def\vxi{{\vectorbm{\xi}}}

\def\mA{{\vectorbm{A}}}

\def\mI{{\vectorbm{I}}}

\def\mK{{\vectorbm{K}}}

\def\mV{{\vectorbm{V}}}

\def\mPhi{{\vectorbm{\Phi}}}

\def\mSigma{{\vectorbm{\Sigma}}}

\DeclareMathAlphabet{\randvarit}{\encodingdefault}{\sfdefault}{m}{sl}
\SetMathAlphabet{\randvarit}{bold}{\encodingdefault}{\sfdefault}{bx}{n}

\def\gO{{\mathcal{O}}}

\renewcommand{\d}{\:d}

\usepackage{commath}
\usepackage{cleveref}
\usepackage{wrapfig}
\usepackage{algorithm}
\usepackage{algpseudocode}
\usepackage{afterpage}
\usepackage[dvipsnames, table]{xcolor}

\usepackage[edges]{forest}
\usepackage{makecell}

\definecolor{bluegraph}{rgb}{0.0, 0.33, 0.71}

\usepackage{placeins}

\usepackage{pifont}
\newcommand{\cmark}{\ding{51}}%
\newcommand{\xmark}{\ding{55}}%

\newcommand{\cmarkg}{\textcolor{green!50!black!50}{\cmark}} 
\newcommand{\xmarkr}{\textcolor{red}{\xmark}} 

\newtheorem{assumption-own}[theorem]{Assumption}

\title{Bayesian Quadrature}

\subtitle{Gaussian Processes for Integration}

\maintitleauthorlist{
Maren Mahsereci \\
Yahoo Research \\
maren.mahsereci@yahooinc.com
\and
Toni Karvonen \\
School of Engineering Sciences \\ Lappeenranta--Lahti University of Technology LUT \\
toni.karvonen@lut.fi
}

\issuesetup
{%
 copyrightowner={A.~Gupta and M.~Casey},
 volume        = 19,
 issue         = 2--3,
 pubyear       = 2026,
 isbn          = xxx-x-xxxxx-xxx-x,
 eisbn         = xxx-x-xxxxx-xxx-x,
 doi           = 10.1108/FTMAL-06-2026-0132,
 firstpage     = 121,
 lastpage      = 240
 }

\usepackage{mwe}

\author[1]{Mahsereci, Maren}
\author[2]{Karvonen, Toni}

\affil[1]{Yahoo Research; maren.mahsereci@yahooinc.com}
\affil[2]{School of Engineering Sciences, Lappeenranta--Lahti University of Technology LUT; toni.karvonen@lut.fi}

\articledatabox{\nowfntstandardcitation}

\begin{document}

\makeabstracttitle

\begin{abstract}
  Bayesian quadrature is a probabilistic, model-based approach to numerical integration, the estimation of intractable integrals, or expectations.
  Although Bayesian quadrature was popularised already in the 1980s, no systematic and comprehensive treatment has been published.
  The purpose of this survey is to fill this gap.
  We review the mathematical foundations of Bayesian quadrature from different points of view; present a systematic taxonomy for classifying different Bayesian quadrature methods along the three axes of modelling, inference, and sampling; collect general theoretical guarantees; and provide a controlled numerical study that explores and illustrates the effect of different choices along the axes of the taxonomy.
  We also provide a realistic assessment of practical challenges and limitations to application of Bayesian quadrature methods and include an up-to-date and nearly exhaustive bibliography that covers not only machine learning and statistics literature but all areas of mathematics and engineering in which Bayesian quadrature or equivalent methods have seen use.
\end{abstract}

\chapter{Introduction}
\label{sec:introduction}

Numerical computation of integrals, often representing expectations or normalization constants, is a pervasive practical problem throughout machine learning, statistics, scientific computing, and engineering.
In the form studied in this survey, numerical integration consists in approximating the definite integral
\begin{equation*}
  I_\measure(\f) = \int_\domain \f(\vx) \dif \measure(\vx)
\end{equation*}
of a real-valued integrand function $\f$ with respect to a probability measure $\measure$ on a set $\domain \subseteq \R^d$.
The approximation typically takes the form of a \emph{quadrature rule} $\sum_{i=1}^N w_i \f(\vx_i)$, which is a weighted sum of integrand evaluations at some nodes $\vx_i \in \domain$.
Numerical integration can be done on arbitrary non-Euclidean domains and can also incorporate information other than point evaluations, generalisations that we discuss in \Cref{sec:generalisations}.
Over the past two centuries a vast number of numerical integration techniques have been developed from different starting points and for different types of integration problems.
The most popular of these are classical \emph{Gaussian quadratures} that are constructed using polynomial exactness criteria and admit an elegant mathematical theory~\citep{Gautschi2004}; easy-to-use and widely applicable \emph{Monte Carlo integration} based on random sampling that is, no doubt, the most popular approach to approximating an integral~\citep{Caflisch1998}; and \emph{quasi-Monte Carlo methods} that use low-discrepancy sequences and can compute high-dimensional integrals effectively~\citep{DickKuoSloan2013}.
Other approaches include sparse grid methods, Clenshaw--Curtis quadrature, the trapezoidal rule, and the topic of this survey, \emph{Bayesian quadrature}.

Although its history can be traced further back, Bayesian quadrature was not popularized until the late 1980s by Persi Diaconis and Anthony O'Hagan~\citep{Diaconis1988, OHagan1991}.
Since then, there has been a steady stream of interest and contributions from machine learning and statistics communities~\citep[e.g.,][]{Kennedy1998, RasmussenGhahramani2002, Osborne2012a, Briol2019} and, more recently, from the point of view of numerical analysis and approximation theory~\citep{Rathinavel2019, Kanagawa2020, Santin2021}.
Bayesian quadrature falls within \emph{probabilistic numerics}~\citep{pnbook22}, its defining characteristic being the interpretation of numerical integration as a statistical inference problem to which the Bayesian paradigm and methods can be brought to bear.

In Bayesian quadrature, a stochastic process prior placed on the integrand is conditioned on the ``data'' consisting of integrand evaluations.
The mean of the resulting posterior distribution provides a point estimate for the integral, while the spread of the posterior quantifies uncertainty (see Figure~\ref{fig:bq-illustration}).
That \emph{explicit prior information}, such as
smoothness or structural properties, is easy to encode both conceptually and in practice by selection of an appropriate prior for the integrand distinguishes Bayesian quadrature from other numerical integration techniques.
Priors that are ``correct'' or ``good'' result in fast rates of convergence and reliable quantification of uncertainty (see Figure~\ref{fig:intro-integration-example}).
Non-Bayesian integration methods too encode various types of prior information, but typically in a non-systematic and implicit manner.
For example, Gaussian quadratures implicitly assume that the integrand is well approximated by polynomials; the trapezoidal rule that a sum of trapezoids approximates the integral; and Monte Carlo encodes no assumptions whatsoever besides square-integrability.
While quasi-Monte Carlo methods use node placement to encode certain assumptions, Bayesian quadrature can couple \emph{any nodes with any prior information}, making it extremely versatile.
That any nodes proposed in the vast literature on numerical integration can be used in Bayesian quadrature is one of its great advantages.
Another distinguishing feature is the \emph{statistically principled uncertainty quantification} that Bayesian quadrature provides: the spread of the posterior tells one how much to trust the integral estimate.
Note that this uncertainty quantification arises from the prior and is wholly distinct in character from the sample variance of Monte Carlo.

\begin{figure}[t]
  \centering
  \includegraphics[width=0.8\textwidth]{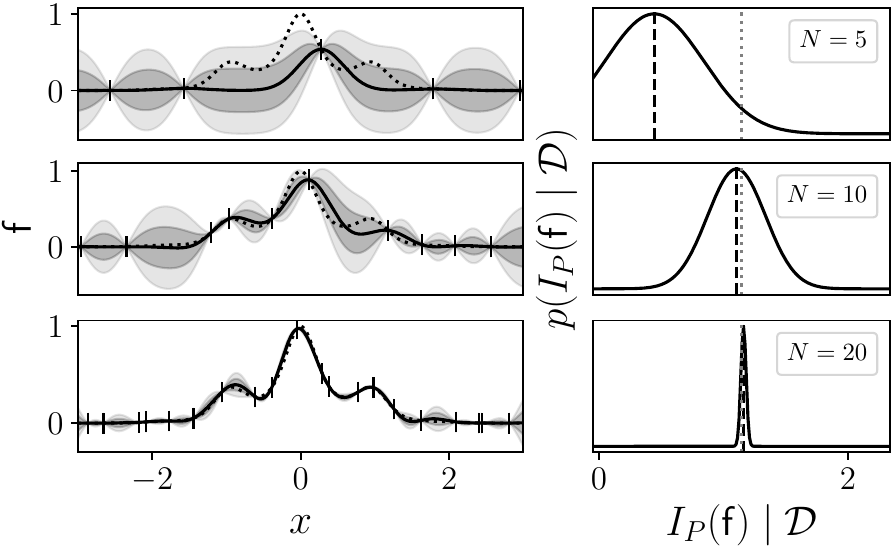}
  \caption{A sketch of Bayesian quadrature: A distribution over a function $\f$ gives rise to a distribution over the integral $I_\measure(\f) = \int_\domain f(x) \dif \measure(x)$.}
  \label{fig:bq-illustration}  
\end{figure}

Much methodology and theory has been developed for Bayesian quadrature over the years (worthy mentions include \citealt{Xi2018}; \citealt{Chai2019a}; and \citealt{Kanagawa2020}), and applications have ranged from computer graphics~\citep{Marques2015} and engineering~\citep{Kumar2008, Dang2021} to cardiac and tsunami modelling~\citep{Oates2017a, Li2022}.
Despite a wealth of such contributions, no comprehensive treatment of Bayesian quadrature as a whole has ever been published.
Perhaps partly for this reason, confusion continues to plague the field.
For example, ``Bayesian quadrature'' is often used to refer exclusively to a single \emph{method} (or some collection of methods) to tackle integration problems of a very specific form.
Although comparisons of Bayesian quadrature methods to other numerical integration techniques abound in the literature, no systematic empirical study in a controlled environment on the effect of the multitude of choices on modelling, inference, and sampling that go into designing a Bayesian quadrature method has been undertaken.
This is a pity, for the richness of these choices is one of the main attractions (and often a great source of frustration in application) of Bayesian quadrature.
The purpose of this survey is to fill this gap and provide a comprehensive survey of Bayesian quadrature that covers both the theory and practice of Bayesian quadrature.

\begin{figure}[t]
  \centering
  \includegraphics[width=\textwidth]{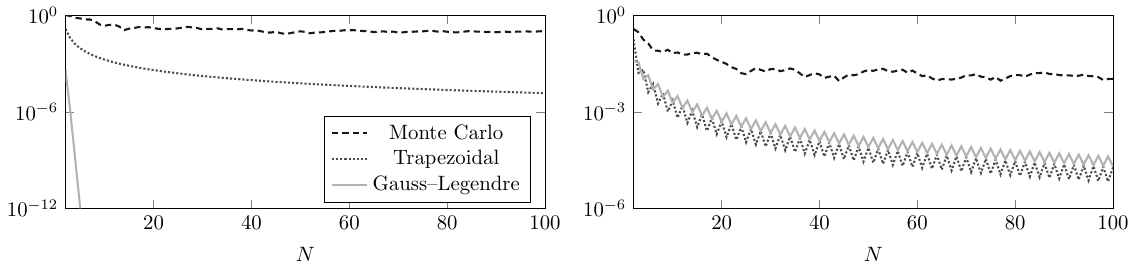}
  \caption{Errors when the smooth function $f_1(x) = \exp(x)$ (\emph{left}) and non-smooth function $f_2(x) = \operatorname{exp}(-\lvert x - \tfrac{1}{2} \rvert )$ (\emph{right}) are integrated on the interval $[0, 1]$ with Monte Carlo, the trapezoidal rule, and the Gauss--Legendre quadrature. Because $f_1$ has a Taylor series that converges fast, Gauss--Legendre reaches machine precision almost immediately as it assumes that $f_1$ resembles a polynomial. The only assumption the trapezoidal rule makes is that the area under the graph of $f_1$ can be approximated by trapezoids. Monte Carlo assumes essentially nothing and, as a result, converges very slowly.
  The function $f_2$ does not resemble a polynomial and so the assumptions Gauss--Legendre do not benefit it.
  \emph{Bayesian quadrature makes prior assumptions such as these explicit and puts them on a systematic probabilistic footing.}
  }
  \label{fig:intro-integration-example}
\end{figure}

\section{Contents}
\label{sec:contributions}

There are six main elements that make up the contents of this survey.

\paragraph{I --- \Cref{sec:bq}: Foundations.}
  This chapter contains a foundational introduction to Bayesian quadrature.
  In particular, the chapter contains what appears to be the first attempt at properly defining Bayesian quadrature (see Definition~\ref{def:bq}) in a way that encompasses the modern usage of the term in machine learning literature.
  In our parlance a Bayesian quadrature refers to numerical integration based on \emph{Gaussian process modelling}.
  That is, a Bayesian quadrature method is any numerical integration method that computes or approximates a posterior distribution for the integral $I_\measure(\fGP)$ of a potentially transformed Gaussian process $\fGP$ (i.e., $\fGP = \varphi \circ \gGP$ for a mapping $\varphi$ and a Gaussian process $\gGP$).
  The definition gives rise to a natural division of Bayesian quadrature methods in (a) \emph{conjugate methods} for which $\fGP$ is a Gaussian process (i.e., $\varphi$ is affine) and (b) \emph{non-conjugate methods} for which $\fGP$ is not a Gaussian process (i.e., $\varphi$ is non-affine).
  This anticipates the development of more detailed taxonomy in \Cref{sec:taxonomy}.
  We also discuss how the theory of reproducing kernel Hilbert spaces can be used to interpret Bayesian quadrature, an interpretation which underlies most theoretical analysis, and provide high-level theory that is independent of the underlying Gaussian process model.

\paragraph{II --- \Cref{sec:taxonomy}: Taxonomy.}
  This chapter contains an extensive taxonomy for Bayesian quadrature that provides a well-defined framework for classifying and categorizing concrete algorithms and implementations.
  Inspiration for the taxonomy comes from the literature on global optimization~\citep{Jones2001}.
  We classify Bayesian quadrature methods along three main axes based on~(i) what kind of \emph{model} they employ, (ii) how they approach \emph{inference}, and~(iii) how they select, or \emph{sample}, the nodes $\vx_i$ at which the integrand is evaluated.
  \Cref{fig:tax-graph} contains an overview of the taxonomy and its most important labels.
  The taxonomy illustrates that Bayesian quadrature is an umbrella term and that there is a great variety of different Bayesian quadrature \emph{methods} that differ significantly from one another.

\paragraph{III --- \Cref{sec:practical-issues}: Practice.}
Like all statistical and numerical methods, Bayesian quadrature methods are faced with practical problems that limit their applicability. For Bayesian quadrature, the dominant problems are those of computational cost and computation of kernel means, as one needs to solve linear systems of $N$ equations and compute integrals that involve the covariance function of the Gaussian process model. Although every practitioner of Bayesian quadrature is inevitably confronted by these problems, they are rarely emphasised or treated in a systematic manner. \Cref{sec:practical-issues} reviews what we consider the most important practical problems that a practitioner needs to solve in order to implement a Bayesian quadrature method and offer some solutions. One should recognise that Bayesian quadrature can never hope to be as flexible or general as, for example, Monte Carlo integration. This is understandable, for Bayesian quadrature is based on principled statistical modelling, which must always have a cost.

\paragraph{IV --- \Cref{sec:guarantees}: Theoretical guarantees.}
Various theoretical convergence guarantees, given in terms of upper bounds on the integration error under certain assumptions, have appeared in the literature~\citep[e.g.,][]{Briol2019, Kanagawa2020, Wynne2021}.
In \Cref{sec:guarantees}, we provide somewhat more practical versions of these guarantees by (i) making the dependency of the bounds explicit on those parameters of the covariance function that are commonly estimated from the data and by (ii) including the effect regularisation via a nugget term has. The proofs, which appear in \Cref{sec:proofs}, are based on results in the scattered data approximation literature and do not materially differ from those of the existing results.

\paragraph{V --- \Cref{sec:experiments}: Empirical illustrations.}   
  Here we put the taxonomy and theory from \Cref{sec:taxonomy,sec:guarantees} in action by conducting extensive and controlled numerical experiments along the three main axes of the taxonomy.
  That is, we systematically study how the model, inference methodology, and sampling scheme affect the accuracy and calibration of a Bayesian quadrature method, as well as how the different choices interact.
  In the experiments we use a collection of test integrands whose properties, such as differentiability, are well-understood.
  The experiments validate some of the theoretical guarantees.
  While almost every article on Bayesian quadrature contains an empirical study, these typically focus on comparing the new method to other candidates, such as Monte Carlo or quasi-Monte Carlo, in realistic---and hence opaque---integration problems that do not easily lend themselves to untangling the reasons behind one method outperforming the other.
  The purpose of this chapter is not to demonstrate the superiority of Bayesian quadrature in general or any particular method to alternative numerical methods but to highlight the effect of different approaches and choices \emph{within} Bayesian quadrature.

\paragraph{VI --- Comprehensive bibliography.}
We do not conduct a systematic literature review. However, as a survey article such as this is naturally equipped with a long bibliography, we could not resist taking this to extremes and including every reference, no matter how obscure, on Bayesian quadrature or equivalent approaches known to us in machine learning, statistics, engineering, and mathematics literature. Consequently, in most cases we make little to no effort at providing more than a modicum of context for the references.

\section{Notational conventions}
\label{sec:notation}

The set of non-negative integers is denoted $\N$.
We strive to follow the following type conventions:
objects in \emph{italics} are scalars or scalar-valued functions ($x$ or $f$); {\sffamily sans serif} is used to indicate random variables and stochastic processes ($\fGP$); \textbf{boldface} objects are vectors (if lowercase; $\vx$) or matrices (if uppercase; $\mK$); and \textbf{{\sffamily boldface sans serif}} stands for a random vector ($\rvf$).
In \Cref{sec:proofs-isotropic-matern}, we use $\lesssim$ to denote asymptotic inequality.
This notation is equivalent to big $O$ notation.

\chapter{Bayesian quadrature}
\label{sec:bq}

This chapter is an introduction to Bayesian quadrature, which we define (see Definition~\ref{def:bq}) as any method for computing or approximating the integral of a potentially transformed Gaussian process $\fGP$ conditioned on a number of evaluations of the integrand $\f$.
We further divide Bayesian quadrature methods to
\begin{itemize}
\item \emph{conjugate methods} when the posterior over the integral is normally distributed and 
\item \emph{non-conjugate methods} when the posterior is non-normal.
\end{itemize}
A detailed classification of the existing Bayesian quadrature methods is given in \Cref{sec:taxonomy}.
To precede the construction of Bayesian quadrature in \Cref{sec:bq-construction} we include a short review of Gaussian processes in \Cref{sec:gps}.
Moreover, in \Cref{sec:conn-appr-theory} we review the equivalence between conjugate Bayesian quadrature and worst-case optimal quadrature rules in reproducing kernel Hilbert spaces.
This equivalence is crucial for deriving the convergence guarantees,as well as many other theoretical results, that we present in \Cref{sec:guarantees}.
Throughout this chapter it is assumed that both the set of nodes $\points$ at which the integrand is to be evaluated and the transformed Gaussian process $\fGP$ (in particular, its covariance function $\kernel$) used to model the integrand are fixed; different ways to select them are reviewed in \Cref{sec:taxonomy,sec:practical-issues}.

\section{Gaussian processes}
\label{sec:gps}

\citet{RasmussenWilliams2006} provide a standard introduction to Gaussian processes for machine learning.
For more theoretical expositions, see \citet{Bogachev1998} and \citet{Berlinet2004}.
Let $\kernel \colon \domain \times \domain \to \R$ be a symmetric positive-semidefinite \emph{covariance function} (or \emph{kernel}) on $\domain \subseteq \R^d$, which is to say that the \emph{kernel Gram matrix}
\begin{equation*}
  \mK_{\points\points} = (\kernel(\vx_i, \vx_j))_{i,j=1}^N \in \R^{N \times N}
\end{equation*}
is positive-semidefinite for any $N \in \N$ and any $\points = \{\vx_1, \ldots, \vx_N\} \subseteq \domain$.
Throughout this survey we tacitly assume that $\points$ are such that the Gram matrix is positive-definite and hence invertible.
Most covariance function that we consider are \emph{positive-definite}, in that their Gram matrix is positive-definite whenever the points $\points$ are pairwise distinct.

Let $\mean \colon \domain \to \R$ be a function.
A Gaussian process $\gGP$ with mean $\mean$ and covariance function $\kernel$ is denoted
\begin{equation*}
  \gGP \sim \GP(\mean, \kernel).
\end{equation*}
Suppose that one wishes to do regression and is given a dataset
\begin{equation*}
  \dataset = \{ (\vx_1, y_1), \ldots (\vx_N, y_N) \}
\end{equation*}
consisting of some noiseless observations $\vy = (y_1, \ldots, y_N) \in \R^N$ at pairwise distinct nodes $X = \{\vx_1, \ldots, \vx_N\} \subseteq \domain$.
The mean and covariance of the conditional Gaussian process, $\gGP \mid \dataset$, are
\begin{equation}
  \begin{split}
  \label{eq:gp-posterior-mean}
  \mean_\dataset(\vx) = \Exp[ \gGP(\vx) \mid \dataset ] = \mean(\vx) + \vk_\points(\vx)\Trans \mK_{\points\points}^{-1} ( \vy - \vm_\points)
  \end{split}
\end{equation}
and
\begin{equation}
  \begin{split}
  \label{eq:gp-posterior-covariance}
  \kernel_\dataset( \vx, \vx' ) &= \Cov[ \gGP(\vx), \gGP(\vx') \mid \dataset ] \\
  &= \kernel(\vx, \vx') - \vk_\points(\vx)\Trans \mK_{\points\points}^{-1} \vk_\points(\vx'),
  \end{split}
\end{equation}
where $\vm_\points = (\mean(\vx_1), \ldots, \mean(\vx_N)) \in \R^N$ and $\vk_\points \colon D \to \R^N$ is a vector-valued function defined as
\begin{equation*}
  \vk_\points(\vx) = ( \kernel(\vx, \vx_1), \ldots, \kernel(\vx, \vx_N) ) \in \R^N.
\end{equation*}

A covariance function is \emph{stationary} if it only depends on the difference of the inputs and \emph{isotropic} if it depends only on their distance. 
That is, a covariance $\kernel$ is stationary if there is a function $\Phi_k \colon \domain \to \R$ such that $k(\vx, \vx') = \Phi_k(\vx - \vx')$ for all $\vx, \vx' \in \domain$ and isotropic if $\Phi_\kernel \colon [0, \infty) \to \R$ and $\kernel(\vx, \vx') = \Phi_\kernel( \norm[0]{ \vx - \vx'} )$.
Commonly used covariance functions on any $D \subseteq \Rd$ include the \emph{square exponential}
\begin{equation}
  \label{eq:se-kernel}
  \kernel(\vx, \vx') = \sigma^2 \exp\bigg( \! - \frac{\norm[0]{\vx - \vx'}^2}{2 \ell^2} \bigg)
\end{equation}
and covariance functions of the \emph{Matérn class}
\begin{equation}
  \label{eq:matern-kernel}
  \kernel(\vx, \vx') = \sigma^2 \frac{2^{1-\nu}}{\Gamma(\nu)} \bigg( \frac{\sqrt{2\nu} \norm[0]{\vx - \vx'}}{\ell} \bigg)^\nu K_\nu \bigg( \frac{\sqrt{2\nu} \norm[0]{\vx - \vx'}}{\ell} \bigg),
\end{equation}
where $\Gamma$ denotes the gamma function and $K_\nu$ the modified Bessel function of the second kind of order $\nu > 0$.
\begin{figure}[t]
  \centering
  \includegraphics[width=\textwidth]{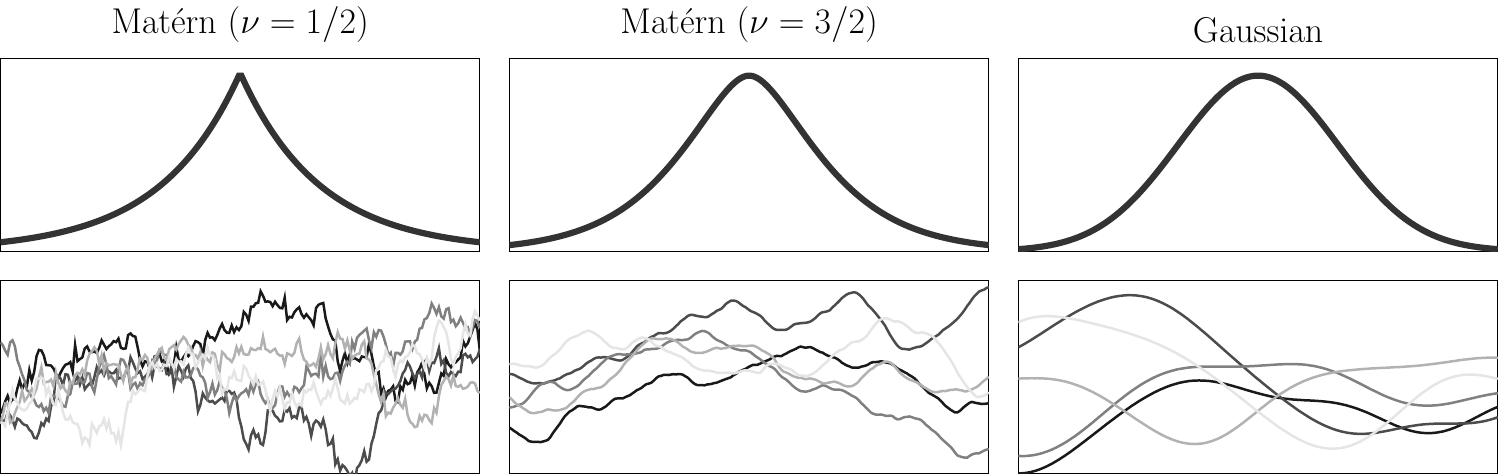}
  \caption{\emph{Top:} Three isotropic covariance functions $\kernel(\cdot, 0)$. \emph{Bottom:} Samples from the corresponding GPs. Note how smooth covariance functions yield smooth samples.}
  \label{fig:matern-illustration}
\end{figure}
Both the square exponential and all Matérn covariance functions are isotropic.
\emph{Product Matérn} covariance functions, which are defined as dimension-wise products of one-dimensional Matérn kernels (and are therefore stationary),
\begin{equation}
  \label{eq:product-matern-kernel}
  \kernel(\vx, \vx') = \sigma^2 \prod_{i=1}^d \frac{2^{1-\nu}}{\Gamma(\nu)} \bigg( \frac{\sqrt{2\nu} \abs[0]{x_i - x_i'}}{\ell} \bigg)^\nu K_\nu \bigg( \frac{\sqrt{2\nu} \abs[0]{x_i - x_i'}}{\ell} \bigg),
\end{equation}
are also often used.
Note that the square exponential kernel~\eqref{eq:se-kernel} is the product of one-dimensional square-exponential kernels because $\norm[0]{\vx}^2 = x_1^2 + \cdots + x_d^2$.\footnote{In fact, the square exponential covariance function is the \emph{only} isotropic covariance function that is the product of its one-dimensional versions~\citep[pp.\@~55-56]{Stein1999}.}
The \emph{Brownian motion} covariance function
\begin{equation}
  \label{eq:brownian-kernel}
  k(x, x') = \min\{x, x'\},
\end{equation}
which is defined on $D = [0, T] \subseteq \R$ for any $T > 0$ is important due to its connections to spline interpolation and the trapezoidal rule.
The Brownian motion kernel is not stationary.
\Cref{fig:bq-illustration} shows three different covariance functions and GPs.
Although most covariance functions used in Bayesian quadrature are stationary, there are several exceptions besides the Brownian motion kernel.
For example, \citet{Fitzsimons2017} use a piecewise constant covariance function.

\section{Construction of Bayesian quadrature}
\label{sec:bq-construction}

Let $\transformation \colon \R \to \R$ be a function. Bayesian quadrature methods proceed by taking a Gaussian process prior $\gGP \sim \GP(\mean, \kernel)$ with a prior mean $m$ and covariance function $\kernel$ and defining a prior $\fGP$ for the integrand $\f$ by mapping $\gGP$ through $\transformation$:
\begin{equation} \label{eq:fGP-transformation}
  \fGP = \transformation \circ \gGP.
\end{equation}
Let the dataset consist of noiseless evaluations of the integrand $\f$ at nodes $\vx_1, \ldots, \vx_N$:
\begin{equation*}
  \dataset = \{ (\vx_1, \f(\vx_1)), \ldots, (\vx_N, \f(\vx_N)) \}.
\end{equation*}
We call any method that computes or approximates the posterior distribution over $I_\measure(\fGP) \mid \dataset$ a Bayesian quadrature (BQ) method.

\begin{definition}[Bayesian quadrature] 
  \label{def:bq}
  Let $\gGP$ be a Gaussian process and $\transformation \colon \R \to \R$ a function.
  Any method that computes or approximates the posterior distribution
  \begin{equation}
    \label{eq:bq-posterior-def}
    I_\measure(\fGP) \mid \dataset = \int_\domain \fGP(\vx) \dif \measure(\vx) \: \Bigl\vert \: \dataset
  \end{equation}
  over the integral of the process $\fGP = \transformation \circ \gGP $ is a \emph{Bayesian quadrature method}\footnote{The term \emph{Bayesian quadrature} was popularised by \citet{OHagan1991}. The earliest occurrence that we have found is in \citet{OHagan1988}. \citet[p.\@~164]{Diaconis1988} indicates an origin no later than 1985. In the early literature, \emph{Bayes--Hermite quadrature} is used to refer to conjugate BQ when the GP prior has a square exponential covariance and $\measure$ is a Gaussian distribution~\citep{OHagan1991,KennedyOHagan1996,Kennedy2000}.}.
\end{definition}
The integral posterior mean $\Exp[ I_\measure(\fGP) \mid \dataset ]$ or its approximation is used as a point estimate of the true integral $I_\measure(f)$.
The integral posterior variance, $\Var[ I_\measure(\fGP) \mid \dataset ]$, is typically used to summarise the spread of $I_\measure(\fGP) \mid \dataset$.
We shall always tacitly assume that the process $\fGP$, the domain $\domain$, and the probability measure $\measure$ are such that the posterior distribution in~\eqref{eq:bq-posterior-def} is well-defined and has finite mean and variance.
If the function $\varphi$ is affine, it suffices that $\mean$ and $\kernel(\cdot, \vx)$ be integrable with respect to $\measure$ for every $\vx \in \domain$ and \smash{$\int_\domain \sqrt{\smash[b]{\kernel(\vx, \vx)}} \dif \measure(\vx) < \infty$}.
For example, the integral is finite if the covariance function is stationary and $\measure$ has a bounded density on a bounded $\domain$ since in that case $k(\vx, \vx) = \Phi_k(\bm{0})$ for all $\vx \in \domain$.

The selection of $\gGP$ and $\transformation$ is supposed to be informed about any prior information one may have about the true integrand $\f$.
The prior information that is typically encoded includes different structural properties, such as stationarity, non-negativity or periodicity, and smoothness, described by the number of continuous derivatives that one expects $\f$ to possess.
For example, when $\fGP = \gGP$ and the covariance function of $\gGP$ is a Matérn in~\eqref{eq:matern-kernel} with smoothness $\nu$, the samples of $\fGP$ essentially have smoothness $\nu$ in a Sobolev sense; see~\citet{Scheuerer2010, Steinwart2019, Henderson2022}, as well as~\citet[Corollary~4.15]{Kanagawa2018}.
However, \emph{model misspecification}, to be understood here in a broad sense that $\f$ ``looks'' different than the samples from $\fGP$, is inevitable in anything more complicated than toy examples.
Practical selection of the prior is a general challenge in Bayesian inference and not limited to Bayesian quadrature or inference with Gaussian processes.
Prior selection, when simplified to mean the selection of a limited number of kernel parameters, such as $\sigma$ and $\ell$ of the square exponential kernel in~\eqref{eq:se-kernel}, is briefly reviewed in \Cref{sec:param-estimation}.

\subsection{Conjugate Bayesian quadrature (affine $\transformation$)}
\label{sec:conj-bq}

The prior $\fGP$ is a GP if the function $\transformation$ is affine.
In this case Gaussian conjugate inference is possible and we accordingly call the resulting methods \emph{conjugate Bayesian quadrature methods}.
This class of methods is sometimes called \emph{standard Bayesian quadrature}~\citep[Sec.\@~3.2]{Karvonen2019a} or \emph{vanilla Bayesian quadrature}~\citep[Sec.\@~2.1]{Gessner20}.
Because an affine $\transformation$ merely shifts and scales the mean and covariance of $\gGP$, we can set $\transformation$ to be the identity function without loss of generality.
This gives the GP prior
\begin{equation*}
  \fGP = \mathrm{id} \circ \gGP = \gGP \sim \GP(\mean, \kernel).
\end{equation*}
Due to the linearity of integration, the posterior for $I_\measure(\fGP)$ is a normal random variable that is obtained by simply integrating the posterior process $\fGP \mid \dataset = \gGP \mid \dataset$:
\begin{equation*}
  I_\measure(\fGP) \mid \dataset = I_\measure(\fGP \mid \dataset) = \int_\domain [ \fGP(\vx) \mid \dataset ] \dif \measure(\vx) \sim \Normal( \mu_{\dataset}, \Sigma_{\dataset} ).
\end{equation*}
The integral mean
\begin{equation}
  \label{eq:bq-mean}
  \begin{split}
  \mu_{\dataset} = \Exp[ I_\measure(\fGP) \mid \dataset ] &= \int_\domain m_\dataset(\vx) \dif \measure(\vx) \\
  &= m_\measure + \vk_{\measure\points}\Trans \mK_{\points\points}^{-1} ( \vf_\points - \vm_\points),
  \end{split}
\end{equation}
which is the integral of the posterior mean in~\eqref{eq:gp-posterior-mean}, is expressed in terms of the integral $m_\measure = I_\measure(m)$ of the prior mean function $m$, the integrand evaluations $\vf_\points = (\f(\vx_1), \ldots, f(\vx_N)) \in \R^N$, and the evaluations $\vk_{\measure\points} = I_\measure(\vk_\points) = ( \kernel_\measure(\vx_1), \ldots \kernel_\measure(\vx_N)) \in \R^N$ of the \emph{kernel mean embedding}
\begin{equation}
  \label{eq:kernel-mean}
  \kernel_\measure(\vx) = \int_\domain \kernel(\vx', \vx) \dif \measure(\vx').
\end{equation}
The integral variance
\begin{equation}
  \begin{split}
  \label{eq:bq-var}
    \Sigma_{\dataset} = \Var[ I_\measure(\fGP) \mid \dataset ] &= \int_\domain \int_\domain \kernel_\dataset(\vx, \vx') \dif \measure(\vx) \dif \measure(\vx') \\
    &= \kernel_{\measure\measure} - \vk_{\measure\points}\Trans \mK_{\points\points}^{-1} \vk_{\measure\points},
  \end{split}
\end{equation}
where
\begin{equation}
  \label{eq:kernel-var}
  \kernel_{\measure\measure} = \int_\domain \int_\domain \kernel(\vx, \vx') \dif \measure(\vx) \dif \measure(\vx') = \int_\domain \kernel_\measure(\vx) \dif \measure(\vx)
\end{equation}
is the \emph{initial variance}, is the integral of the posterior covariance in~\eqref{eq:gp-posterior-covariance}.
We encapsulate conjugate Bayesian quadrature and its posterior equations in the following definition.

\begin{definition}[Conjugate Bayesian quadrature]
  If $\transformation \colon \R \to \R$ is affine, so that $\fGP= \transformation \circ \gGP$ is a Gaussian process, Bayesian quadrature is \emph{conjugate}.
  Without a loss of generality we can set $\gGP \sim \GP(m, k)$ and $\transformation = \operatorname{id}$.
  Then
  \begin{equation*}
    I_\measure(\fGP) \mid \dataset = I_\measure(\fGP \mid \dataset) = \int_\domain [ \fGP(\vx) \mid \dataset ] \dif \measure(\vx) \sim \Normal( \mu_{\dataset}, \Sigma_{\dataset} ),
  \end{equation*}
  where $\mu_{\dataset}$ and $\Sigma_{\dataset}$ given in~\eqref{eq:bq-mean} and~\eqref{eq:bq-var} are the \emph{integral mean} and \emph{variance} of conjugate Bayesian quadrature.
\end{definition}

Both the integral mean and variance can be expressed in terms of the weight vector $\vw = \mK_{\points\points}^{-1} \vk_{\measure\points}$ as
\begin{equation}
  \label{eq:conj-bq-mean-var}
  \mu_\dataset = m_\measure + \vw\Trans ( \vf_\points - \vm_\points) \quad \text{ and } \quad \Sigma_\dataset = k_{\measure\measure} - \vw\Trans \vk_{\measure\points}.
\end{equation}
Some of the weights $\vw = (w_1, \ldots, w_N)$ can be negative~\citep{HuszarDuvenaud2012, KarvonenKanagawa2019}.
For a zero-mean prior the mean is simply $\mu_\dataset = \vw\Trans \vf_\points$, which means that it is a \emph{quadrature rule}, an approximation of the integral in the form of a weighted sum of integrand evaluations.
Although it is customary to use the term \emph{cubature rule} to describe such an approximation when $d > 1$~\citep[see][for some discussion]{Cools1997}, this distinction is not consistently made in the literature on Bayesian quadrature.
For articles where \emph{Bayesian cubature} is used, see \citet{Karvonen2018, Rathinavel2019, Jagadeeswaran2022, Fisher2020}.

\subsection{Non-conjugate Bayesian quadrature (non-affine $\transformation$)}
\label{sec:non-conj-bq}

When $\transformation$ is not affine, we speak of non-conjugate Bayesian quadrature.

\begin{definition}[Non-conjugate Bayesian quadrature] 
  If $\transformation \colon \R \to \R$ is non-affine and $\fGP = \transformation \circ \gGP$ is non-Gaussian\footnote{See \citet{Mase1977} and \citet{KhatriMukerjee1987} for some theorems on preservation of Gaussianity under transformations.}, Bayesian quadrature is \emph{non-conjugate}.
  No general expression exists for the posterior $I_\measure(\fGP) \mid \dataset$.
\end{definition}

Non-conjugate Bayesian quadrature is usually motivated by a desire to model the non-negativity of the integrand, a property which cannot be encoded by a GP model~\citep{Osborne2012a, Osborne2012b, HamrickGriffiths2013, Gunter2014}.
Functions used to produce a non-negative $\fGP$ include
\begin{equation*}
  \transformation(x) = \alpha + x^2 \quad \text{ and } \quad \transformation(x) = \exp(x),
\end{equation*}
where $\alpha > 0$, and a general framework for encoding range constraints is described by \citet{Chai2019a}.
See \citet{ZhouPeng2020} and \citet{Hamid2023} for applications to engineering and neural networks.
However, when $\transformation$ is non-affine (as the two functions above are), the prior $\fGP = \transformation \circ \gGP$ is usually not a Gaussian process and the posterior $I_\measure(\fGP) \mid \dataset$ is not in general available in closed form, which precludes exact inference.
Intractability of the posterior and its moments introduces a significant complication over conjugate Bayesian quadrature, forcing one to approximate the posterior.
Gaussian approximations based on linearisation or moment-matching are typically used, but one can employ any posterior approximation available in the literature.

\subsection{Bayesian probabilistic numerical integration}
\label{sec:bpni}

In the definition of Bayesian quadrature (\Cref{def:bq}) the integrand is modelled as a potentially non-linearly transformed Gaussian process $\fGP = \transformation \circ \gGP$.
Nothing stops one from using \emph{any} suitable stochastic process $\fGP$.
The following definition is adapted from \citet[Sec.\@~1]{Zhu2020}.

\begin{definition}[Bayesian probabilistic numerical integration]
Let $\fGP$ be a stochastic process.
Any method that computes or approximates the posterior distribution
\begin{equation} \label{eq:BPNI}
  I_\measure(\fGP) \mid \dataset = \int_\domain \fGP(\vx) \dif \measure(\vx) \: \Bigl\vert \: \dataset
\end{equation}
is a \emph{Bayesian probabilistic numerical integration (BPNI) method}.
\end{definition}

All Bayesian quadrature methods are BPNI methods.
Due to computational tractability issues, few non-Gaussian BPNI methods exist.
Student's $t$-processes, which differ little from Gaussian processes in practice, have been used in numerical integration by \citet[Sec.\@~2.2]{OHagan1991} and \citet{Pruher2017}.
Integration methods based on Bayesian additive regression trees introduced by \citet{Zhu2020} appear to be the only non-Gaussian BPNI methods that substantially differ from Bayesian quadrature.
There is much room for the development of new BPNI methods, particularly when integrand evaluations constitute the computational bottleneck.
One can consider even more general and difficult integration problems in which the distribution $P$ is unknown and accessible only via sampling or black-box evaluation of the density function~\citep{Oates2017a,Fernandez2020}.

\section{Connections to approximation theory}
\label{sec:conn-appr-theory}

Given the well known equivalences of Gaussian process regression to optimal approximation in a \emph{reproducing kernel Hilbert space} (RKHS) and to \emph{kernel interpolation}~\citep{KimeldorfWahba1970, Scheuerer2013, Kanagawa2018}, it is no surprise that these equivalences extend to conjugate Bayesian quadrature.
We briefly describe the equivalences because of their historical value and because much of the theory for Bayesian quadrature, summarised in Sections~\ref{sec:basic-theory} and~\ref{sec:guarantees}, exploits them.
See \citet[Sec.\@~2]{Briol2019} and \citet[Sec.\@~2]{Kanagawa2020} for earlier reviews on the topic.

\subsection{Optimal integration in a reproducing kernel Hilbert space}
\label{sec:integration-in-rkhs}

Let $\rkhs(\kernel)$ denote the RKHS of $k$.
This is a certain Hilbert space consisting of real-valued functions defined on $\domain$ in which the kernel has the following \emph{reproducing property}: $\langle g, k(\cdot, \vx) \rangle_{\rkhs(k)} = g(\vx)$ for every $g \in \rkhs(\kernel)$ and $\vx \in \domain$.
For example, the RKHS of a Matérn kernel~\eqref{eq:matern-kernel} of order $\nu$ is, up to an equivalent norm, the Sobolev space $H^{\nu + d/2}(\R^d)$, a fact that is discussed in more detail in Section~\ref{sec:guarantees-isotropic-matern}.
We refer to \citet{Berlinet2004}, \citet{Steinwart2008}, and \citet{Paulsen2016} for the theory of RKHSs.
Section~\ref{sec:rkhs-appendix} contains a very concise review.

The \emph{worst-case integration error} of any weights $\vv \in \R^N$ and points $\points = \{\vx_1, \ldots, \vx_N\} \subset \domain$ in $\rkhs(k)$ is defined as
\begin{equation}
  \label{eq:wce-definition}
  e_k(\vv, \points) = \sup_{ \norm[0]{g}_{\rkhs(k)} \leq 1} \, \Big \lvert \int_\domain g(\vx) \dif \measure(\vx) - \sum_{i=1}^N v_i g(\vx_i) \Big \rvert.
\end{equation}
It follows from the reproducing property that the worst-case error can be expressed in terms of the kernel.

\begin{lemma} \label{lemma:wce}
  Suppose that $k(\cdot, \vx)$ is measurable with respect to $\measure$ for every $\vx \in \domain$ and that $\int_\domain \sqrt{\smash[b]{k(\vx, \vx)}} \dif \measure(\vx) < \infty$.
  Then 
  \begin{equation}
  \begin{split}
  \label{eq:wce-explicit}  
  e_k(\vv, \points) &= \norm[0]{\kernel_\measure - \textstyle\sum_{i=1}^N v_i \kernel(\cdot, \vx_i) }_{\rkhs(\kernel)} \\
  &= \sqrt{\smash[b]{ \kernel_{\measure\measure} - 2\vv\Trans\vk_{\measure\points} + \vv\Trans \mK_{\points\points} \vv}}.
  \end{split}
\end{equation}
\end{lemma}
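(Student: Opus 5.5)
The plan is to show that the integration functional $I_\measure(g) = \int_\domain g \dif \measure$ is a bounded linear functional on $\rkhs(\kernel)$ whose Riesz representer is the kernel mean embedding $\kernel_\measure$. Once this is in place, the error functional $g \mapsto I_\measure(g) - \sum_i v_i g(\vx_i)$ has representer $\kernel_\measure - \sum_i v_i \kernel(\cdot, \vx_i)$. The supremum over the unit ball of $\lvert\langle g, h\rangle_{\rkhs(\kernel)}\rvert$ equals $\norm[0]{h}_{\rkhs(\kernel)}$ by Cauchy--Schwarz, with equality attained at $g = h/\norm[0]{h}$ (or trivially when $h = 0$). This gives the first equality in~\eqref{eq:wce-explicit}.

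\textbf{Step 1: integrability and boundedness.} For $g \in \rkhs(\kernel)$, the reproducing property and Cauchy--Schwarz give
\[
\lvert g(\vx)\rvert = \lvert\langle g, \kernel(\cdot,\vx)\rangle\rvert \leq \norm[0]{g}_{\rkhs(\kernel)} \sqrt{\kernel(\vx,\vx)} .
\]
Combined with the assumption $\int_\domain \sqrt{\kernel(\vx,\vx)} \dif \measure(\vx) < \infty$, this yields $\lvert I_\measure(g)\rvert \leq \norm[0]{g}_{\rkhs(\kernel)} \int_\domain \sqrt{\kernel(\vx,\vx)} \dif\measure(\vx)$. Hence $I_\measure$ is a bounded linear functional on $\rkhs(\kernel)$. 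Measurability of $g$ is the main technical obstacle. I would handle it by noting that finite combinations of $\kernel(\cdot,\vx)$ are measurable by assumption and are dense in $\rkhs(\kernel)$. RKHS-norm convergence implies pointwise convergence, and pointwise limits of measurable functions are measurable, so every $g \in \rkhs(\kernel)$ is measurable.

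\textbf{Step 2: identify the representer.} By the Riesz representation theorem there exists $h_\measure \in \rkhs(\kernel)$ with $I_\measure(g) = \langle g, h_\measure\rangle$ for all $g$. Evaluating at $g = \kernel(\cdot,\vx)$ gives
\[
h_\measure(\vx) = \langle \kernel(\cdot,\vx), h_\measure\rangle = I_\measure(\kernel(\cdot,\vx)) = \int_\domain \kernel(\vx',\vx) \dif\measure(\vx') = \kernel_\measure(\vx).
\]
So $h_\measure = \kernel_\measure$; in particular $\kernel_\measure$ lies in $\rkhs(\kernel)$. The representer of point evaluation at $\vx_i$ is $\kernel(\cdot,\vx_i)$ by the reproducing property, so the representer of the error functional is $h = \kernel_\measure - \sum_i v_i \kernel(\cdot,\vx_i)$, and the Cauchy--Schwarz argument above gives the first equality.

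\textbf{Step 3: expand the norm.} Expanding $\norm[0]{h}_{\rkhs(\kernel)}^2$ by bilinearity gives three kinds of terms.
\begin{itemize}
\item $\langle \kernel_\measure, \kernel_\measure\rangle = I_\measure(\kernel_\measure) = \kernel_{\measure\measure}$, using the representer property and~\eqref{eq:kernel-var}.
\item $\langle \kernel_\measure, \kernel(\cdot,\vx_i)\rangle = \kernel_\measure(\vx_i)$, by the reproducing property.
\item $\langle \kernel(\cdot,\vx_i), \kernel(\cdot,\vx_j)\rangle = \kernel(\vx_i,\vx_j)$.
\end{itemize}
Collecting these gives $\kernel_{\measure\measure} - 2\vv\Trans\vk_{\measure\points} + \vv\Trans\mK_{\points\points}\vv$, and taking square roots yields the second equality in~\eqref{eq:wce-explicit}.
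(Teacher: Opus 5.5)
Your proposal is correct and follows essentially the same route as the paper. Both bound $I_\measure$ on $\rkhs(\kernel)$ via the reproducing property and Cauchy--Schwarz, identify its Riesz representer as $\kernel_\measure$, take the upper bound from Cauchy--Schwarz and the matching lower bound from the normalised representer, and expand the norm with the reproducing property. Your density-plus-pointwise-limit argument for measurability is exactly the lemma of Steinwart and Christmann that the paper cites. You also handle the degenerate case $\kernel_\measure = \sum_i v_i \kernel(\cdot, \vx_i)$, which the paper's normalised choice $g_0$ tacitly excludes.
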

\begin{proof}
  The measurability assumption implies that all $g \in \rkhs(k)$ are measurable~\citep[Lem.\@~4.24]{Steinwart2008}.
  Let $g \in \rkhs(k)$. 
  By the reproducing property and the Cauchy--Schwarz inequality,
  \begin{equation*}
    \begin{split}
    \Big \lvert \int_\domain g(\vx) \dif \measure(\vx) \Big\rvert &= \Big \lvert \int_\domain \langle g, k(\cdot, \vx) \rangle_{\rkhs(k)} \dif \measure(\vx) \Big\rvert \\
    &\leq \lVert g \rVert_{\rkhs(k)} \int_\domain \sqrt{\smash[b]{k(\vx, \vx)}} \dif \measure(\vx),
    \end{split}
  \end{equation*}
  where we used $\lVert k(\cdot, \vx) \rVert_{\rkhs(k)}^2 = \langle k(\cdot, \vx), k(\cdot, \vx) \rangle_{\rkhs(k)} = k(\vx, \vx)$.
  It follows that every $g \in \rkhs(k)$ is integrable with respect to $\measure$.
  As $k(\cdot, \vx) \in \rkhs(k)$ for every $\vx \in \domain$, the mean embedding given by $k_\measure(\vx) = \int_\domain k(\vx', \vx) \dif \measure(\vx')$ is well-defined.
  From the Riesz representation theorem it follows that $k_\measure \in \rkhs(k)$ and $I_\measure(g) = \langle g, k_\measure \rangle_{\rkhs(k)}$ for every $g \in \rkhs(k)$; see Section~\ref{sec:rkhs-appendix} for details.
  Therefore the RKHS norm in~\eqref{eq:wce-explicit} is well-defined.

  To prove the equality, use the Cauchy--Schwarz inequality to obtain
  \begin{equation*}
    \begin{split}
    e_k(\vv, \points) &= \sup_{ \norm[0]{g}_{\rkhs(k)} \leq 1} \, \lvert \langle g, k_\measure \rangle_{\rkhs(k)} -  \langle g, \textstyle\sum_{i=1}^N v_i k(\cdot, \vx_i) \rangle_{\rkhs(k)} \rvert \\
    &\leq \norm[0]{\kernel_\measure - \textstyle\sum_{i=1}^N v_i \kernel(\cdot, \vx_i) }_{\rkhs(\kernel)} ,
    \end{split}
  \end{equation*}
  which equals \smash{$\sqrt{\smash[b]{ \kernel_{\measure\measure} - 2\vv\Trans\vk_{\measure\points} + \vv\Trans \mK_{\points\points} \vv}}$} by the reproducing property and \smash{$I_\measure(g) = \langle g, k_\measure \rangle_{\rkhs(k)}$}.
  To verify that the upper bound is an equality, select $g_0 = (\kernel_\measure - \textstyle\sum_{i=1}^N v_i \kernel(\cdot, \vx_i)) / \norm[0]{\kernel_\measure - \textstyle\sum_{i=1}^N v_i \kernel(\cdot, \vx_i) }_{\rkhs(\kernel)}$.
  As this function has unit norm, $e_k(\vv, \points) \geq \lvert \int_\domain g_0(\vx) \dif \measure(\vx) - \sum_{i=1}^N v_i g_0(\vx_i) \rvert$.
  A short and straightforward computation reveals that the lower bound equals $\norm[0]{\kernel_\measure - \textstyle\sum_{i=1}^N v_i \kernel(\cdot, \vx_i) }_{\rkhs(\kernel)}$.
\end{proof}

See, for example, \citet[Sec.\@~10.2]{NovakWozniakowski2010} or \citet[Cor.\@~3.6]{Oettershagen2017} for generalisations of Lemma~\ref{lemma:wce}.
The worst-case error measures the largest possible integration error by a quadrature rule of the form $\sum_{i=1}^N v_i g(\vx_i)$ for functions $g \in \mathcal{H}(\kernel)$ normalised in the RKHS norm.
If the function $\f$ the quadrature rule is used to integrate is \emph{not} an element of $\mathcal{H}(\kernel)$, the worst-case error is still well-defined, as is evident from~\eqref{eq:wce-explicit}.
However, in this case the worst-case error needs to bear no relation to the integration error $\abs[0]{\int_\domain \f(\vx) \dif \measure(\vx) - \sum_{i=1}^N v_i \f(\vx_i)}$.
As squared worst-case error is quadratic in $\vv$, it is easy to see that, for fixed points, Equation~\eqref{eq:wce-explicit} is minimised by the weights $\vv = \vw = \mK_{\points\points}^{-1} \vk_{\measure\points}$, which are precisely the weights that define the integral mean and variance of conjugate Bayesian quadrature in~\eqref{eq:conj-bq-mean-var}.
Moreover, for these weights the worst-case error simplifies to $e_k(\vw, \points) = \Sigma_\dataset^{1/2} = \sqrt{\smash[b]{k_{\measure\measure} - \vw\Trans \vk_{\measure\points}}}$.
The posterior mean of a conjugate Bayesian quadrature method is thus given by weights which minimise the worst-case error in the RKHS of the kernel $\kernel$ and the posterior variance equals the corresponding minimal squared worst-case error.
These facts are summarised in the following proposition.

\begin{proposition}
  \label{prop:wce}
  Let $\fGP \sim \GP(\mean, \kernel)$.
  Then $I_\measure(\fGP) \mid \dataset = \Normal(\mu_\dataset, \Sigma_\dataset)$ for
  \begin{equation*}
    \mu_\dataset = m_\measure + \vw\Trans ( \vf_\points - \vm_\points) \: \text{ and } \: \Sigma_\dataset = k_{\measure\measure} - \vw\Trans \vk_{\measure\points} = \min_{ \vv \in \R^N } e_k(\vv, X)^2,
  \end{equation*}
  where
  \begin{equation} \label{eq:KQ-weights}
    \vw = \mK_{\points\points}^{-1} \vk_{\measure\points} = \argmin_{ \vv \in \R^N } e_k(\vv, X).
  \end{equation}
\end{proposition}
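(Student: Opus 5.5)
The proof has two parts: the Gaussian posterior for the integral, and the variational characterisation of the weights.

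\emph{The posterior.} The first claim, that $I_\measure(\fGP) \mid \dataset = \Normal(\mu_\dataset, \Sigma_\dataset)$ with the stated mean and variance, is what \Cref{sec:conj-bq} derived. I would make that derivation rigorous. The key observation is that $(I_\measure(\fGP), \fGP(\vx_1), \ldots, \fGP(\vx_N))$ is jointly Gaussian. Under the standing integrability assumptions, $I_\measure(\fGP)$ is an $L^2$ limit of Riemann-type sums, or equivalently a limit of finite linear combinations of Gaussian variables, so it lies in the Gaussian Hilbert space spanned by the process.

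Its moments follow from Fubini's theorem, which is justified by $\int_\domain \sqrt{k(\vx,\vx)} \dif \measure(\vx) < \infty$:
\begin{equation*}
\Exp[I_\measure(\fGP)] = m_\measure, \qquad \Cov[I_\measure(\fGP), \fGP(\vx_i)] = k_\measure(\vx_i), \qquad \Var[I_\measure(\fGP)] = k_{\measure\measure}.
\end{equation*}
Standard Gaussian conditioning with the invertible Gram matrix $\mK_{\points\points}$ then gives mean $m_\measure + \vk_{\measure\points}\Trans \mK_{\points\points}^{-1}(\vf_\points - \vm_\points)$ and variance $k_{\measure\measure} - \vk_{\measure\points}\Trans\mK_{\points\points}^{-1}\vk_{\measure\points}$. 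Writing these with $\vw = \mK_{\points\points}^{-1}\vk_{\measure\points}$ yields the displayed forms.

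The step I expect to be the main technical obstacle is justifying measurability and joint Gaussianity of the integral. I would handle it by noting that, for every Gaussian vector $\va$, $\Exp\int_\domain \abs{\fGP(\vx)} \dif \measure(\vx) < \infty$. One then uses the characteristic function: the map $\va \mapsto \Exp \exp(i\, \va\Trans(\ldots))$ passes through the integral by dominated convergence. This can be stated at the level of rigour the survey adopts, under the tacit well-definedness assumption made after Definition~\ref{def:bq}.

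\emph{The weights.} For the variational statement I would invoke Lemma~\ref{lemma:wce}, which gives
\begin{equation*}
e_k(\vv,\points)^2 = k_{\measure\measure} - 2\vv\Trans\vk_{\measure\points} + \vv\Trans\mK_{\points\points}\vv.
\end{equation*}
This is a strictly convex quadratic in $\vv$ because $\mK_{\points\points}$ is positive-definite. Completing the square gives
\begin{equation*}
e_k(\vv,\points)^2 = (\vv - \vw)\Trans \mK_{\points\points} (\vv - \vw) + k_{\measure\measure} - \vw\Trans\vk_{\measure\points}.
\end{equation*}
From this identity the unique minimiser is $\vw$, and the minimal value is $k_{\measure\measure} - \vw\Trans\vk_{\measure\points} = \Sigma_\dataset$.

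Minimising $e_k$ and minimising $e_k^2$ have the same argmin because $e_k \geq 0$. This establishes~\eqref{eq:KQ-weights} and the identification $\Sigma_\dataset = \min_{\vv} e_k(\vv,\points)^2$, which completes the proof.
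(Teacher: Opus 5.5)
Your proposal is correct and follows essentially the paper's route. The paper gets \eqref{eq:bq-mean}--\eqref{eq:bq-var} by integrating the conditioned process and then minimises the quadratic from Lemma~\ref{lemma:wce}; your conditioning of the jointly Gaussian vector $(I_\measure(\fGP), \fGP(\vx_1), \ldots, \fGP(\vx_N))$ is the same computation, and your completed square $(\vv-\vw)\Trans\mK_{\points\points}(\vv-\vw) + k_{\measure\measure} - \vw\Trans\vk_{\measure\points}$ just makes explicit the step the paper calls ``easy to see''. Your measurability and joint-Gaussianity discussion goes beyond the paper, which simply assumes well-definedness; the Gaussian-Hilbert-space argument is the clean one, whereas the characteristic-function sketch is loosely phrased, though not needed at the paper's level of rigour.
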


If the prior mean $\mean$ is zero and the integrand $\f$ is an element of $\rkhs(\kernel)$, we have the convenient error estimate
\begin{equation}
  \label{eq:rkhs-error-bound}
  \abs[0]{ I_\measure(\f) - \mu_\dataset } = \langle f , \kernel_\measure - \textstyle\sum_{i=1}^N w_i \kernel(\cdot, \vx_i) \rangle_{\rkhs(\kernel)} \leq \norm[0]{\f}_{\rkhs(k)} \Sigma_\dataset^{1/2},
\end{equation}
which follows from $\langle \f, \kernel_\measure \rangle_{\rkhs(\kernel)} = I_\measure(\f)$ (see the proof of Lemma~\ref{lemma:wce}, Section~\ref{sec:rkhs-appendix}, or~\citealp[Lem.\@~3.1]{Muandet2017}) and the Cauchy--Schwarz inequality.
This bound allows one to use the magnitude of $\Sigma_\dataset$ to assess the quality of $\mu_\dataset$ as a point estimator of the integral; its use is the reason behind the assumption $f \in \mathcal{H}(k)$ in several theorems in Section~\ref{sec:guarantees}.
If $f \notin \mathcal{H}(k)$, the bound~\eqref{eq:rkhs-error-bound} is not valid because the norm $\norm[0]{f}_{\mathcal{H}(k)}$ does not exist.

The construction of general worst-case optimal quadrature rules in RKHSs goes back at least to the work of Larkin in the 1970s~\citep{Larkin1970, Larkin1972, Larkin1974}, while the related average-case framework was developed by \citet{Suldin1959, Suldin1960}.\footnote{\citet[Sec.\@~2.2]{OatesSullivan2019} provide a historical account on the pioneering role played by Suldin and Larkin in the development of probabilistic numerical computation. We note that earlier constructions of optimal quadrature rules that do not utilise RKHSs may be found in \citet{Sard1949} and \citet{Nikolsky1950}.}
For early analyses, mostly in RKHSs of analytic functions, of these quadratures we refer to \citet{Richter1970, RichterDyn1971a, RichterDyn1971b}.
See also \citet[Sec.\@~2.3 in Ch.\@~5]{Traub1988}, \citet{Oettershagen2017,Ehler2019, Gavrilov1998, Gavrilov2007}.
A different tradition may be found in the quasi-Monte Carlo literature~\citep{DickKuoSloan2013}, where spaces of periodic functions are often considered.
The optimal weights in these spaces can be occasionally computed in closed form~\citep{Kaarnioja2025}.
Certain quasi-Monte carlo rules can be interpreted as Bayesian quadratures~\citep{Rathinavel2019, Jagadeeswaran2022, Sorokin2025}.

Whenever making use of the RKHS framework one should keep in mind that the samples from a Gaussian process are \emph{not} in the RKHS of its kernel.
Specifically, under fairly non-restrictive assumptions one can show that if $\fGP \sim \GP(\mean, \kernel)$, then $\Pr[ \fGP \in \mathcal{H}(\kernel)] = 0$; see~\citet{Driscoll1973} and \citet[Sec.\@~4]{Kanagawa2018}.
That is, the RKHS is too small to contain the samples.
One cannot therefore simply substitute the process $\fGP$ for the RKHS element $\f$ in results such as the error estimate~\eqref{eq:rkhs-error-bound}.

\subsection{Kernel interpolation}
\label{sec:kernel-interpolation}

A different tradition may be found in the literature on \emph{kernel interpolation} or, when the kernel is isotropic, \emph{radial basis function interpolation}~\citep{Wendland2005}.
Given a positive-definite kernel $\kernel$ and evaluations of $\f$ at a collection $\points = \{\vx_1, \ldots, \vx_N\}$ of pairwise distinct points, the \emph{kernel interpolant} $s_{\f, \points}$ is the unique function in the span of the \emph{kernel translates} $\{\kernel(\cdot, \vx_i)\}_{i=1}^N$ that interpolates $\f$ at $\points$, in the sense that $s_{\f, \points}(\vx_i) = \f(\vx_i)$ for every $i = 1, \ldots, N$.
From the requirement that the interpolant be in the span of the translates it is easy to deduce that $s_{\f, \points}(\vx) = \vk_\points(\vx)\Trans \mK_{\points\points}^{-1} \vf_\points$.
We see that the kernel interpolant coincides with the GP conditional mean~\eqref{eq:gp-posterior-mean} under a zero-mean prior when $\vy = \vf_\points$.
The kernel interpolant may be alternatively defined as the minimum-norm interpolant in the RKHS of $\kernel$:
\begin{equation} \label{eq:minimum-norm-interpolation}
  s_{\f, \points} = \argmin_{s \in \rkhs(\kernel)} \Set[]{ \norm[0]{s}_{\rkhs(\kernel)} }{ s(\vx_i) = \f(\vx_i) \text{ for } i = 1, \ldots, N}.
\end{equation}
See, for example, Theorem~3.5 in \citet{Kanagawa2018}.
Due to the minimum-norm property kernel interpolants are often called \emph{spline interpolants}~(\citealt{Traub1988}, Sec.\@~5.7; \citealt{Wahba1990}).
As the kernel interpolant $s_{\f, \points}$ depends on the value of $\f$ at a finite point set, it is well-defined whether $f$ is in $\mathcal{H}(\kernel)$ or not.
However, the useful inequality \smash{$\norm[0]{s_{\f, \points}}_{\mathcal{H}(\kernel)} \leq \norm[0]{\f}_{\mathcal{H}(\kernel)}$}, a consequence of the minimum-norm interpolation property, holds only if $\f \in \mathcal{H}(\kernel)$.

Integrating the kernel interpolant yields the integral approximation $I_\measure(\f) \approx I_\measure(s_{\f, \points}) = \vk_{\measure\points}\Trans \mK_{\points\points}^{-1} \vf_\points$, which is equal to the conjugate BQ integral mean~\eqref{eq:bq-mean} in the mean-zero case.
\citet{Bezhaev1991} appears to have been the first to suggest this approach.
More recent work on integrated kernel interpolants includes \citet{SommarivaWomersley2005, SommarivaVianello2006, SommarivaVianello2006b, Punzi2008, Siraj2012, Fuselier2014, ReegerFornberg2015, ReegerFornbergWatts2016, ReegerFornberg2018, SommarivaVianello2021, GlaubitzReeger2021}.
\citet[Sec.\@~6]{Minka2000} discusses the interpretation of spline-based quadrature as BQ.
The connection to kernel interpolation is important because many estimates for the integration error $\lvert I_\measure(\f) - \mu_\dataset \rvert$ of Bayesian quadrature are derived from $L^p$-error estimates for kernel interpolation~\citep{Kanagawa2020,Wynne2021}.

\subsection{Kernel quadrature}

Quadrature rules with the weights~\eqref{eq:KQ-weights} that minimise the worst-case error in an RKHS are often called \emph{kernel quadratures}~\citep{Briol2017, KarvonenSarkka2018, Karvonen2019d, Chen2025}.
However, this nomenclature is more commonly (and somewhat ambiguously) applied to any quadrature rule that seeks to achieve small worst-case error in an RKHS and does this by making use of the kernel or objects derived from it~\citep{Bach2017, Epperly2023}.\footnote{Kernel quadrature is occasionally used to refer \emph{any} quadrature rule designed so as to minimize its worst-case error in an RKHS~\citep[e.g.,][Sec.\@~3.4.3]{Hayakawa2023-thesis}. Such unconditional usage of the term is somewhat ambitious: Every Hilbert space of functions that one can construct is an RKHS because such a Hilbert space is an RKHS if and only if the point evaluation functionals $f \mapsto f(\vx)$ are bounded, the existence of unbounded functionals depends on the axiom of choice~\citep{Wright1973, AlpayMills2003}. For example, under this definition quasi-Monte Carlo integration would fall under the umbrella of kernel quadrature~\citep[see in particular][Sec.\@~3]{DickKuoSloan2013}.}
For example, \citet{Belhadji2019, Belhadji2020, Belhadji2020-thesis, Belhadji2021, Hayakawa2021, Hayakawa2022-hyper, Hayakawa2023, Hayakawa2023-thesis, Roustant2022} use Mercer expansions of $\kernel$ to construct quadrature rules.
See also~\citet{Tanaka2021,TsujiPokutta2022,Tsuji2022}.

\subsection{Bayesian quadrature in disguise} \label{sec:in-disguise}

Many existing and seemingly non-Bayesian integration methods are, or can be seen as, Bayesian quadrature methods.\footnote{Connections exist beyond mere numerical integration: \citet{Snell2025} connect Bayesian quadrature and conformal prediction.}
It is a well-known and established fact that the integral mean of conjugate Bayesian quadrature is equivalent to the classical trapezoidal rule if $k(x, x') = \min\{x, x'\}$ is the covariance function of the Brownian motion~\citep{DucJacquet1973,Diaconis1988}.
Recall that the trapezoidal rule is the integral approximation
\begin{equation*}
  \int_0^1 f(x) \dif x \approx \sum_{i=1}^N \frac{f(x_i) + f(x_{i-1})}{2} (x_i - x_{i-1}),
\end{equation*}
where $0 = x_0 < x_1 < \cdots < x_{N-1} < x_N = 1$.
The geometric interpretation is that the integral is approximated with $N$ trapezoids with base lengths $x_i - x_{i-1}$.
\citet[Sec.\@~3 in Ch.\@~II]{Ritter2000} is an accessible source for computations that constitute the proof of the following proposition.

\begin{proposition}
  Let $\measure$ be uniform on $\domain = [0, 1]$ and $\fGP \sim \GP(0, k)$ with $k(x, x') = \min\{x, x'\}$.
  If $0 < x_1 < \cdots < x_{N-1} < x_N  = 1$ and $f(0) = 0$, then 
  \begin{equation*}
    \mu_\dataset = \sum_{i=1}^N \frac{f(x_i) + f(x_{i-1})}{2} (x_i - x_{i-1}) \: \text{ and } \: \Sigma_\dataset = \frac{1}{12} \sum_{i=1}^n (x_i - x_{i-1})^3,
  \end{equation*}
  where $x_0 = 0$.
\end{proposition}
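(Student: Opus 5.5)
The plan is to avoid inverting $\mK_{\points\points}$ directly and instead use the kernel interpolation characterisation from \Cref{sec:kernel-interpolation}. Under the zero-mean prior, $\mu_\dataset = I_\measure(s_{\f,\points})$, where $s_{\f,\points}$ is the unique function in $\operatorname{span}\{\min\{\cdot, x_i\}\}_{i=1}^N$ interpolating $\f$ at $\points$. Every such translate is continuous, piecewise linear with a single kink at $x_i$, vanishes at $0$, and is constant on $[x_i, 1]$. Hence the span consists exactly of the continuous functions on $[0,1]$ that vanish at $0$, are linear on each $[x_{i-1}, x_i]$ with $x_0 = 0$, and are constant on $[x_N,1]$; the last interval is empty since $x_N = 1$. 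This space has dimension $N$, matching the $N$ independent translates.

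The interpolant $s_{\f,\points}$ is therefore the piecewise linear function through $(x_0, 0) = (0, \f(0))$ and $(x_i, \f(x_i))$, $i=1,\dots,N$. Here the assumption $\f(0)=0$ is used. Integrating this function over $[0,1]$ against the uniform $\measure$ gives the trapezoidal sum exactly, which establishes the formula for $\mu_\dataset$.

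For the variance I will use $\Sigma_\dataset = \int\int \kernel_\dataset(x,x')\dif x \dif x'$ from~\eqref{eq:bq-var}. The integrated posterior covariance equals $\Var[I_\measure(\fGP) \mid \dataset] = \Exp\bigl[(I_\measure(\fGP) - I_\measure(s_{\fGP,\points}))^2\bigr]$, since the conditional mean is the interpolant of the observed values. By the Markov property of Brownian motion, conditioning on the values at $x_0=0,x_1,\dots,x_N$ turns the process on each $[x_{i-1},x_i]$ into an independent Brownian bridge. The residual $\fGP - s_{\fGP,\points}$ on that interval is a Brownian bridge of length $h_i = x_i - x_{i-1}$, with covariance $\min\{s,t\} - st/h_i$ in local coordinates. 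Integrating this covariance over $[0,h_i]^2$ gives $h_i^3/3 - h_i^4/(4h_i) = h_i^3/12$. Independence across intervals then gives $\Sigma_\dataset = \frac{1}{12}\sum_i h_i^3$.

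The main obstacle is the variance step, specifically justifying independence and the bridge covariance rigorously. If I want to avoid appealing to the Markov property, I would instead compute directly from~\eqref{eq:gp-posterior-covariance}. For $x, x' \in [x_{i-1},x_i]$, the vector $\vk_\points(x)$ is an affine combination of $\vk_\points(x_{i-1})$ and $\vk_\points(x_i)$. Using $\mK_{\points\points}^{-1}\vk_\points(x_j) = e_j$ reduces $\kernel_\dataset(x,x')$ to the bridge formula, and for $x, x'$ in different intervals it gives $\kernel_\dataset(x,x') = 0$.

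The mean computation is routine once the span has been identified. An alternative check for the variance is the formula $\Sigma_\dataset = k_{\measure\measure} - \vw\Trans\vk_{\measure\points}$, using $k_{\measure\measure} = 1/3$ and the trapezoidal weights $w_i = (h_i + h_{i+1})/2$, with $h_{N+1}=0$, together with $k_\measure(x) = x - x^2/2$. The resulting sum telescopes to $\frac{1}{12}\sum_i h_i^3$, but it is messier, so I would use it only to confirm the result.
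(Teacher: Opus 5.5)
Your proof is correct and complete. The paper gives no argument of its own for this proposition; it only points to the computations in Ritter's monograph, so your proposal supplies a self-contained proof rather than departing from one.

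For the mean, identifying $\operatorname{span}\{\min\{\cdot, x_i\}\}_{i=1}^N$ with the continuous piecewise linear functions on the partition that vanish at $0$ is a clean way to read the mean off the kernel-interpolation characterisation. Both of your variance arguments are sound. In the direct computation, note that $x_0 = 0 \notin \points$. On the first subinterval you therefore use $\vk_\points(x_0) = \mathbf{0}$, so $\mK_{\points\points}^{-1}\vk_\points(x_0) = \mathbf{0}$ rather than a unit vector. This changes nothing in the result.

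For comparison, a route inside the paper's worst-case-error framework gets the weights and the variance together from Proposition~\ref{prop:wce}. The RKHS of $\min\{x,x'\}$ is $\{g : g(0) = 0,\ g' \in L^2([0,1])\}$ with $\lVert g \rVert_{\rkhs(\kernel)}^2 = \int_0^1 g'(t)^2 \,\mathrm{d}t$. Since $\kernel_\measure'(t) = 1 - t$ and $\partial_t \min\{t, x_i\} = \mathbf{1}[t < x_i]$, Lemma~\ref{lemma:wce} gives
\[
  e_\kernel(\vv, \points)^2 = \int_0^1 \Bigl( 1 - t - \sum_{i \,:\, x_i > t} v_i \Bigr)^2 \mathrm{d}t .
\]
On each $(x_{j-1}, x_j)$ the inner sum is the constant $c_j = \sum_{i \geq j} v_i$. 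The map $\vv \mapsto (c_1, \ldots, c_N)$ is a bijection, so the minimisation decouples interval by interval. Each $c_j$ must be the average of $1 - t$ over its subinterval. This yields the trapezoidal weights $v_j = c_j - c_{j+1}$ (with $c_{N+1} = 0$) and the minimal value $\sum_j h_j^3/12$, where $h_j = x_j - x_{j-1}$ as in your notation.

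That route is shorter, but it relies on the Cameron--Martin description of $\rkhs(\kernel)$, which the paper does not provide. It also says nothing about the posterior process itself. Your route uses only linear algebra and facts already in the paper. It also gives the stronger structural statement that the posterior covariance is that of independent Brownian bridges on the subintervals.
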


We refer to \citet{Naslidnyk2025} for more explicit computations of this kind for the Brownian motion covariance.
Higher-order versions of the trapezoidal rule and their connection to Bayesian quadrature with the $p$ times integrated Brownian motion covariance
\begin{equation*}
  k(x, y) = \int_0^1 \frac{(x - t)_+^m (y - t)_+^m}{(m!)^2} \dif t,
\end{equation*}
where $(x)_+ = \max\{0, x\}$, are discussed in \citet{Schober2014} and \citet[Sec.\@~5.5]{Karvonen2019a}.
See also \citet{Wahba1990}.

 Many classical quadrature rules can be cast as conjugate Bayesian quadrature if the covariance function is selected appropriately~\citep{Schober2014,KarvonenSarkka2017,Karvonen2018, Belhadji2021}. 
 See also \citet[Sec.\@~11.4]{pnbook22}.
 Recall that a quadrature rule defined by weights $\vv$ and nodes $X$ is the integral approximation $\vv\Trans\vf_X = \sum_{i=1}^N v_i f(\vx_i) \approx \int_\domain f(\vx) \dif \measure(\vx)$.
  If $m \equiv 0$, the integral mean $\mu_\dataset$ is a quadrature rule with weights $\vv = \vw = \mK_{\points\points}^{-1} \vk_{\measure\points}$.

\begin{proposition} \label{prop:classical}
  Let $D \subseteq \R$ and $\fGP \sim \GP(0, k)$ with covariance function $\kernel(x, y) = \sum_{p=0}^{N-1} a_p x^p y^p$, where $a_0, \ldots, a_{N-1} \neq 0$.
  Then $\mu_\dataset$ is the unique quadrature rule that integrates all polynomials of degree at most $N-1$ exactly. 
  That is, $\mu_\dataset = I_P(f)$ whenever $f$ is a polynomial of degree at most $N-1$.
  Moreover, $\Sigma_\dataset = 0$.
\end{proposition}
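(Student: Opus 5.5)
The plan is to show that the integral mean integrates every basis monomial exactly, and then to get the variance from the worst-case-error identity in Proposition~\ref{prop:wce}.

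Implicit in the statement is the standing assumption of the paper that $\mK_{\points\points}$ is invertible. This requires the $N$ nodes to be pairwise distinct. With $\mPhi = (x_i^p)_{i=1,\ldots,N;\,p=0,\ldots,N-1}$ the $N\times N$ Vandermonde matrix and $\mA = \operatorname{diag}(a_0,\ldots,a_{N-1})$, the Gram matrix factors as $\mK_{\points\points} = \mPhi \mA \mPhi\Trans$. This is invertible because $\mPhi$ is a nonsingular Vandermonde matrix and every $a_p \neq 0$.

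\textbf{Exactness.} I would use the kernel interpolation viewpoint of Section~\ref{sec:kernel-interpolation}. Every kernel translate $k(\cdot, x_i) = \sum_p a_p x_i^p (\cdot)^p$ is a polynomial of degree at most $N-1$, and there are $N$ of them. Their coefficient matrix is $\mPhi\mA$, which is nonsingular, so the translates span exactly the space $\Pi_{N-1}$ of polynomials of degree at most $N-1$. The kernel interpolant $s_{f,\points}$ is the unique element of this span interpolating $f$ at the $N$ distinct nodes. If $f \in \Pi_{N-1}$, then $f$ itself is such an element, so $s_{f,\points} = f$. Since $m \equiv 0$, the mean satisfies $\mu_\dataset = I_\measure(s_{f,\points})$, and therefore $\mu_\dataset = I_\measure(f)$.

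\textbf{Uniqueness.} Suppose a rule $\vv\Trans \vf_\points$ is exact on $\Pi_{N-1}$. Testing it on the monomials $1, x, \ldots, x^{N-1}$ gives $\mPhi\Trans \vv = (I_\measure(x^p))_{p=0}^{N-1}$. Because $\mPhi\Trans$ is invertible, this system determines $\vv$ uniquely. Hence $\vv = \vw$.

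\textbf{Variance.} By Proposition~\ref{prop:wce}, $\Sigma_\dataset = e_k(\vw, \points)^2$. By Lemma~\ref{lemma:wce}, this equals the squared RKHS norm of $k_\measure - \sum_i w_i k(\cdot, x_i)$. The mean embedding is $k_\measure(x) = \sum_p a_p I_\measure(y^p)\, x^p$, which lies in $\Pi_{N-1}$. The RKHS is also $\Pi_{N-1}$, since it is spanned by the translates, and in this finite-dimensional setting every element of the RKHS belongs to that span.

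It then suffices to observe that $\sum_i w_i k(\cdot, x_i) = k_\measure$. One way to see this is directly: $\sum_i w_i k(\cdot, x_i) = \vk_\points(\cdot)\Trans \mK_{\points\points}^{-1} \vk_{\measure\points}$ is the kernel interpolant of $k_\measure$, and $k_\measure \in \Pi_{N-1}$ is reproduced exactly by the exactness step. An alternative is the sign-free expression $\Sigma_\dataset = k_{\measure\measure} - \vw\Trans \vk_{\measure\points} = I_\measure(k_\measure) - I_\measure(s_{k_\measure,\points}) = 0$. Either way, $\Sigma_\dataset = 0$.

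\textbf{Main obstacle.} The only delicate point is the identification of the span of the translates with $\Pi_{N-1}$. It needs the nodes to be distinct and all $a_p \neq 0$. If some $a_p$ are negative, $k$ need not be positive-semidefinite, but the formulas still make sense because $\mK_{\points\points}$ is invertible. For that reason I would phrase every step via the interpolation and linear-algebra arguments above rather than relying on RKHS theory wherever possible.
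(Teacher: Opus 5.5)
Your proof is correct, but it is organised differently from the paper's.

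\textbf{The paper's route.} It works purely by substitution. With $\mK_{\points\points} = \mV_X \mA \mV_X\Trans$, $\vk_{\measure\points} = \mV_X\mA\vp$ and $k_{\measure\measure} = \vp\Trans\mA\vp$, the factors cancel to give the explicit weights $\vw = \mV_X^{-\intercal}\vp$. Hence $\Sigma_\dataset = \vp\Trans\mA\vp - \vp\Trans\mA\vp = 0$. Exactness and uniqueness are then read off at once from the moment equations $\mV_X\Trans\vv = \vp$.

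\textbf{Your route.} You never invert $\mK_{\points\points}$. Exactness comes from the fact that the translates span $\Pi_{N-1}$, so the kernel interpolant reproduces every polynomial of degree at most $N-1$. Uniqueness is a separate Vandermonde argument. The vanishing variance is the same reproduction applied to $k_\measure \in \Pi_{N-1}$.

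\textbf{Comparison.} The paper's route is shorter and gives the weights in closed form. Yours explains the mechanism behind the paper's cancellation: the mean is exact on the span of the translates, and the variance vanishes precisely because $k_\measure$ lies in that span.

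\textbf{The two variance arguments.} Your second, sign-free argument $k_{\measure\measure} - \vw\Trans\vk_{\measure\points} = I_\measure(k_\measure) - I_\measure(s_{k_\measure,\points}) = 0$ needs only invertibility of $\mK_{\points\points}$, not $a_p > 0$. In this it matches the paper's algebra. Your first route, through Lemma~\ref{lemma:wce} and Proposition~\ref{prop:wce}, does require $k$ to be a genuine covariance, as you note. Even there, identifying the RKHS with $\Pi_{N-1}$ is unnecessary: $k_\measure - \sum_i w_i k(\cdot, x_i)$ is already the zero function, so its norm is zero whatever the space.
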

\begin{proof}
  The kernel Gram matrix can be written as 
  \begin{equation*}
    \mK_{XX} = \mV_X \mA \mV_X\Trans ,
  \end{equation*}
  where $\mA = \operatorname{diag}(a_0, \ldots, a_{N-1})$ is a diagonal matrix and $\mV_X$ the invertible $N \times N$ Vandermonde matrix with elements $(\mV_X)_{ij} = x_i^{j-1}$.
  Additionally, $\vk_{\measure\points} = \mV_X \mA \vp$ and $k_{\measure\measure} = \vp\Trans \mA \vp$, where $\vp = (\int_\domain x^i \dif \measure(x))_{i=0}^{N-1}$.
  By substituting these expressions in~\eqref{eq:bq-mean} and~\eqref{eq:bq-var} we obtain
  \begin{equation*}
    \mu_\dataset = (\mV_X^{-\intercal} \vp)\Trans \vf_X \quad \text{ and } \quad \Sigma_\dataset = 0.
  \end{equation*}
  By the definitions of $\mV_X$ and $\vp$, the weights $\vv = \mV_X^{-\intercal} \vp$ are the unique quadrature weights such that $\sum_{i=1}^N v_i x_i^p = \int_\domain x^p \dif \measure(x)$ for every $p \in \{0, \ldots, N-1\}$.
  It follows that $\mu_\dataset$ is the unique quadrature rule that integrates all polynomials of degree at most $N-1$ exactly.
\end{proof}

By selecting the nodes correctly one recovers from Proposition~\ref{prop:classical} every Gaussian quadrature rule (nodes are roots of an orthogonal polynomial) and other polynomial quadrature rules, such as the Newton--Cotes rule (nodes are equispaced on an interval).
Proposition~\ref{prop:classical} does not provide means to attach uncertainty to classical quadratures because the integral variance is always zero.
Another way to recover classical quadratures is by considering the length-scale limit $\ell \to \infty$~\citep{OHagan1991,Minka2000,Sarkka2016,Karvonen2020a}.
See also \citet[Sec.\@~5.4]{Karvonen2019a}; \citet[Thm.\@~2.6]{KarvonenOates2023}; and \citet{Barthelme2023}.
This too results in zero integral variance.

Estimation of the prior mean (more on this in Section~\ref{sec:prior-mean-estimation}) via a flat prior can be used to ``force'' the mean $\mu_\dataset$ to assume almost any desired form~\citep[Thm.\@~2.10]{Karvonen2018} while keeping the variance $\Sigma_\dataset$ positive.

\begin{proposition} \label{prop:bayes-sard}
  Let $\domain \subseteq \R^d$ and suppose that $\varphi_1, \ldots, \varphi_N$ are any functions such that the matrix $\mPhi_X = (\varphi_i(\vx_j))_{i,j=1}^N$ is invertible.
  If $\fGP \mid \vgamma \sim \GP(m_\vgamma, k)$ for $m_\vgamma(\vx) = \sum_{i=1}^q \gamma_i \varphi_i(\vx)$ and $\vgamma \sim \mathrm{N}(\mathbf{0}, \mSigma)$, then
  \begin{equation*}
    \mu_\dataset = \vv\Trans\vf_X \quad \text{ and } \quad \Sigma_\dataset = e_k(\vv, X)^2
  \end{equation*}
  at the limit $\mSigma^{-1} \to \mathbf{0}$, where $\vv = \mPhi_X^{-1} \va$ and $\va = (I_P(\varphi_i))_{i=1}^N$.
\end{proposition}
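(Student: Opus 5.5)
Integrating out $\vgamma$ reduces the hierarchical model to an ordinary Gaussian process, so the conjugate formulas~\eqref{eq:bq-mean} and~\eqref{eq:bq-var} apply to a modified kernel. I then take the limit $\mSigma^{-1} \to \mathbf{0}$ by matrix algebra. (I read $q = N$, as the square invertible matrix $\mPhi_X$ requires.)

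First, marginally $\fGP \sim \GP(0, \tilde k)$ with
\begin{equation*}
  \tilde k(\vx, \vx') = k(\vx, \vx') + \vphi(\vx)\Trans \mSigma \vphi(\vx'),
\end{equation*}
where $\vphi = (\varphi_1, \ldots, \varphi_N)$. Hence
\begin{equation*}
  \tilde\mK_{XX} = \mK_{XX} + \mPhi_X\Trans \mSigma \mPhi_X, \qquad
  \tilde\vk_{\measure X} = \vk_{\measure X} + \mPhi_X\Trans \mSigma \va, \qquad
  \tilde k_{\measure\measure} = k_{\measure\measure} + \va\Trans \mSigma \va .
\end{equation*}
With zero mean, $\mu_\dataset = \tilde\vw\Trans \vf_X$ and $\Sigma_\dataset = \tilde k_{\measure\measure} - \tilde\vw\Trans \tilde\vk_{\measure X}$, where $\tilde\vw = \tilde\mK_{XX}^{-1}\tilde\vk_{\measure X}$.

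Next, since $\mPhi_X$ is invertible, I change variables. Write $\mA = \mPhi_X^{-\intercal}\mK_{XX}\mPhi_X^{-1}$, so that
\begin{equation*}
  \tilde\mK_{XX} = \mPhi_X\Trans(\mA + \mSigma)\mPhi_X .
\end{equation*}
Then
\begin{equation*}
  \tilde\vw = \mPhi_X^{-1}(\mA+\mSigma)^{-1}\bigl(\mPhi_X^{-\intercal}\vk_{\measure X} + \mSigma\va\bigr).
\end{equation*}
Set $\vv = \mPhi_X^{-1}\va$, so that $\mPhi_X\vv = \va$, and $\vb = \mPhi_X^{-\intercal}\vk_{\measure X}$. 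Writing $\mSigma\va = (\mA+\mSigma)\va - \mA\va$ gives
\begin{equation*}
  \tilde\vw = \vv + \mPhi_X^{-1}(\mA+\mSigma)^{-1}(\vb - \mA\va).
\end{equation*}
The identity $(\mA+\mSigma)^{-1} = \mSigma^{-1}(\mA\mSigma^{-1}+\mI)^{-1}$ shows that $(\mA+\mSigma)^{-1} \to \mathbf{0}$ as $\mSigma^{-1}\to\mathbf{0}$. Therefore $\tilde\vw \to \vv$ and $\mu_\dataset \to \vv\Trans\vf_X$.

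The variance is the main obstacle. The term $\va\Trans\mSigma\va$ diverges, so I must show that it cancels exactly rather than pass to the limit naively. Expand
\begin{equation*}
  \tilde\vw\Trans\tilde\vk_{\measure X} = \tilde\vw\Trans\vk_{\measure X} + \tilde\vw\Trans\mPhi_X\Trans\mSigma\va .
\end{equation*}
Put $\vu = (\mA+\mSigma)^{-1}(\vb-\mA\va)$, so that $\mPhi_X\tilde\vw = \va + \vu$. The divergent piece becomes $(\va+\vu)\Trans\mSigma\va$, and its $\va\Trans\mSigma\va$ part cancels $\tilde k_{\measure\measure}$'s divergent term. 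The remainder $\vu\Trans\mSigma\va$ I evaluate by again writing $\mSigma\va = (\mA+\mSigma)\va - \mA\va$:
\begin{equation*}
  \vu\Trans\mSigma\va = (\vb-\mA\va)\Trans\va - \vu\Trans\mA\va \;\to\; (\vb-\mA\va)\Trans\va ,
\end{equation*}
since $\vu\to\mathbf{0}$ by the previous step. The other piece satisfies $\tilde\vw\Trans\vk_{\measure X}\to\vv\Trans\vk_{\measure X}$. Using $\vb\Trans\va = \vk_{\measure X}\Trans\vv$ and $\va\Trans\mA\va = \vv\Trans\mK_{XX}\vv$, collecting terms gives
\begin{equation*}
  \Sigma_\dataset \to k_{\measure\measure} - 2\vv\Trans\vk_{\measure X} + \vv\Trans\mK_{XX}\vv ,
\end{equation*}
which equals $e_k(\vv,X)^2$ by Lemma~\ref{lemma:wce}. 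The care needed is in checking that every product of a vanishing term with the divergent $\mSigma$ reduces, as above, to the bounded expression $(\vb - \mA\va)\Trans\va$ plus a vanishing term.
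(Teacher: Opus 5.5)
Your computation is correct. The marginal kernel $\tilde\kernel$, the factorisation $\tilde\mK_{\points\points} = \mPhi_X\Trans(\mA+\mSigma)\mPhi_X$ and the splitting $\tilde\vw = \vv + \mPhi_X^{-1}\vu$ all check out. So does your treatment of the only dangerous product $\vu\Trans\mSigma\va$, which silently uses the symmetry of $\mA+\mSigma$. The limit is then $e_k(\vv,\points)^2$ by Lemma~\ref{lemma:wce}.

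The paper gives no proof of its own and defers to \citet[Thm.~2.10]{Karvonen2018]}. There the statement is the case $q = N$ of the general Bayes--Sard posterior. For any $q \le N$ with $\mPhi_X$ of full rank, the flat limit of the hierarchical (universal-kriging) formulas gives weights solving the saddle-point system $\mK_{\points\points}\vw + \mPhi_X\Trans\bm{c} = \vk_{\measure\points}$, $\mPhi_X\vw = \va$. The variance is $\kernel_{\measure\measure} - \vw\Trans\vk_{\measure\points} - \bm{c}\Trans\va$. When $\mPhi_X$ is square, the constraint alone forces $\vw = \vv$.

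Your route is self-contained and elementary. Invertibility of $\mPhi_X$ lets you absorb the prior covariance into $(\mA+\mSigma)^{-1}$, which visibly vanishes, so you need no block inversion or Schur complements. The cost is that it does not extend to $q < N$. It also hides the structural reason for the answer: the limiting weights minimise $e_k(\cdot,\points)$ subject to exactness on $\mathrm{span}\{\varphi_i\}$.

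The two routes meet in your bounded remainder. Indeed $\bm{b} - \mA\va = \mPhi_X^{-\intercal}(\vk_{\measure\points} - \mK_{\points\points}\vv)$ is exactly the multiplier $\bm{c}$. So the cancellation you perform by hand is what the saddle-point system does automatically.

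You could also bypass the divergent bookkeeping entirely. For every $\vw$ one has $\tilde\kernel_{\measure\measure} - 2\vw\Trans\tilde\vk_{\measure\points} + \vw\Trans\tilde\mK_{\points\points}\vw = e_k(\vw,\points)^2 + (\va - \mPhi_X\vw)\Trans\mSigma(\va - \mPhi_X\vw)$. The variance $\Sigma_\dataset$ is the minimum of this over $\vw$, attained at $\tilde\vw$, by Proposition~\ref{prop:wce} applied to $\tilde\kernel$. Hence $e_k(\tilde\vw,\points)^2 \le \Sigma_\dataset \le e_k(\vv,\points)^2$. The penalty term is bounded by $e_k(\vv,\points)^2$, which forces $\mPhi_X\tilde\vw \to \va$ as $\mSigma^{-1} \to \mathbf{0}$. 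So $\tilde\vw \to \vv$, and both bounds converge to $e_k(\vv,\points)^2$.
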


An expression for the worst-case error $e_k(\vv, X)$ is given in Lemma~\ref{lemma:wce}.
This construction has an air of artificiality as the variance depends on the covariance function but the mean does not.

\section{Fundamental theory for Bayesian quadrature} \label{sec:basic-theory}

With the results and tools introduced above it is possible to establish some fundamental theory for conjugate Bayesian quadrature that holds regardless of the properties the covariance kernel has.
The following proposition consists of the intuitively obvious fact that new data decrease the integral variance.

\begin{proposition} \label{prop:monotone-var}
  Let $\dataset$ and $\dataset'$ be datasets such that $\dataset \subseteq \dataset'$.
  Then the integral variance in~\eqref{eq:bq-var} satisfies $\Sigma_{\dataset'} \leq \Sigma_{\dataset}$.
\end{proposition}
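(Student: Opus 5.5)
The plan is to use the variational characterisation in Proposition~\ref{prop:wce}: the integral variance is the minimal squared worst-case error over all weight vectors supported on the nodes. Write $\points = \{\vx_1, \ldots, \vx_N\}$ for the nodes of $\dataset$ and $\points' = \{\vx_1, \ldots, \vx_{N'}\} \supseteq \points$ for the nodes of $\dataset'$, ordered so that the first $N$ nodes of $\points'$ are those of $\points$. By Proposition~\ref{prop:wce},
\begin{equation*}
  \Sigma_\dataset = \min_{\vv \in \R^N} e_k(\vv, \points)^2 \quad \text{ and } \quad \Sigma_{\dataset'} = \min_{\vv' \in \R^{N'}} e_k(\vv', \points')^2 .
\end{equation*}

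The key step is an embedding of weight vectors. Given any $\vv \in \R^N$, let $\vv' = (\vv, \mathbf{0}) \in \R^{N'}$, which pads $\vv$ with zeros on the additional nodes. Then $\sum_{i=1}^{N'} v_i' \kernel(\cdot, \vx_i) = \sum_{i=1}^N v_i \kernel(\cdot, \vx_i)$, and the first expression in Lemma~\ref{lemma:wce} gives $e_k(\vv', \points') = e_k(\vv, \points)$. One can also see this directly from the definition~\eqref{eq:wce-definition}, because the zero weights contribute nothing to the quadrature sum. Take $\vv = \vw$, the minimiser for $\dataset$. Then
\begin{equation*}
  \Sigma_{\dataset'} \leq e_k((\vw, \mathbf{0}), \points')^2 = e_k(\vw, \points)^2 = \Sigma_\dataset ,
\end{equation*}
which is the claim.

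There is no real obstacle here. The only points to check are the bookkeeping and the standing assumptions. If $\dataset'$ repeats a node, the tacit invertibility of $\mK_{\points'\points'}$ fails. In that case the minimum in Proposition~\ref{prop:wce} is still the infimum of a nonnegative quadratic form, and the same padding argument applies, so I would just note that the standing assumption excludes it. The integrability hypotheses of Lemma~\ref{lemma:wce} are the standing ones.

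An alternative route, which I would mention only as a remark, is purely probabilistic. By the law of total variance, conditioning the Gaussian vector $(I_\measure(\fGP), \fGP(\points'))$ on more coordinates cannot increase the conditional variance. The reason is that the Schur complement of a larger block is dominated by that of a smaller one.
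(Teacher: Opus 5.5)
Your proof is correct and is essentially the paper's own argument. The paper also pads the optimal weights $\vw$ with zeros to obtain $\vw_0 \in \R^{N'}$ and then applies the variational characterisation in Proposition~\ref{prop:wce} to get $\Sigma_{\dataset'} \leq e_k(\vw_0, \points')^2 = e_k(\vw, \points)^2 = \Sigma_\dataset$. Your remarks on repeated nodes and on the conditional-variance (Schur complement) viewpoint are accurate additions, but they are not needed.
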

\begin{proof}
  Suppose that $\dataset$ uses nodes $\points = \{\vx_1, \ldots, \vx_N\}$ and $\dataset'$ uses nodes $\points' = \{\vx_1, \ldots, \vx_{N'} \}$ for $N' \geq N$. 
  Let $\vw \in \R^N$ and $\vw' \in \R^{N'}$ be the Bayesian quadrature weights based on $X$ and $X'$, respectively. 
  Additionally, let $\vw_0 = (\vw, 0, \ldots, 0) \in \R^{N'}$.
  Then Proposition~\ref{prop:wce} yields
  \begin{equation*}
    \begin{split}
    \Sigma_{\dataset'} = e_k(\vw', \points')^2 = \min_{\vv \in \R^{N'}} e_k(\vv, \points')^2 \leq e_k(\vw_0, \points')^2 &= e_k(\vw, \points)^2 \\
    &= \Sigma_{\dataset},
    \end{split}
  \end{equation*}
  which is the claim.
\end{proof}

Note that both $\Sigma_\dataset$ and $\Sigma_{\dataset'}$ in Proposition~\ref{prop:monotone-var} are based on the same covariance function.
In practice it can happen that the variance increases as more data are obtained if the covariance parameters are being learned from the data.
For example, new data could indicate that the integrand is much rougher than previously thought.
We discuss parameter estimation in Section~\ref{sec:param-estimation}.
The next proposition describes how the covariance function affects the variance via its RKHS.

\begin{proposition}
  Let $\Sigma_{\dataset,k}$ be the integral variance in~\eqref{eq:bq-var} under the prior $\fGP \sim \GP(m, k)$.
  If $k$ and $r$ are covariance kernels such that their RKHSs satisfy $\rkhs(k) \subseteq \rkhs(r)$, then there is a constant $c > 0$ such that 
  \begin{equation*}
    \Sigma_{\dataset, k} \leq c \cdot \Sigma_{\dataset, r} \quad \text{ for every dataset $\dataset$.}
  \end{equation*}
\end{proposition}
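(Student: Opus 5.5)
The plan is to work entirely through the worst-case error characterisation of Proposition~\ref{prop:wce}, which expresses $\Sigma_{\dataset,k} = \min_{\vv} e_k(\vv, \points)^2$ and $\Sigma_{\dataset,r} = \min_{\vv} e_r(\vv,\points)^2$. The integral variance does not depend on the data values $\vf_\points$ or on the mean $m$, only on the nodes $\points$. It therefore suffices to show that there is a constant $c>0$, independent of $\points$ and $\vv$, such that
\begin{equation*}
  e_k(\vv, \points) \leq \sqrt{c}\, e_r(\vv, \points) \quad \text{for all } \vv \in \R^N \text{ and all } \points.
\end{equation*}
Taking the minimum over $\vv$ on both sides then gives $\Sigma_{\dataset,k} \leq \min_\vv c\, e_r(\vv,\points)^2 = c\, \Sigma_{\dataset,r}$.

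The key ingredient is a norm comparison between the two RKHSs. I would invoke the standard fact (e.g.\ \citealp{Paulsen2016} or \citealp{Aronszajn1950}) that the inclusion $\rkhs(k) \subseteq \rkhs(r)$ is automatically continuous. Its proof is a closed-graph argument: convergence in either RKHS norm implies pointwise convergence, so the inclusion map has closed graph and is bounded. Hence there is a $C>0$ with $\norm{g}_{\rkhs(r)} \leq C \norm{g}_{\rkhs(k)}$ for every $g \in \rkhs(k)$. Consequently the unit ball of $\rkhs(k)$ is contained in the ball of radius $C$ in $\rkhs(r)$.

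With this in hand the comparison follows from the definition~\eqref{eq:wce-definition}, using the linearity of the error functional $g \mapsto I_\measure(g) - \sum_i v_i g(\vx_i)$:
\begin{equation*}
  e_k(\vv,\points) = \sup_{\norm{g}_{\rkhs(k)}\leq 1} \Big\lvert I_\measure(g) - \textstyle\sum_i v_i g(\vx_i)\Big\rvert \leq \sup_{\norm{g}_{\rkhs(r)}\leq C} \Big\lvert I_\measure(g) - \textstyle\sum_i v_i g(\vx_i)\Big\rvert = C\, e_r(\vv,\points).
\end{equation*}
This gives the claim with $c = C^2$. We tacitly need the integrability assumptions of Lemma~\ref{lemma:wce} for both $k$ and $r$ so that both worst-case errors are finite and equal the variances.

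The main obstacle is the continuity of the inclusion, since the hypothesis is only set-theoretic. If I cannot simply cite it, I would prove it directly. Let $g_n \to g$ in $\rkhs(k)$ and $g_n \to h$ in $\rkhs(r)$. By the reproducing property and Cauchy--Schwarz, $\lvert g_n(\vx) - g(\vx)\rvert \leq \sqrt{k(\vx,\vx)}\norm{g_n-g}_{\rkhs(k)}$, and similarly for $r$. So $g = h$ pointwise, the graph of the inclusion is closed, and the closed graph theorem yields boundedness.
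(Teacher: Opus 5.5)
Your proposal is correct and follows essentially the same argument as the paper: inclusion of the RKHSs gives a norm bound, hence the unit ball of $\rkhs(k)$ sits inside a ball of radius $\sqrt{c}$ in $\rkhs(r)$, and the variances are then compared through the min--sup worst-case error characterisation of Proposition~\ref{prop:wce}. The only difference is that the paper simply cites \citet[Thm.\@~5.1]{Paulsen2016} for the continuity of the inclusion, whereas you additionally sketch its closed-graph proof.
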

\begin{proof}
  The inclusion $\rkhs(k) \subseteq \rkhs(r)$ implies that there is $c > 0$ such that $\lVert g \rVert_{\rkhs(r)} \leq \sqrt{c} \cdot \lVert g \rVert_{\rkhs(k)}$ for all $g \in \rkhs(k)$; see~\citet[Thm.\@~5.1]{Paulsen2016}.
  It follows that
  \begin{equation*}
    \Set{g \in \rkhs(k)}{ \lVert g \rVert_{\rkhs(k)} \leq 1 } \subseteq \Set{g \in \rkhs(r)}{ \lVert g \rVert_{\rkhs(r)} \leq \sqrt{c} }.
  \end{equation*}
  By the definition of the worst-case error and Proposition~\ref{prop:wce},
  \begin{equation*}
    \begin{split}
    \sqrt{\smash[b]{\Sigma_{\dataset, k}}} &= \min_{\vv \in \R^n} \sup_{ \norm[0]{g}_{\rkhs(k)} \leq 1} \, \Big \lvert \int_\domain g(\vx) \dif \measure(\vx) - \sum_{i=1}^N v_i g(\vx_i) \Big \rvert \\
    &\leq \min_{\vv \in \R^n} \sup_{ \norm[0]{g}_{\rkhs(r)} \leq \sqrt{c}} \, \Big \lvert \int_\domain g(\vx) \dif \measure(\vx) - \sum_{i=1}^N v_i g(\vx_i) \Big \rvert \\
    &= \sqrt{c} \cdot \min_{\vv \in \R^n} \sup_{ \norm[0]{g}_{\rkhs(r)} \leq 1} \, \Big \lvert \int_\domain g(\vx) \dif \measure(\vx) - \sum_{i=1}^N v_i g(\vx_i) \Big \rvert \\
    &= \sqrt{\smash[b]{ c \cdot \Sigma_{\dataset, r}}}, 
    \end{split}
  \end{equation*}
  which is the claim.
\end{proof}

Inclusions between RKHSs of standard covariance functions are well-understood~\citep{Kanagawa2018}.
For example, if $k_\nu$ is a Matérn covariance in~\eqref{eq:matern-kernel} with regularity $\nu$ and $k_\mathrm{se}$ the square exponential in~\eqref{eq:se-kernel}, then
\begin{equation*}
  \rkhs(k_{\nu_1}) \subseteq \rkhs(k_{\nu_2}) \subsetneq \rkhs(k_\mathrm{se})
\end{equation*}
if $\nu_1 \leq \nu_2$ and the domain $\domain$ has non-empty interior.
The kernels in this chain of inclusions are allowed to have different length-scales.
The final proposition of this section says that conjugate Bayesian quadrature integrates the kernel translates $k(\cdot, \vx_i)$ exactly.

\begin{proposition}
  Let $\fGP \sim \GP(0, k)$.
  If $f = k(\cdot, \vx_i)$ for $i \in \{ 1, \ldots, N\}$, then $\mu_\dataset = I_\measure(f)$.
\end{proposition}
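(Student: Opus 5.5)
The proof is a direct computation from the closed-form integral mean~\eqref{eq:bq-mean} with zero prior mean: $\mu_\dataset = \vk_{\measure\points}\Trans \mK_{\points\points}^{-1} \vf_\points$. For $f = k(\cdot, \vx_i)$ the evaluation vector is
\begin{equation*}
  \vf_\points = (k(\vx_1, \vx_i), \ldots, k(\vx_N, \vx_i)) = \mK_{\points\points} \vn_i,
\end{equation*}
where $\vn_i \in \R^N$ denotes the $i$th standard basis vector. In words, $\vf_\points$ is the $i$th column of the Gram matrix, and $\mK_{\points\points}$ is symmetric.

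Substituting gives
\begin{equation*}
  \mu_\dataset = \vk_{\measure\points}\Trans \mK_{\points\points}^{-1} \mK_{\points\points} \vn_i = \vk_{\measure\points}\Trans \vn_i = k_\measure(\vx_i).
\end{equation*}
Here we use the standing assumption that the Gram matrix is invertible.

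Next we identify the right-hand side with the true integral. By the definition of the kernel mean embedding~\eqref{eq:kernel-mean} and the symmetry of $k$,
\begin{equation*}
  k_\measure(\vx_i) = \int_\domain k(\vx', \vx_i) \dif \measure(\vx') = I_\measure(k(\cdot, \vx_i)) = I_\measure(f),
\end{equation*}
which proves the claim. The tacit integrability assumptions made after Definition~\ref{def:bq} guarantee that all these integrals are finite.

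There is a conceptual alternative via Section~\ref{sec:kernel-interpolation}. Since $f$ already lies in the span of the kernel translates, the kernel interpolant reproduces it, $s_{f,\points} = f$, and $\mu_\dataset = I_\measure(s_{f,\points})$. This would serve as a remark rather than the main line. There is no real obstacle; the only point needing care is recognising $\vf_\points$ as a column of $\mK_{\points\points}$.
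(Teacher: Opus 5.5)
Your proposal is correct and is essentially the paper's proof. The paper reads off the $i$th row of the weight system $\mK_{\points\points}\vw = \vk_{\measure\points}$ as $\sum_{j} f(\vx_j) w_j = I_\measure(f)$, i.e.\ $\mu_\dataset = \vw\Trans\vf_\points = I_\measure(f)$, which is the same identity you obtain by substituting $\vf_\points = \mK_{\points\points}\vn_i$ into $\vk_{\measure\points}\Trans\mK_{\points\points}^{-1}\vf_\points$.
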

\begin{proof}
  Let $f = k(\cdot, \vx_i)$ for $i \in \{1, \ldots, N\}$.
  When $m \equiv 0$, the integral mean is $\mu_{\dataset} = \sum_{i=1}^N \vw\Trans \vf_\points$ for $\vw \in \R^N$ that satisfies $\mK_{\points\points} \vw = \vk_{\measure\points}$.
  The $i$th row of this linear system is $\sum_{j=1}^n k(\vx_i, \vx_j) w_j = \int_\domain k(\vx, \vx_i) \dif \measure(\vx)$, which we can write as $\sum_{j=1}^n f(\vx_j) w_j = \int_\domain f(\vx) \dif \measure(\vx)$.
  Equivalently, $\mu_\dataset = \vw\Trans \vf_\points = I_\measure(f)$.
\end{proof}

\section{Generalisations}
\label{sec:generalisations}

We have elected to work in the setting described above to retain a degree of concreteness and to avoid introduction of overbearing notation.
However, a number of conceptually straightforward (though not necessarily so in practice) generalisations are possible and appear often in the literature.
These include the following:
\begin{itemize}
\item \emph{Domain.}
  There is nothing in Bayesian quadrature or Gaussian processes that ties them to subsets of $\R^d$.
  In general, one can let $D$ be any set and $P$ any measure on $D$.
  This generality of Bayesian quadrature has been recognised since the very beginning~\citep{Larkin1972, OHagan1991}.
  A recurring example in the literature is the computation of \emph{global illumination integrals} on the unit sphere $\mathbb{S}^2 = \Set{ \vx \in \R^3}{ \norm[0]{\vx} = 1}$, to which Bayesian quadrature has been applied in \citet{Brouillat2009, Marques2013, Marques2015, Xi2018, Briol2019, Karvonen2019b}.
  Other domains may be found in \citet{Froehlichl21} and \citet{Kang2025}.
  Integration on a sphere has also been considered in the literature on radial basis function integration~\citep{SommarivaWomersley2005, Fuselier2014, ReegerFornberg2015} whose connection to Bayesian quadrature we described in Section~\ref{sec:kernel-interpolation}.
\item \emph{Information.} 
It is straightforward to incorporate any linear information (i.e., observations that have been obtained via application of a linear functional) in Gaussian process regression~\citep{Sarkka2011} and, consequently, Bayesian quadrature. We refer to \citet[Sec.\@~3]{Oettershagen2017} for a rigorous treatment of the topic in the context of approximation theory, the connection to Gaussian processes of which we reviewed in Section~\ref{sec:conn-appr-theory}.
Derivative observations, which occur frequently in Gaussian process literature~\citep[e.g.,][]{Solak2002}, have been used in Bayesian quadrature by \citet[Sec.\@~2.5]{OHagan1992} and \citet{PruherSarkka2016}, while \citet{Weiland2026} use integral observations over sub-domains.

\item \emph{Noise.} 
Incorporating observations that are perturbed by Gaussian noise, such as when $\f$ is the output of a stochastic simulation, to Bayesian quadrature is straightforward. 
In the typical Gaussian conjugate setting this simply amounts to incrementing the diagonal of the covariance matrix with the noise variances (see the equations for the regularised mean and variance in \Cref{sec:num-ill-cond}); this is supported by most open source implementations of Bayesian quadrature.
Even though the setting of noisy observations is dominant in the Gaussian process regression literature, Bayesian quadrature with noisy observations appears to have received little explicit attention.
 The only substantial analysis the noisy case that we are aware of is by \citet{CaiLamScarlett2023}.
\citet{Ma2014} also incorporate noise in their method.
\item \emph{Prior.} 
  In our definition of Bayesian quadrature the integrand needs to be modelled as a potentially non-linearly transformed Gaussian process $\fGP = \transformation \circ \gGP$.
  As discussed in Section~\ref{sec:bpni}, one is in principle free to use any suitable stochastic process $\fGP$ to model the integrand.
  Due to computational tractability issues, priors other than transformed Gaussian processes are rarely used.
  Student's $t$-processes, which differ little from Gaussian processes in practice, have been used in numerical integration by \citet[Sec.\@~2.2]{OHagan1991} and \citet{Pruher2017}.
  What appears to be the most interesting example of non-Gaussian priors in numerical integration is due to \citet{Zhu2020}, who use Bayesian additive regression trees.
\end{itemize}

\chapter{Taxonomy of Bayesian quadrature}
\label{sec:taxonomy}

Definition 2.1 constructed Bayesian quadrature as a method for computing or approximating the distribution
$I_\measure(\fGP)\mid\dataset$. This abstract formulation admits several concrete realizations, each corresponding to a particular Bayesian quadrature method or algorithm. In recent years, the literature has proposed a wide variety of approaches consistent with Definition 2.1, differing mainly in their methodological choices and practical implementation. This chapter offers a concise, high-level overview of these alternatives. It may be read as a taxonomy of Bayesian quadrature methods or used as a reference for terminology. The taxonomy is illustrated visually in Figure~\ref{fig:tax-graph}, and Table~\ref{tab:references} classifies selected algorithms from the literature.

\begin{algorithm}[h]
  \caption{Bayesian quadrature}
  \label{alg:bq}
  \begin{algorithmic}[1]
    \footnotesize
    \Require
    \Statex \textbullet~The integration task:
    \Statex \quad\textbullet~The integrand $f$, given by a function handle.
    \Statex \quad\textbullet~The integration measure $\measure$, usually given by the density $p$ and domain $\domain$.
    \Statex \textbullet~The model $\fGP$:
    \Statex \quad\textbullet~The GP $\gGP$, given by function handles of $k$ and $m$.
    \Statex \quad\textbullet~The transformation $\varphi$ and its inverse given by function handles.
    \Statex \textbullet~The budget given by the dataset size $N$.
    \State $\points \gets \textsc{sample}(N, \fGP, \measure)$
    \Comment{Sample $N$ nodes.}
    \State $\vf_\points \gets f(\vx_i)$ for $\vx_i$ in $\points$ 
    \Comment{Evaluate integrand at nodes.}
    \State $\dataset \gets [\points, \vf_{\points}]$
    \Comment{The dataset (for notational convenience).}
    \State $\algres\gets \textsc{infer}(\fGP, \measure, \dataset)$
    \Comment{The result $\algres$ is a representation of $I_\measure(\fGP) \mid \dataset$.}
    \State \Return $\algres$
  \end{algorithmic}
\end{algorithm}

To motivate the taxonomy, we inspect a high-level pseudocode for Bayesian quadrature shown in Algorithm~\ref{alg:bq}.
The pseudocode reveals three core components: a model for $\fGP$ specified by $\gGP$ and $\varphi$ (see~\eqref{eq:fGP-transformation}); a sampling procedure (line~1) for selecting the nodes $\points$ that constitute the dataset; and an inference procedure (line~4) that computes the return object $\algres$ representing the distribution $I_\measure(\fGP) \mid \dataset$.\footnote{As $I_\measure(\fGP) \mid \dataset$ is a distribution, any datatype that can represent a distribution is valid. For Gaussians, a common way is to return the sufficient statistics: mean $\Exp[ I_\measure(\fGP) \mid \dataset ]$ and variance $\Var[ I_\measure(\fGP) \mid \dataset ]$.}

This naturally leads to three main axes along which Bayesian quadrature methods can be organized:
\begin{itemize}
\item a \emph{model axis}, which characterizes the Gaussian process prior $\gGP$ and the transformation $\transformation$ (\Cref{sec:tax-models});
\item an \emph{inference axis}, which specifies how the result $\algres$ is computed (\Cref{sec:tax-inf});
\item a \emph{sampling axis}, which describes how the nodes $\points$ are selected (\Cref{sec:tax-sampling}).
\end{itemize}
Along each axis, we introduce multi-label subcategories intended to reflect the current landscape of methods and algorithms. Here, “multi-label” indicates that categories are not necessarily mutually exclusive (when they are, this will be stated explicitly); consequently, a method may be assigned multiple labels per axis. An illustrative example is provided in \Cref{sec:tax-example}.

Not all Bayesian quadrature methods fit neatly into the pseudocode outlined above.
This primarily concerns approaches that actively manage data acquisition and computational budgets (\Cref{sec:tax-sampling}). Inevitably, any taxonomy involves a degree of subjectivity; ours is guided by practical considerations motivated by applications in applied science.

We occasionally use the big-$\gO$ notation when referring to computational complexity or memory requirements.
Accessible implementations of Bayesian quadrature algorithms are unfortunately still scarce and not on par with the richness of the literature.
Some are available in the open source libraries \emukit~(for Python; \citealp{Paleyes19, Paleyes2023}), \gail~(for \matlab; \citealp{GAIL2020}), and \probnum~(for Python; \citealp{Wenger21}).

\section{Model}
\label{sec:tax-models}

The first axis of the taxonomy is concerned with the prior model $\fGP = \transformation \circ \gGP$, in particular the choice of the function $\transformation$.
We differentiate between two mutually distinct classes: ``Conjugate'' and ``non-conjugate'' that in essence categorize the posterior $I_{\measure}(\fGP) \mid \dataset$ as Gaussian or non-Gaussian.

\begin{itemize}
\item {\it Conjugate: } Conjugate models are arguably the most well-known and the most straightforward Bayesian quadrature models. They are described in Section~\ref{sec:conj-bq}, where $\transformation$ is affine and the Gaussian process $\gGP$ implies a Gaussian process $\fGP$ and a Gaussian posterior $I_{\measure}(\fGP) \mid \dataset$. Conjugate Bayesian quadrature models inherit the conveniences of GP regression models, and $I_{\measure}(\fGP) \mid \dataset$ can be computed exactly under certain well-understood circumstances (Section~\ref{sec:tax-inf}). For these reasons, conjugate Bayesian quadrature is sometimes also called \emph{standard} Bayesian quadrature \citep{Karvonen2019a, Naslidnyk21} or \emph{vanilla} Bayesian quadrature \citep{Paleyes19, Gessner20, Adachi2022}. As $\fGP$ and $I_{\measure}(\fGP) \mid \dataset$ are confined to Gaussians, conjugate models are limited in their expressiveness.
\item {\it Non-conjugate: } Non-conjugate models are often motivated by the desire to encode physically meaningful, non-negative integrands $f$ and integral values $I_{\measure}(\fGP) \mid \dataset$ which Gaussians cannot provide.
  Steering away from Gaussianity is possible with an appropriate function $\transformation$. Non-conjugate models are described in Section~\ref{sec:non-conj-bq} where $\transformation$ is not affine, $\fGP$ is a random process other than a Gaussian process and the implied posterior $I_{\measure}(\fGP) \mid \dataset$ is non-Gaussian. For example, for $\transformation(x)=\alpha + x^2$ as in \citet{Gunter2014}, $\fGP$ is a process whose marginals are noncentral $\chi^2$-distributions of degree 1. Inference, however, is usually less straightforward and may require approximations (\Cref{sec:tax-inf}). Due to the presence of $\transformation$, non-conjugate Bayesian quadrature is sometimes also referred to as \emph{warped}\footnote{The name is likely inspired by \citet{Snelson2003} who learn the transformation $\transformation$ from data in the context of GP regression and call their method ``warped Gaussian processes''. In the context of Bayesian quadrature, the transformation $\transformation$ is usually fixed, but could in principle also be learned in a similar way.} Bayesian quadrature \citep{Gunter2014, Paleyes19}. While the space of possible $\transformation$ is rich in principle, only a limited number have been explored in the context of BQ.
\end{itemize}

Further, a kernel $k$ and mean function $m$ identify the Gaussian process $\gGP$. As mentioned in \Cref{sec:gps}, common choices for $k$ are the square exponential kernel and kernels of the Matérn family, either isotropic or of product form.
Any model may additionally be characterized by hyper-parameters (such as kernel parameters) that are either given or set by an appropriate method (\Cref{sec:param-estimation}).

\section{Inference}
\label{sec:tax-inf}

Once the model $\fGP$ has been defined and a dataset $\dataset$ has been obtained (node sampling forms the third axis of the taxonomy and is the content of \Cref{sec:tax-sampling}), one can attempt to compute the solution $I_\measure(\fGP) \mid \dataset$. The second axis of the taxonomy---\emph{inference}---distinguishes the quality of the solution as either ``exact'' or ``approximate''. A third label ``scalable'' emphasizes computational complexity of the inference algorithm with respect to the dataset size, $N$.

\begin{itemize}
\item {\it Exact: } The inference method is called exact if $I_\measure(\fGP) \mid \dataset$ can be computed without approximation error ($\algres$ in Algorithm~\ref{alg:bq} is an exact representation). This is possible if $I_\measure(\fGP) \mid \dataset$ has closed form and admits numerically exact computations.
 Exact inference can for example be realized under the conjugate BQ model if the measure $\measure$ and kernel $\kernel$ pair yields closed form solutions of Eqs.~\eqref{eq:bq-mean} and~\eqref{eq:bq-var}; see Section~\ref{sec:kern-mean-var}.
Since the expressions have closed form, exact inference in Bayesian quadrature is also called \emph{analytic}. \Cref{sec:num-ill-cond} discusses some common practical considerations concerning numerical precision.
\item {\it Approximate: } The inference method is called approximate if the returned distribution ($\algres$ in Algorithm~\ref{alg:bq}) differs from $I_\measure(\fGP) \mid \dataset$ due to methodological choices.\footnote{Bayesian quadrature is related to the field of \emph{probabilistic numerics} \citep{Hennig2015, Cockayne2019, pnbook22} which frames integration, among other numerical algorithms, as inference. If we were to follow \citet[Defs.\@~2.2 \& 2.5]{Cockayne2019} who characterize a probabilistic numerical method as ``Bayesian'' or ``non-Bayesian'', \emph{Bayesian quadrature} as in Definition~\ref{def:bq} admitting \emph{exact} or \emph{approximate} inference according to our taxonomy would naturally be called \emph{Bayesian Bayesian quadrature} or \emph{non-Bayesian Bayesian quadrature}, respectively.}
  Such approximations to $I_\measure(\fGP) \mid \dataset$ are necessary if (i)~the model $\fGP\mid\dataset$ does not admit analytic integration, or (ii)~if evaluating the analytic expressions to numerical precision is prohibitively expensive.
    In practice, inference methods for non-conjugate BQ models often entail approximations due to the former reason because the process $\fGP$ can be arbitrarily complicated. Some recent approaches use Gaussian approximations to $\fGP$ via linearisation or moment matching which are integrated instead of $\fGP$.
    Conjugate Bayesian quadrature, where $\fGP$ is a Gaussian process, may also require approximations when the kernel $\kernel$ and integration measure $\measure$ pair has no closed form kernel mean embedding (Section~\ref{sec:kern-mean-var}). Approximations of kernel embeddings have been scarcely studied in the context of BQ, but in principle, any means of approximation is admissible. Another reason arises when the closed form solution is too expensive to compute for large dataset sizes~$N$ because the computational complexity scales unfavorably. Ways to remedy this are discussed next.
\end{itemize}

Algorithm~\ref{alg:conj-bq} shows a pseudocode of the exact inference method for a conjugate Bayesian quadrature model.
From the pseudocode it is apparent that exact inference for the conjugate model is feasible if the kernel mean embedding $\kernel_{\measure}$, the initial variance $\kernel_{\measure\measure}$ and the prior mean integral $m_{\measure}$ are analytic. Algorithm~\ref{alg:conj-bq} first computes $\kernel_{\measure\measure}$ and $m_{\measure}$ (lines~2--3). Then, it evaluates the kernel $\kernel$, mean function $m$ and kernel mean embedding $\kernel_{\measure}$ at the given nodes (lines~4--5) and constructs the kernel Gram matrix $\mK_{\points\points}$ (line~6). The bulk of the computation is performed in line~8 where the algorithm solves a linear system of size $N$ in order to retrieve the weights $\vw = \mK_{\points\points}^{-1} \vk_{\measure\points}$. Lines~9--10 then perform simple vector multiplications and summations that yield the final solution.

\begin{algorithm}[H]
  \caption{Exact inference method for the conjugate BQ model}
  \label{alg:conj-bq}
  \begin{algorithmic}[1]    
    \footnotesize
    \Require 
    \Statex \textbullet~Function handles to the kernel $k$ and mean function $m$.
    \Statex \textbullet~Function handle to the kernel mean embedding $k_{\measure}$.
    \Statex \textbullet~Function handle to (or precomputed) initial variance $k_{\measure\measure}$ and prior mean integral $m_{\measure}$.
    \Statex \textbullet~The dataset $\dataset$ with nodes $\points$ and observations $\vf_\points$.
    \Function{infer}{$k$, $m$, $k_{\measure}$, $m_{\measure}$, $k_{\measure\measure}$, $\points$, $\vf_{\points}$}
    \State $m_{\measure} \gets m_{\measure}()$ 
    \Comment{Evaluate or assign prior mean integral $I_{\measure}(m)$.}
    \State $k_{\measure\measure} \gets k_{\measure\measure}()$ 
    \Comment{Evaluate or assign initial variance as in Eq.~\eqref{eq:kernel-var}.}
    \State $\vk_{\measure\points} \gets \kernel_{\measure}(\vx_i)$ for $\vx_i$ in $\points$ 
    \Comment{Evaluate kernel mean embedding at nodes.}
    \State $\vm_{\points} \gets m(\vx_i)$ for $\vx_i$ in $\points$ 
    \Comment{Evaluate prior mean at nodes.}
    \State $\mK_{\points\points} \gets \kernel(\vx_i, \vx_{j})$ for $\vx_i$ in $\points$ for $\vx_{j}$ in $\points$
    \Comment{Evaluate kernel for all node combinations.}
    \State
    \State $\vw \gets \textsc{SolveLinear}(\mK_{\points\points}, \vk_{\measure\points}) $
    \Comment{Solve linear system, $\gO(N^3)$.}
    \State $\mu_{\dataset}\gets m_\measure + \vw\Trans ( \vf_\points - \vm_\points)$
    \Comment{Compute mean of $I_{\measure}(\fGP)\mid\dataset$, Eq.~\eqref{eq:conj-bq-mean-var}.}
    \State $\Sigma_{\dataset}\gets \kernel_{\measure\measure} - \vw\Trans \vk_{\measure\points}$
    \Comment{Compute variance of $I_{\measure}(\fGP)\mid\dataset$, Eq.~\eqref{eq:conj-bq-mean-var}.}
    \State $\algres \gets (\mu_{\dataset}, \Sigma_{\dataset})$
    \Comment{$I_{\measure}(\fGP)\mid\dataset$ is Gaussian and described by $\mu_{\dataset}$ and $\Sigma_{\dataset}$.}
    \State \Return $\algres$
    \EndFunction
  \end{algorithmic}
\end{algorithm}

Algorithm~\ref{alg:conj-bq} shows arguably the most well-known BQ setting which inherits its quadratic memory $\gO(N^2)$ (line~6) and cubic compute cost $\gO(N^3)$ (line~8) from standard Gaussian process regression. Thus, the final label of the inference axis describes the scalability of the BQ inference method with respect to $N$ compared to the standard cubic cost. This label is not mutually exclusive to the previous two; in fact, both an ``exact'' and ``approximate'' inference method can be either scalable or not.

\begin{itemize}
\item {\it Scalable: } The Bayesian quadrature inference method is called scalable if its computational complexity is less than cubic in the number of datapoints $N$ and thus scales favorably compared to standard Gaussian process regression.
  Interestingly, so far scalable methods have been proposed primarily for \emph{exact} inference under the conjugate BQ model. For instance, \citet{KarvonenSarkka2018} exploit structure in the posterior mean equation for certain kernels, measures and a custom node sampling scheme; their algorithm scales sub-cubically.
  \citet{Rathinavel2019} use fast Fourier transform to obtain computational cost $\gO(N \log N)$.
  Scalable methods for \emph{approximate} inference under the conjugate BQ model have not been explored systematically.\footnote{There is a vast literature on scalable algorithms, such as random Fourier features, for approximate Gaussian process inference~\citep{QuinnoneroCandela2005, Bui2017, LiuOngShen2020}. Such algorithms have seen little use in the context of Bayesian quadrature.}
  \citet{Akella2021} use a scalable approximate BQ method based on structured kernel interpolation~\citep{WilsonNickisch2015} to estimate policy gradients and \citet{LiAcerbi2023} use sparse GPs within variational BQ.
  We are not aware of scalable methods for non-conjugate BQ models.
Scalable Bayesian quadrature is sometimes also referred to as \emph{fast} Bayesian quadrature \citep{Rathinavel2019, Jagadeeswaran2022}. This is not to be confused with the use of the term \emph{fast} BQ as in \citet{Gunter2014} or \citet{Adachi2022} where the term highlights sample efficiency or end-to-end wall-clock time performance of the BQ method and data retrieval which our taxonomy does not address.
Here we also mention a recent quantum algorithm for Bayesian quadrature by \citet{Florez2025}.
\end{itemize}

\section{Sampling (node  selection)}
\label{sec:tax-sampling}

The first two axes described the BQ model and inference method when a dataset $\dataset$ is provided. The third and last axis of the taxonomy focuses on the dataset $\dataset$, particularly emphasizing the \emph{selection of the nodes} $\points$ (as the corresponding function values $\vf_{\points}$ can be retrieved by evaluating $f$ at $\points$).
We start by observing that traditionally, to solve an intractable integral, practitioners may refer to classic quadrature rules where nodes
are predefined and cannot be chosen freely, or Monte Carlo estimators, where nodes must be random.
Bayesian quadrature is positioned in between these methods as it generally can
use any nodes $\vx_1,\dots,\vx_N$ as long they are pairwise distinct.
This fortunate property of BQ leads to a rich variety of practical sampling options wherefore this section is divided into several subsections.

\subsection{Mode of sampling}
\label{sec:tax-sampling-mode}

We begin with three high-level categories that are fundamental to the mode in which the nodes are obtained: ``Deterministic'', ``random'', and ``fixed''. 
The former two are mutually distinct, while the third is simply undetermined.
\begin{itemize}
  \item {\it Deterministic: } Deterministic node selection schemes are those schemes that \emph{ideally} do not use random or pseudo-random numbers, except potentially for an initialization step such as random selection of the generating vector for lattice nodes~\citep[Sec.\@~4]{Kaarnioja2025, Rathinavel2019}. In practice, deterministic schemes often comprise a degree of randomness, for example when the nodes are selected by maximizing an acquisition function and the optimizer makes use of randomness.\footnote{We are not aware of explicit use of stochastic optimizers in BQ, although stochastic optimizers have been used in the related field of Bayesian optimization~\citep[for an overview, see][]{botorch2020}. However, even deterministic optimizers usually contain a degree of randomness. For example, the initial point of the iteration needs to be chosen which may affect which maximizer is found and hence which node is sampled.}
\item {\it Random: } Random sampling schemes are those that explicitly use random or pseudo-random numbers to generate the nodes. In contrast to ``deterministic'' sampling as described above, ``random'' sampling has a degree of randomness intrinsic to its design and even under ideal practical conditions.
  An early use of random node sampling for BQ was proposed as part of the Bayesian Monte Carlo method by \citet{RasmussenGhahramani2002} who sampled from the integration measure $P$. In the meantime, also other distributions have been used~\citep{Bach2017, Briol2017}.
\item {\it Fixed / unknown:} Fixed or unknown sampling describes a set of nodes whose generating process is either unknown or beyond the control of the Bayesian quadrature practitioner. A fixed set of nodes could for example be provided by a (black-box) service.
\end{itemize}
The above three categories describe the degree of control that the practitioner has (or rather needs to have) over the nodes, strictly within the context of Bayesian quadrature: When the nodes are fixed, one has \emph{no control}; when they are random, one has only \emph{distributional control}; and when they are deterministic, one has \emph{full control} over the nodes (at least in principle).\footnote{The mode of sampling at this point simply describes characteristics of the node selection option. Unlike data-dependent estimators that may inherit the randomness property from the mode of sampling, Bayesian models (the BQ model is a Bayesian model) do not. However, estimators derived from the BQ model such as the mean or MAP estimator may be analyzed in this way.}
The remaining categories provide more granularity to node selection.

\subsection{Contextual sampling}
\label{sec:tax-sampling-model-dependendent}

We now inspect line~1 of Algorithm~\ref{alg:bq} more closely and observe that the model $\fGP$ is an optional argument to the \textsc{sample} method which means that some sampling procedures may depend on their context. Thus the first granular label of the sampling axis is called ``model-dependent''.

\begin{itemize}
\item \emph{Model-dependent: } It is easy to fathom sampling schemes that are independent of the model $\fGP$, such as grids or draws from the integration measure $\measure$. If the sampling method does not request the model $\fGP$ (the \textsc{sample} method in line~1 of Algorithm~\ref{alg:bq} does not require $\fGP$ as argument) we call it ``model-independent'' and if it does, ``model-dependent''. As of now, this definition is purely algorithmic in the sense that the nodes $\points$ could be created either before or after $\fGP$ is chosen. As we will see soon, the distinction is somewhat more intricate than it may initially appear since it disentangles (more so than other labels) the method from the algorithm. This is best explained with an example: It is a well-known fact \citep[e.g.,][Sec.\@~11.1.3]{pnbook22} that for $\fGP$ a Brownian motion and $\measure$ the uniform measure on the interval $[0, 1]$, equidistant nodes are the maximally-informative design in the sense that $\Var[ I_\measure(\fGP) \mid \dataset ]$ is minimized. The creation of equidistant nodes on $[0, 1]$ of course does not require $\fGP$ explicitly. However, since the practitioner promises maximally informative nodes, knowledge about $\fGP$ (i.e., it being a Brownian motion) is required before the nodes are created. In other words, if the model $\fGP$ changes, the sampling method will need to change as well.
  If the practitioner were to use the equidistant nodes no matter $\fGP$ (no ``link'' between $\fGP$ and the sampling method is claimed), the BQ method would be model-independent. To conclude the example, maximally informative nodes for processes $\fGP$ other than a Brownian motion are not generally analytic and their computation may require $\fGP$ explicitly in which case it being model-dependent is apparent.
  Model-dependent sampling often implies a notion of informativity of the nodes under the model.
A recent example of model-dependent node selection is given in \citet{Adachi2022} who construct a sampling distribution that depends on $\fGP$ as well as $\measure$.\footnote{See \citet{Adachi2022b} for an application of this method to the selection of lithium-ion battery models.} Further, active sampling (Section~\ref{sec:tax-sampling-active}) is model-dependent by definition as the acquisition function depends on $\fGP$.
It is highly unclear when model-dependent or independent sampling for Bayesian quadrature should be preferred and it may be contingent on the application at hand.
Although model-independent node selection has no knowledge about $\fGP$ it often performs well, is generally fast as nodes can be pre-computed, and lends itself well to theoretical examination. Model-dependent node selection has the benefit of being tailored to the integration problem at hand.
\end{itemize}

Model-dependent sampling, as mentioned, usually implies that the nodes are chosen due to their informativity under the model $\fGP$. Another type of dependence may occur if the sampler is interlaced with the model additionally or exclusively via the \emph{inference method}. Such nodes may or may not be chosen due to their informative content, but most likely they are chosen for making the inference method ``work''.
An example are inference methods that are only scalable (have computational cost less than cubic in $N$) if the nodes follow a specific sampling strategy. 
The inference method can even be connected to the sampler in such a way that choosing any other sampler would not lead to an executable algorithm.
To highlight the association we will call these sampling schemes ``inference-dependent''.
This is a good point to reiterate that the taxonomy labels should be attached to specific instances of BQ methods that comprise a model, an inference and a sampling method. Hence labels need to be interpreted with respect to the BQ method as a whole and in the context of the practitioner's aim. A context-free taxonomy with merit is almost impossible as BQ methods only allow to plug-and-play under restrictions.\footnote{The situation is slightly different in the related Bayesian optimization (BO) method where plug-and-play is often feasible. This is because the inference method in BO has a less prominent role and influences node sampling but not the return object, which is  $\argmin_{ \vx_i \in \points} \f(\vx_i)$. For the same reason, BO methods always require model-dependent sampling in order to be meaningful.}
\begin{itemize}
\item \emph{Inference-dependent: } Some BQ methods require specific node sampling schemes that are motivated by the unique properties of their inference method.
We call those samplers ``inference-dependent''.
As mentioned, examples are the sampling schemes of the exact, scalable BQ methods in \citet{KarvonenSarkka2018} and \citet{Rathinavel2019}.
The concept is similar in spirit (albeit not as strict) to how nodes in classical Gaussian quadrature rules are derived.
\end{itemize}

\subsection{Sequential sampling}
\label{sec:tax-sampling-sequential}

So far, we have worked with Algorithm~\ref{alg:bq} where the nodes $\points$ are sampled (line~1) as one large batch of size $N$. We are now ready to study ``sequential'' sampling where nodes are acquired repeatedly in a loop in smaller sized batches (and often even only one at a time). Algorithm~\ref{alg:bq-seq} shows the general layout of a sequential method.
\begin{algorithm}[H]
  \caption{Sequential Bayesian quadrature}
  \label{alg:bq-seq}
  \begin{algorithmic}[1]
    \footnotesize
    \Require
    \Statex \textbullet~The integration task given by $f$ and $\measure$.
    \Statex \textbullet~The model $\fGP$ given by $\gGP$ and $\transformation$.
    \Statex \textbullet~The stopping condition that may represent a budget.
    \Statex \textbullet~The batch size $n$.
    \Statex \textbullet~Optional, an initial design $\pointsi$.
    \State $\vf_{\pointsi} \gets f(\vx_i)$ for $\vx_i$ in $\pointsi$ 
    \Comment{Evaluate integrand at initial design.}
    \State $\dataset \gets [\pointsi, \vf_{\pointsi}]$
    \Comment{The dataset is initialized with the initial design.}
    \While {not $\textsc{stopping\_condition}(\dataset, \fGP, \measure)$}
    \State $\pointsb \gets \textsc{sample}(n, \fGP, \measure, \dataset)$
    \Comment{Sample a batch of $n$ nodes.}
    \State $\vf_{\pointsb} \gets f(\vx_i)$ for $\vx_i$ in $\pointsb$ 
    \Comment{Evaluate integrand at batch.}
    \State $\dataset \gets \dataset \cup [\pointsb, \vf_{\pointsb}] $
    \Comment{Augment the dataset.}
    \EndWhile
    \State $\algres\gets \textsc{infer}(\fGP, \measure, \dataset)$
    \Comment{The result $\algres$ is a representation of $I_\measure(\fGP) \mid \dataset$.}
    \State \Return $\algres$
  \end{algorithmic}
\end{algorithm}
We observe that Algorithm~\ref{alg:bq-seq} is essentially a loop (lines~3--7). Before each step (one step equals executing lines~4--6) a \textsc{stopping\_condition} checks if the iteration should be halted (line~3). The stopping condition may use information of the dataset $\dataset$ containing the data collected thus far, the model $\fGP$, and the measure $\measure$.
  During each step, a \textsc{sample}r produces a batch of nodes $\pointsb$ of size $n$ (line~4) at which the integrand is evaluated (line~5). Finally, the dataset $\dataset$ is augmented with the newly acquired nodes and observations $\vf_{\pointsb}$ (line~6).
  Once the stopping condition is fulfilled, the loop terminates (the while-statement in line~3 is \textsc{false} and the loop breaks) and the solution $\algres$ is computed and returned (lines~8--9).

Algorithm~\ref{alg:bq-seq} not only has different structure than Algorithm~\ref{alg:bq} but also requires different inputs, most prominently a \emph{stopping condition}, which alludes to one of the general motivations of sequential BQ methods: the sampling (and hence the evaluation of $f$) is paced in order to test for some user-defined stopping criterion. For example, the practitioner may want to terminate the BQ method when the variance $\Var[ I_\measure(\fGP) \mid \dataset ]$ falls below a tolerance value. The BQ method samples as many (small) batches as needed until the stopping condition is fulfilled and it is unknown a priori how many nodes the method eventually collects in total (this is elaborated further under \emph{dynamic budget} in Section~\ref{sec:tax-other}).
  A second, orthogonal motivation for sequential node acquisition emerges if the sampling method in line~4 is model-dependent and changes with the collected dataset up to that point (since $\fGP \mid \dataset$ changes). A sequential approach is then able to gather informative nodes at each step of the loop depending on the current evidence and prior model. An example of the latter are some versions of `active' sampling introduced in the next subsection.

\begin{itemize}
\item {\it Sequential: } Sequential sampling produces nodes in an iterative fashion in small batches of size $n$ as sketched in Algorithm~\ref{alg:bq-seq}. Sequential sampling requires a stopping condition that indicates when the data acquisition should terminate. Often, nodes are sampled one at a time ($n=1$); if $n>1$ then sequential sampling is also called \emph{batched}. 
\citet{CookClayton1998} were the first to focus on sequential BQ methods.
 \end{itemize}

 \subsection{Active sampling}
\label{sec:tax-sampling-active}

The last label of the sampling axis of the taxonomy is ``active'' sampling for BQ. The term ``active'' is used in various machine learning fields and generally describes that the model can query useful or informative data. The idea is that the model may perform better if it can actively select data rather than just passively accept data that is handed to it. Choosing the data thus constitutes a \emph{decision problem} that the model needs to solve. In the context of BQ this means that the model $\fGP$ influences the sampling method and ``requests'' $f$ to be evaluated at certain nodes. For probabilistic models (the BQ model is a probabilistic model), an explicit formulation of this decision problem is possible via Bayesian decision theory. There, node selection is cast as the solution of the following optimization problem (we will use this formulation throughout for active sampling):
  \begin{equation}
  \label{eq:active-sampling}
  \begin{split}
    \vx_1,\dots, \vx_N
    &=\argmax_{\tilde{\vx}_1,\dots, \tilde{\vx}_N} \, \Exp_{\fGP}[u(\tilde{\vx}_1,\dots, \tilde{\vx}_N\mid\fGP, \measure)].
  \end{split}
\end{equation}
In the above formula, the nodes $\vx_1,\dots, \vx_N$ are the maximizer of an expected utility called an \emph{acquisition function}
\begin{equation*}
  a(\tilde{\vx}_1,\dots, \tilde{\vx}_N\mid \measure) \coloneqq \Exp_{\fGP}[u(\tilde{\vx}_1,\dots, \tilde{\vx}_N\mid\fGP, \measure)] 
\end{equation*}
that, together with the $\argmax$ operator, defines the node selection policy and hence the sampling method of the active BQ method. The function $u(\cdot\mid\fGP, \measure)$ is a \emph{utility function} that quantifies the usefulness of evaluating the integrand $f$ at certain nodes under the model $\fGP$.\footnote{The formulation of the active sampler is supported by the von Neumann–Morgenstern utility theorem \citep{VonNeumann2004} which shows, under some conditions, that the decision of a rational agent under uncertainty can be constructed as a policy that maximizes an expected utility function. For an in-depth derivation and motivation of active sampling in the related field of Bayesian optimization see, for example, Section~5.1 in \citet{Garnett2023}. The treatment in BQ is analogous and not repeated here. For Bayesian decision theory for GPs, see Section~2.4 in \citet{RasmussenWilliams2006}.}

While the utility function is arbitrary in the sense that it describes the practitioner's conviction of usefulness, the active BQ method internally assumes the resulting sampling policy to provide the best course of action. 
Once again, this is best explained through an example.
 Let $\tilde{\dataset} = \{ (\tilde{\vx}_1, \fGP(\tilde{\vx}_1)), \ldots (\tilde{\vx}_N, \fGP(\tilde{\vx}_N)) \}$ denote a hypothetical dataset, meaning that the targets are the hypothetical values $\fGP(\tilde{\vx}_i)$ under the model [instead of the integrand values $f(\tilde{\vx})$], and let $\fGP\mid\tilde{\dataset}$ be the model conditioned on the hypothetical dataset. If we choose $u(\cdot\mid\fGP, \measure) = -(I_{\measure}(\fGP\mid \tilde{\dataset}) - \mu_{\tilde{\dataset}})^2$ to be the negative [Eq.~\eqref{eq:active-sampling} is a maximization problem] \emph{square error} between the integral $I_{\measure}(\fGP\mid \tilde{\dataset})$ of the model and its expectation $\mu_{\tilde{\dataset}}$ (the square error is a natural and decent choice of utility), the resulting acquisition function $a(\tilde{\vx}_1,\dots, \tilde{\vx}_N\mid \measure)=-\Var[ I_\measure(\fGP) \mid \tilde{\dataset}]$ equals the negative variance of the integral over the conditioned model and its maximization yields maximally informative nodes \citep[e.g.,][Sec.\@~10.2]{pnbook22}. Likewise, other utility functions can be linked to well-known BQ acquisition functions, some of which will be discussed further in Section~\ref{sec:sampling-active}.

Eq.~\eqref{eq:active-sampling} assumes that the practitioner aims to sample a set of $N$ nodes at once. Often, active node sampling is deployed in a sequential fashion due to the mentioned potential benefits of sequential node acquisition of model-dependent sampling schemes. In such cases, the utility function takes into account sampling decisions under the model in future steps of the iteration when making the sampling decision in the current step \citep[e.g.,][Section~5.2]{Garnett2023}. In practice, for tractability reasons
  often only an isolated decision is considered, meaning that the utility function encodes the usefulness of sampling the current step (or even just a single next node) as if it were the last step/node that the model can ever request. Such an approach is called \emph{one-step-lookahead}
  or \emph{(maximally) myopic} sequential active sampling and is usually the default over more elaborate schemes due to its practicality and tractability \citep[e.g.,][Sec.\@~7.2]{Garnett2023}.\footnote{Non-myopic sequential, active sampling has so far received little attention in BQ. The only approach we are aware of is due to \citet{Jiang19}. A simple example can illustrate the difference between myopic and non-myopic sampling: Consider a sampling horizon of $N=2$ and let $\fGP$ be a Brownian motion as in the example of Section~\ref{sec:tax-sampling-model-dependendent}. A non-myopic sampler would place the first node at $x=4/5$, anticipating a future node on $x=2/5$, while the initial node of the myopic sampler would be placed sub-optimally at $x=2/3$ \citep[e.g.,][Sec.\@~11.1.3]{pnbook22}.} Due to its importance, Algorithm~\ref{alg:active-sampl} shows this approach.
\begin{algorithm}[H]
  \caption{Myopic active sampling for Bayesian quadrature}
  \label{alg:active-sampl}
  \begin{algorithmic}[1]
    \footnotesize
    \Require
    \Statex \textbullet~The integration measure $\measure$.
    \Statex \textbullet~The model $\fGP$ given by $\gGP$ and $\transformation$.
    \Statex \textbullet~The so-far collected data $\dataset$. 
    \Function{sample}{$\fGP$, $\measure$, $\dataset$}
    \State $u_{\fGP}\gets u(\cdot \mid \fGP, \measure, \dataset)$
    \Comment{Define the utility function (only for notation).}
    \State $a \gets\Exp_{\fGP}[u_{\fGP}(\cdot)]$
    \Comment{Define the acquisition function.}
    \State $\vx \gets \textsc{policy}(a)$
    \Comment{The policy maximizes the acquisition function.}
    \State \Return $\vx$
    \EndFunction
    \State 
    \Function{policy}{$a$}
    \State $\vx \gets \argmax_{\tilde{\vx}} a(\tilde{\vx})$
    \Comment{Analytic or with off-the-shelf optimizers.}
    \State \Return $\vx$
    \EndFunction
  \end{algorithmic}
\end{algorithm}
Let us inspect the return object (line~5) of Algorithm~\ref{alg:active-sampl} which is a single node $\vx$ since the active sampler is maximally myopic with batch size $n=1$.\footnote{Note that non-myopic sampling and batched sampling are orthogonal concepts. In principle, there exist batched ($n>1$) and non-batched ($n=1$) methods that can be either myopic or not. Batched BQ methods that use maximally myopic acquisition functions are proposed in \citet{Wagstaff2018} and \citet{HongWeiBeer2024}. Further, batched and non-batched methods need of course not even be active.}
The sampled node is the return object of a maximum acquisition policy (line~4).
The policy, which is highlighted in a separate function on lines~8--11, takes an acquisition function as input that is defined like a lambda function on lines~2 and 3 inside the sampling method based on $\fGP$, $\measure$ and $\dataset$. The dependence of $u(\cdot \mid \fGP, \measure, \dataset)$ on the current dataset $\dataset$ in Algorithm~\ref{alg:active-sampl} is due to the sequential nature of the sampler and was not present in Eq.~(\ref{eq:active-sampling}). The maximizer of the optimization problem on line~9 can either be implemented directly if an analytically solution is available, or be found by off-the-shelf optimizers, especially if gradients of $\fGP$ are accessible.\footnote{It is not necessary to have access to gradients of the integrand $f$ since the gradients of the acquisition function $a$ depend only on gradients of the model $\fGP$, which is under the control of the practitioner.}

We are finally ready to define the ``active'' label.

\begin{itemize}
\item {\it Active: } Actively sampled nodes are the solution to an optimization problem as defined in Eq.~\eqref{eq:active-sampling}. They require the choice of a utility function $u$ by the practitioner. Often active samplers are deployed in combination with an iterative, sequential sampling scheme.
  Similar concepts exist in \emph{experimental design} \citep{Fisher1936} and \emph{Bayesian optimization} \citep{Mockus1989, Garnett2023} where the utility functions are tailored to the respective quantity of interest. Section~\ref{sec:sampling-active} discusses utility functions specific to Bayesian quadrature.
  If the resulting acquisition function depends on some previously collected dataset $\dataset$,
  we further distinguish \emph{implicitly} and \emph{explicitly active} sampling which describes the type of dependence.
  \begin{itemize}
  \item[$\circ$] \emph{Implicitly active: }
    Active sampling is implicit if the acquisition function depends on the observations $\vf_\points$ only via a data-dependent model for $\f$.
    This is for example the case when the kernel-parameters of a conjugate BQ model are estimated with empirical Bayes after each step, while performing sequential active sampling (Section~\ref{sec:param-estimation} will discuss kernel-parameter estimation).
    Implicitly active sampling is sometimes also called \emph{semi-active},
    \emph{non-adaptive}, \emph{open-loop} \citep[e.g.,][]{pnbook22} or simply \emph{active}~\citep[e.g.,][]{Gessner20}.
    
\item[$\circ$] \emph{Explicitly active: } Active sampling is explicit if the acquisition function depends on $\vf_\points$ directly, beyond a data-dependent model.
  An example is uncertainty sampling in the WSABI method~\citep{Gunter2014}. 
  In contrast to implicitly active sampling, explicitly active sampling schemes cannot be precomputed if data-dependency of the model is dropped. 
  Explicitly active sampling thus naturally combines with sequential sampling. Explicitly active sampling is also referred to as \emph{adaptive}\footnote{The term \emph{adaptive} has several additional meanings in numerical integration literature: in \emph{adaptive quadrature} the integral is approximated by a sum of quadratures rules on sub-domains that are adaptively refined until a desired error tolerance is reached~\citep[Sec.\@~4.7]{Press2007}, while \emph{dimension-adaptive quadrature} attempts to find the dimensions which contribute most to the integral~\citep{GerstnerGriebel2003}. Although the modelling framework would be well-suited to the task, no dimension-adaptive Bayesian quadrature methods appear to have been developed so far.} or \emph{closed-loop} \citep[both, e.g.,][]{pnbook22} sampling.
  \end{itemize}
\end{itemize}

The taxonomy has illustrated that \emph{Bayesian quadrature} should be understood as an umbrella term (under Definition~\ref{def:bq}) and that Bayesian quadrature \emph{methods} differ significantly in model assumption, inference method, sampling procedure and even algorithmic requirements and structure. They indeed range from methods that accept fixed datasets to agent-like methods that interact with the data collection mechanism.
Figure~\ref{fig:tax-graph} illustrates the taxonomy and Table~\ref{tab:references} categorizes some existing BQ methods.

\newcommand{\fcolshade}{15}
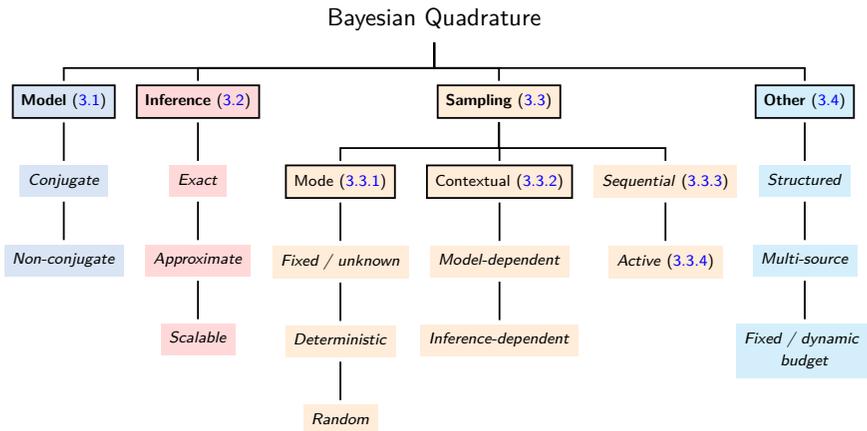
\begin{figure}[t]
  \centering
  \scalebox{0.85}{
    \begin{forest}
    for tree={          
         font = \sffamily\scriptsize\linespread{0.8}\selectfont,
        align = center,
        fill = white,
        minimum size = 1.1em,
         grow = south,
         edge = {thick},
         outer sep = 2pt,
      forked edge,        
        s sep = 2mm,    
        l sep = 6mm,    
     fork sep = 4mm,    
               }
[\large Bayesian Quadrature,
    [\textbf{Model} (\ref{sec:tax-models}), fill=bluegraph!\fcolshade, draw, thick,
        [\emph{Conjugate}, fill=bluegraph!\fcolshade
          [\emph{Non-conjugate}, fill=bluegraph!\fcolshade
          ]
        ]
    ]
    [\textbf{Inference} (\ref{sec:tax-inf}), fill=red!\fcolshade, draw, thick,
        [\emph{Exact}, fill=red!\fcolshade
          [\emph{Approximate}, fill=red!\fcolshade
            [\emph{Scalable}, fill=red!\fcolshade
            ]
          ]
        ]
    ]
    [\textbf{Sampling} (\ref{sec:tax-sampling}), fill=orange!\fcolshade, draw, thick, before drawing tree={x-=1.75pt}, 
        [Mode (\ref{sec:tax-sampling-mode}), fill=orange!\fcolshade, draw, thick,
          [\emph{Fixed / unknown}, fill=orange!\fcolshade
            [\emph{Deterministic}, fill=orange!\fcolshade
              [\emph{Random}, fill=orange!\fcolshade
                ]
              ]
            ]
        ]
        [Contextual (\ref{sec:tax-sampling-model-dependendent}), fill=orange!\fcolshade, draw, thick,        
          [\emph{Model-dependent}, fill=orange!\fcolshade
              [\emph{Inference-dependent}, fill=orange!\fcolshade
            ]
          ]
        ]
        [\emph{Sequential} (\ref{sec:tax-sampling-sequential}), fill=orange!\fcolshade
        [\emph{Active} (\ref{sec:tax-sampling-active}), fill=orange!\fcolshade,
        ]
      ]
    ]
    [\textbf{Other} (\ref{sec:tax-other}), fill=cyan!\fcolshade, draw, thick,
      [\emph{Structured}, fill=cyan!\fcolshade
        [\emph{Multi-source}, fill=cyan!\fcolshade
          [\emph{Fixed / dynamic} \vspace{0.1cm} \\ \emph{budget}, fill=cyan!\fcolshade
          ]
        ]
      ]
    ]
]
  \end{forest}  
  }
    \caption{Taxonomy of Bayesian quadrature methods along three main axes: \emph{Model}, \emph{inference} and \emph{sampling} as introduced in Chapter~\ref{sec:taxonomy}. The labels under \emph{other} are listed for completeness. The labels in each axis do not follow a hierarchy and are not necessarily mutually exclusive.
    Section numbers are given in parentheses.      
  }
  \label{fig:tax-graph}
\end{figure}

\section{Other terms}
\label{sec:tax-other}
This section lists terms that are occasionally used to describe some characteristics of Bayesian quadrature algorithms but are not part of our taxonomy. We list them for completeness.

\begin{itemize}
\item {\it Structured: }
 Bayesian quadrature is sometimes called structured\footnote{Our use of the term has nothing in common with \emph{structured kernel interpolation} that is related to scalable GP regression~\citep{WilsonNickisch2015}.} when the model $\fGP$ encodes a known structural property, such as an invariance or range constraint, of the integrand $f$ via the transformation $\transformation$ or otherwise.
  This is to highlight that the method is designed to address certain prior information that may be available about $f$.
  For example, \citet{Naslidnyk21} encode input invariances of the integrand $f$ in a conjugate BQ model. \citet{Chai2019a} and \citet{Gunter2014} use exponential and square transforms $\varphi$, respectively, to encode non-negativity in a non-conjugate model (see Section~\ref{sec:non-conj-bq}).
\item {\it Fixed \& dynamic budget: } The computational budget of a Bayesian quadrature algorithm is \emph{fixed} if the total number of integrand evaluations $N$ is determined before the algorithm runs. This is the case if $N$ is set by the practitioner by, for example, providing a fixed set of nodes $\vx_1, \dots, \vx_N$ or by providing $N$ directly alongside a sampler.
  The budget is \emph{dynamic} if the number of evaluations of $f$ is not known before run time,
  and the algorithm determines an appropriate $N$ automatically.\footnote{Apart from a safeguard condition that caps the number of $f$-evaluations at some large $N_\textup{max}$.}
Methods with a dynamic budget are sometimes called \emph{automatic} as for example in \citet{Rathinavel2019, Jagadeeswaran2022, Rathinavel2019-thesis}.
 A dynamic budget does not imply that node selection is active.
 Whether a fixed or dynamic budget is preferable in practice depends on the requirements and constraints of the application.
 \item {\it Multi-source: } Multi-source Bayesian quadrature is an umbrella term that implies that not only one, but multiple, usually related integrals are being considered simultaneously.
  Several settings exist: \emph{Multi-fidelity} BQ assumes a natural ordering of the integrals (e.g., from cheap to expensive) and usually aims to infer the integral with the highest fidelity  (e.g., the most expensive one) from integrand evaluations of all fidelity levels. The ordering can either be encoded in the model directly \citep{Li2022} or implicitly via the sampling strategy \citep{Gessner20}. \emph{Multi-source} BQ \citep{Gessner20} generalizes multi-fidelity BQ in that it aims to learn a primary integral but does not assume a natural ordering of the related integrals. \cite{Xi2018} assume no primary integral and aim to learn all related integrals equally well. \emph{Active} multi-source BQ \citep{Gessner20} can, via its acquisition function, naturally be tailored towards aiming to learn a primary, some, or all related integrals.
  Other BQ methods that feature multiple integrals include \emph{Bayesian quadrature optimization} methods which seek to optimise functions of the form $g(\vy) = \int_\domain \f(\vx, \vy) \dif P(\vx)$~\citep{Nguyen2020, Iwazaki2021, Sadeghi2022, Toscano2022} and the conditional BQ method of \citet{Chen2024} that estimates integrals with respect to a parametric family of distributions $\{ P_\hparams \}_{\hparams \in \Theta}$. 
  While multi-source BQ does not strictly comply with Definition~\ref{def:bq}, resulting algorithms fit well into our taxonomy.
\end{itemize}

\begin{table}[t]
  \def\arraystretch{1.5} 
  \scriptsize
    \centering
    \begin{tabular}[t]{%
      p{0.24\textwidth}%
      |p{0.115\textwidth}%
      p{0.09\textwidth}%
      p{0.15\textwidth}%
      >{\raggedright\arraybackslash}p{0.225\textwidth}%
      }
{\bf Method}  & {\bf Model} & {\bf Inference}& {\bf Sampling} & {\bf Reference}\\
 \hline
 \rowcolor{black!05} \makecell{Bayesian Monte Carlo\\(BMC)} & \makecell{Conjugate} & \makecell{Exact}& \makecell{Random} & \cite{RasmussenGhahramani2002} \\
 \makecell{Warped sequential \\ active Bayesian \\ integration (WSABI)}  & \makecell{Non-conjug.\\Structured} & \makecell{Approx.}& \makecell{Deterministic\\Sequential\\Active (expl.)} & \cite{Gunter2014}\\
 \rowcolor{black!05} \makecell{BQ for multiple \\related integrals}  &  \makecell{Conjugate\\ Multi-output}& \makecell{Exact} & \makecell{Random or\\ Deterministic}& \cite{Xi2018}\\
 \makecell{Variational BMC} & \makecell{Conjugate} & \makecell{Exact} & \makecell{Deterministic \\ Sequential \\ Active (expl.)} & \makecell{\cite{Acerbi2018b}; \\ \cite{Acerbi2018a}} \\
 \rowcolor{black!05} \makecell{Fully symmetric\\kernel quadrature} &\makecell{Conjugate}  & \makecell{Exact\\Scalable}& \makecell{Deterministic\\Infer-depend.} & \cite{KarvonenSarkka2018}\\
 \makecell{Fast automatic BQ} &\makecell{Conjugate}  & \makecell{Exact\\Scalable}& \makecell{Deterministic\\Infer-depend.\\Sequential} & \cite{Rathinavel2019}\\
 \rowcolor{black!05} \makecell{Active multi-source BQ} &\makecell{Conjugate\\Multi-output}  & \makecell{Exact}& \makecell{Deterministic\\Sequential\\Active (impl.)} & \cite{Gessner20}\\
 \makecell{Locally adaptive BQ}& \makecell{Conjugate} & \makecell{Exact}  & \makecell{Deterministic\\Sequential\\Active (impl.)} & \cite{Fisher2020}\\
 \rowcolor{black!05} \makecell{Invariant BQ} &\makecell{Conjugate\\Structured}  & \makecell{Exact}& \makecell{Deterministic\\Sequential\\Active (impl.)} & \cite{Naslidnyk21}\\
 \makecell{Bayesian alternately \\ subsampled quadrature} &\makecell{Non-conjug.}  & \makecell{Approx.\\ Scalable}& \makecell{Random\\Model-depend.} & \cite{Adachi2022}
\end{tabular}
    \caption{Some well-known BQ methods categorized according to the taxonomy of Chapter~\ref{sec:taxonomy}.
    }
\label{tab:references}
\end{table}

\section{Example}
\label{sec:tax-example}

Here we present an example of a BQ method and examine how modififying the method would affects its classification under the taxonomy.
The method is only a conceptual example. We have not implemented it and there may be practical issues that render it impractical.
Proper numerical experiments are found in \Cref{sec:experiments}.

Let $\domain = [0, 1]^d$ and suppose that integration problem is to compute $I_\measure(\f) = \int_D f(\vx) \dif \vx$ for some integrand function $\f \colon D \to \R$.
Our illustrative BQ method is the following:

\begin{itemize}
\item[] \emph{Model:} To model $\f$, we select the zero-mean GP model $\fGP \sim \GP(0, k_\hparams)$ whose covariance function is the square exponential kernel in~\eqref{eq:se-kernel} and $\hparams = (\sigma, \ell)$ are the kernel parameters. This choice means that the method is \emph{conjugate} (\Cref{sec:tax-models}).
  
\item[] \emph{Sampling:} Let $\{x_1, \ldots, x_n\}^d \subset [0, 1]^d$ be the $d$-ary Cartesian product of a set points $\{x_1, \ldots, x_n \} \subset [0, 1]$. Select the number of iterations $i_{\max} \geq 1$ and sample the nodes by iterating the following steps for $i = 1, \ldots, i_{\max}$:
  \begin{enumerate}
  \item Denote $X(x) = \{x_1, \ldots, x_{i-1}, x\}^d \subset [0, 1]^d$ and solve the optimization problem
    \begin{equation} \label{eq:acquisition-example}
      x_i = \argmax_{x \in [0, 1]} \rho^2(x)
    \end{equation}
    for
    \begin{equation*}
      \rho^2(x) = \vk_{\hparams,\measure\points(x)}\Trans \mK_{\hparams,\points(x)\points(x)}^{-1}\vk_{\hparams,\measure\points(x)} / k_{\hparams,\measure\measure}.
    \end{equation*}
    That the quantities defined in \Cref{sec:conj-bq} depend on $\hparams$ is made explicit here.
  \item 
    Set $\points = \points(x_i)$ and \smash{$\dataset = \{(\vx, f(\vx))\}_{\vx \in \points}$} and note that $X$ contains $N_i = i^d$ points. In the notation of \Cref{alg:bq-seq}, \smash{$\pointsb = \{ x_1, \ldots, x_i \}^d \setminus \{ x_1, \ldots, x_{i-1} \}^d$} and the batch size $n = i^d - (i - 1)^d = N_i - N_{i-1}$ depends on the step.
  \item Use maximum likelihood to update the kernel parameters (see \Cref{sec:param-estimation}):
    \begin{equation*}
      \hparams = \argmax_{ \tilde{\hparams} \in \Theta} \bigg\{ -\frac{1}{2} \vf_{\points}\Trans \mK_{\tilde{\hparams},\points\points}^{-1} \vf_{\points} -\frac{1}{2} \log{\det \mK_{\tilde{\hparams},\points\points}} \bigg\},
    \end{equation*}
    where $\Theta$ is a suitable subset of $[0, \infty) \times (0, \infty)$.
  \end{enumerate}
  The method is \emph{sequential} and \emph{batched} (\Cref{sec:tax-sampling-sequential}).
  It is also \emph{deterministic} (\Cref{sec:tax-sampling-mode}) because a perfect implementation would contain no randomness, though in practice there may be a degree of randomness in the optimization performed during Steps~1 and~3.
  Because $x_i$ in Step~1 depends on the model $\fGP$ via the covariance function $\kernel$, the method is \emph{model-dependent} (\Cref{sec:tax-sampling-model-dependendent}).
  In fact, as we shall see in \Cref{sec:sampling-active}, on each step the optimization problem~\eqref{eq:acquisition-example} can be written in the form~\eqref{eq:active-sampling}, which makes the method \emph{active} (and, in this case, \emph{myopic}; see \Cref{sec:tax-sampling-sequential}).
  Since there is no ``direct'' dependency on the observations $\vf_\points$ in Step~1, the method is not explicitly active.
  But because $\rho^2$ depends on the kernel parameters, which in turn depend on $\vf_\points$ via Step~3, the method is \emph{implicitly active}.
  
\item[] \emph{Inference:} After the nodes have been sampled, the method returns the posterior $I_\measure(\fGP) \mid \dataset \sim \Normal(\mu_\dataset, \Sigma_\dataset)$ by computing $\mu_\dataset$ and $\Sigma_\dataset$ from~\eqref{eq:bq-mean} and~\eqref{eq:bq-var}.
  Because both $\kernel_{\hparams,\measure}(\vx)$ and $\kernel_{\hparams,\measure\measure}$ are available in closed form for the square exponential kernel and the uniform measure on $[0, 1]^d$ (see \Cref{sec:kern-mean-var}), inference is \emph{exact} (\Cref{sec:tax-inf}) potential numerical problems are ignored.
  The method becomes \emph{scalable} (\Cref{sec:tax-inf}) but remains exact if one does not naively solve~\eqref{eq:bq-mean} and~\eqref{eq:bq-var} but instead uses the well-known Kronecker decompositions of $\mK_{\hparams,\points\points}$ and $\vk_{\hparams,\measure\points}$ for Cartesian product nodes and product kernels, such as the square exponential~\citep[Sec.\@~4]{OHagan1991}.
  Sampling would no longer be model-dependent, active, or sequential should the method be modified so that $\points$ were simply some pre-specified product design, such as the $d$-ary product of equispaced points on $[0, 1]$.
  However, this sampling could be considered \emph{inference-dependent} (\Cref{sec:tax-sampling-model-dependendent}) as a product design is needed for scalable inference.
\end{itemize}

Next we turn to practical issues, many of them present in the method above, that one needs to negotiate in order to successfully design and implement a BQ method.

\chapter{Practical issues}
\label{sec:practical-issues}

In this chapter we describe a number of practical issues that one encounters and how to solve them when implementing Bayesian quadrature algorithms. 
Some, such as numerical ill-conditioning and the need for kernel parameter estimation, need to be taken into account in all applications of Gaussian process regression and interpolation, while other, in particular the computation of kernel means and variances, are intrinsic to Bayesian quadrature.
We conclude the section with an overview of what we view as the main limitations of Bayesian quadrature and how these have affected---and continue to affect---algorithmic development and the range of integration problems that Bayesian quadrature can or should solve.

\section{Kernel means and variances}
\label{sec:kern-mean-var}

Conjugate Bayesian quadrature requires the computation of $\mu_{\dataset}$ and $\Sigma_{\dataset}$ as in Eq.~\eqref{eq:conj-bq-mean-var} in order to characterize $I_{\measure}(\fGP)\mid\dataset$.
Eq.~\eqref{eq:conj-bq-mean-var} poses the challenge of computing the vector $\vk_{\measure\points}= ( \kernel_\measure(\vx_1), \ldots \kernel_\measure(\vx_N))\in\R^N$ of kernel means in Eq.~\eqref{eq:kernel-mean} and the scalar kernel variance $\kernel_{\measure\measure}$ in Eq.~\eqref{eq:kernel-var}. A straightforward approach is to use a kernel $\kernel$ that admits an analytic solution of both $\kernel_{\measure\measure}$ and $\kernel_\measure(\vx)$.
A well-known example is the square exponential kernel in Eq.~\eqref{eq:se-kernel} integrated with respect to the standard Lebesgue measure on the domain $\domain=[0, 1]^d$, which, for $d=1$ yields
\begin{equation*}
  \begin{split}
    \kernel_{\measure}(x)
    &= \Big(\frac{\pi\ell^2}{2}\Big)^{\frac{1}{2}}\Big[
    \erf\Big(\frac{1 - x}{\ell\sqrt{2}}\Big)
    + \erf\Big(\frac{x}{\ell\sqrt{2}}\Big)
  \Big], \\
  \kernel_{\measure\measure}
  & = 2\ell^2(e^{-\frac{1}{2\ell^2}}-1)
  + (2\pi\ell^2)^{\frac{1}{2}}\erf\Big(\frac{1}{\ell\sqrt{2}}\Big).
  \end{split}
\end{equation*}
Comprehensive lists of closed form kernel embeddings can be found in \citet{Briol2025} and \citet[Section~10.1]{pnbook22}.
The former is accompanied with a kernel mean embedding library in Python\footnote{\url{https://github.com/mmahsereci/kernel_embedding_dictionary}}.
A variety of kernel embeddings are also implemented in the Python libraries \emukit~\citep{Paleyes19, Paleyes2023} and \probnum~\citep{Wenger21}. Approximations to non-analytic kernel embeddings for use in BQ have been less explored in the literature. Notable exceptions are \citet{Warren2022} and \citet{Warren2024}, who use fast Fourier transforms to approximate kernel means of stationary kernels.
See also \citet{Tronarp2018} for a different approximation for isotropic Matérns.

Non-conjugate BQ often requires the computation of further quantities, such as $\int_{\domain}\int_{\domain}\kernel^2(\vx,\vx')\dif \measure(\vx)\dif \measure(\vx')$, the integral of the kernel squared, and other combinations of kernel products \citep[e.g., WSABI in][]{Gunter2014}. 
Hence, methods based on non-conjugate BQ models are often specific to measure-kernel combinations which meet these requirements.
Such integrals are also required in application of Bayesian quadrature to non-linear Kalman filtering~\citep{Deisenroth2009, Deisenroth2012, SarkkaHartikainen2014, KerstingHennig2016, PruherSimandl2016, Pruher2018, Pruher2021}.

\section{Sampling and node selection}
\label{sec:sampling}

Section~\ref{sec:tax-sampling} described how the nodes may be sampled (i.e., selected).
This section details practical choices and issues one is faced with when designing a sampling procedure for BQ.

\subsection{Model-independent deterministic designs}
\label{sec:deterministic-sampling}

\begin{figure}
  \centering
  \includegraphics[width=0.28\textwidth]{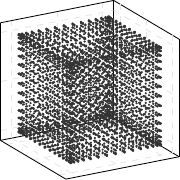}
  \hspace{0.5cm}
  \includegraphics[width=0.28\textwidth]{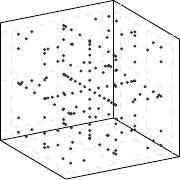}
  \hspace{0.5cm}
  \includegraphics[width=0.28\textwidth]{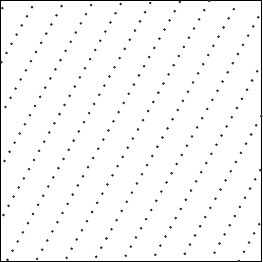}
  \caption{Three deterministic sampling designs mentioned in Section~\ref{sec:deterministic-sampling}: Tensor grid ($d=3$), sparse grid ($d=3$), and a lattice ($d=2$).}
  \label{fig:deterministic-sampling-designs}
\end{figure}

If computational or other reasons prevent active sampling and better coverage of the domain than what is achievable via random sampling is desired, it is common to use deterministic designs that are model-independent\footnote{However, it is always good to keep in mind some properties of the prior when selecting the nodes. For example, equispaced nodes are not well suited for approximation of infinitely differentiable functions~\citep{PlatteTrefethenKuijlaars2011}. One should avoid equispaced nodes if the kernel is, for example, the square exponential.} (see Section~\ref{sec:tax-sampling-model-dependendent}).
If the domain is a hypercube (say, $\domain = [0, 1]^d$), arguably the simplest deterministic design is the regular \emph{tensor grid}
\begin{equation} \label{eq:regular-tensor-grid}
  \begin{split}
  \points_{N_1}^d &= X_{N_1} \times \cdots \times X_{N_1} \\
  &= \bigg\{ \frac{1}{N_1+1}, \ldots, \frac{N_1}{N_1+1} \bigg\} \times \cdots \times \bigg\{ \frac{1}{N_1+1}, \ldots, \frac{N_1}{N_1+1} \bigg\},
  \end{split}
\end{equation}
which consists of $N = N_1^d$ nodes.
More advanced designs are obtained by replacing each one-dimensional node sets $X_{N_1}$ with the nodes of an appropriate Gaussian quadrature rule, such as Gauss--Hermite if $\measure$ is standard Gaussian on $\R^d$ or Gauss--Legendre if $\measure$ is uniform on $[0, 1]^d$.
The resulting designs are sometimes called \emph{classical} in the BQ literature.
The computation of these nodes is efficiently implemented by most scientific computing libraries, including \scipy~for Python.
As the number of points depends exponentially on the dimension, tensor grids are inflexible and difficult to use in high dimensions.
\emph{Sparse grids}~\citep[Sec.\@~4.2]{KarvonenSarkka2018} as well as \emph{lattice nodes}~\citep{Rathinavel2019} and other designs from quasi-Monte Carlo literature may be used in high dimensions.
Some of the above designs are depicted in Figure~\ref{fig:deterministic-sampling-designs} for $d \in \{2, 3\}$.
It is a greate advantage of Bayesian quadrature that one may use any nodes that have been proposed in the literature.

In using \emph{non-nested} nodes, many deterministic designs implicitly assume that the computational budget is fixed (see Section~\ref{sec:tax-other}).
For example, $X_{N_1} \not\subset X_{N_1+1}$ if $X_{N_1}$ are the equispaced nodes in~\eqref{eq:regular-tensor-grid} or any classical Gaussian quadrature nodes~\citep[nested modifications of which exist; see][Sec.\@~2.7.1.1]{DavisRabinowitz1984}.
A non-nested design can be used as an initial design in sequential Bayesian quadrature where more nodes are sampled (e.g., actively) until some stopping criterion is met.
\emph{Nested} deterministic designs include the extensible integration lattices used in \citet{Rathinavel2019} and the Clenshaw--Curtis nodes~\citep{Trefethen2008} when $N$ is a power of two (i.e., $X_{2^n} \subset X_{2^{n+1}}$).

\subsection{Random nodes}
\label{sec:sampling-random}

An early instance of a BQ method---Bayesian Monte Carlo \citep{RasmussenGhahramani2002}---uses nodes that are random samples from the integration measure $\measure$. As such it puts higher value on observing the integrand in areas where the density $p$ of $\measure$ is large than where it is small. This sampling method is model-uninformed.
\citet{Adachi2022} recently proposed a sampling distribution for BQ that is inspired by importance sampling and also covers areas where $f$ is supposedly large (even if the density $p$ is small). This sampling method, in contrast, is model-dependent.
Purposely sampling the nodes from a distribution different from $\measure$ has also been proposed and studied~\citep{Briol2017, Bach2017}.

The space-filling Latin hypercube design \citep{MacKay1979} is a random, scalable equivalent to a grid. 
Being model-independent, it is used as initial design in sequential active BQ methods.
Determinantal point processes, used in BQ by \citet{Belhadji2019}, provide model-dependent random designs constructed using the kernel function.
Nodes sampled from a determinantal point process are repulsive and space-filling.

\subsection{Actively sampled nodes}
\label{sec:sampling-active}

As promised in Section~\ref{sec:tax-sampling-active}, this section reviews some commonly used utility functions for active BQ and their corresponding acquisition functions. For the conjugate model we give explicit expressions whenever possible. Recall from Section~\ref{sec:tax-sampling-active} that active nodes maximize an expected utility according to Eq.\@~\eqref{eq:active-sampling}. The equation is repeated here for convenience:
\begin{equation*}
  \begin{split}
    \vx_1,\dots, \vx_N
    &\in \argmax_{\tilde{\vx}_1,\dots, \tilde{\vx}_N} \Exp_{\fGP}[u(\tilde{\vx}_1,\dots, \tilde{\vx}_N\mid\fGP, \measure)].
  \end{split}
\end{equation*}
Again, the function $u$ is the utility function and 
\begin{equation*}
  a(\tilde{\vx}_1,\dots, \tilde{\vx}_N\mid \measure) \coloneqq \Exp_{\fGP}[u(\tilde{\vx}_1,\dots, \tilde{\vx}_N\mid\fGP, \measure)] 
\end{equation*}
the corresponding acquisition function. The above equation, at heart, considers what would happen under the model $\fGP$ if the integrand $f$ were hypothetically evaluated at some yet unseen nodes $\tilde{\vx}_1,\dots, \tilde{\vx}_N$, expressed quantitatively as an average utility $a$. The active sampler returns the \emph{maximizer} of $a$. In other words, the sampler produces nodes that yield the maximal average utility under the model $\fGP$.
A question of course arises: How to choose $u$?

From an information theoretic perspective, a meaningful acquisition function is the \emph{mutual information} (MI) between the integral $I_{\measure}(\fGP)$ and yet unseen integrand values $\rvf_{\tilde{\points}} \coloneqq (\fGP(\tilde{\vx}_1),\dots, \fGP(\tilde{\vx}_N))\in\R^N$ at nodes $\tilde{\points} = \{\tilde{\vx}_1,\dots, \tilde{\vx}_N\}$. Let $\tilde{\dataset} =  \{ (\tilde{\vx}_1, \fGP(\tilde{\vx}_1)), \ldots (\tilde{\vx}_N, \fGP(\tilde{\vx}_N)) \}$ be the corresponding hypothetical dataset.
The MI acquisition function is
\begin{equation*}
  \begin{split}
    a(\tilde{\vx}_1,\dots, \tilde{\vx}_N\mid \measure)
    &= \MI(I_{\measure}(\fGP); \rvf_{\tilde{\points}}) \\
    &\coloneqq \Exp_{\fGP}[ \KL{ \rvf_{\tilde{\points}}\mid I_{\measure}(\fGP), \tilde{\points} }{\rvf_{\tilde{\points}}\mid\tilde{\points}} ] \\
  &= \Exp_{\rvf_{\tilde{\points}}}[\KL{I_{\measure}(\fGP)\mid \tilde{\dataset}}{I_{\measure}(\fGP)}],
  \end{split}
\end{equation*}
where $\KL{Q}{Q'}$ is the Kullback–Leibler (KL) divergence between two distributions $Q$ and $Q'$.
The MI acquisition function measures the expected change in distribution between the integral $I_{\measure}(\fGP)$ over the prior model and the integral $I_{\measure}(\fGP)\mid\tilde{\dataset}$ over the model that is condition on the hypothetical dataset $\tilde{\dataset}$.
Maximizing it yields maximally informative nodes (in the Shannon sense) about the integral value under the model $\fGP$.
For conjugate Bayesian quadrature, the mutual information is analytic and has the form
\begin{align}
  \label{eq:mi-conj}
  \MI(I_{\measure}(\fGP); \rvf_{\tilde{\points}})
   = -\frac{1}{2}\log( 1 - \rho^2(\tilde{\vx}_1,\dots, \tilde{\vx}_N) ),
\end{align}
where
\begin{align}
  \label{eq:rho2}
  \rho^2(\tilde{\vx}_1,\dots, \tilde{\vx}_N)
  \coloneqq (\vk_{\measure\tilde{\points}}\Trans \mK_{\tilde{\points}\tilde{\points}}^{-1}\vk_{\measure\tilde{\points}})/\kernel_{\measure\measure} \in [0, 1]
\end{align}
is the scalar squared correction.
Here $\kernel_{\measure\measure}$ is the initial variance defined in~Eq.\@~\eqref{eq:kernel-var}, $\vk_{\measure\tilde{\points}} = ( \kernel_\measure(\tilde{\vx}_1), \ldots \kernel_\measure(\tilde{\vx}_N)) \in \R^N$ is a vector, and $\mK_{\tilde{\points}\tilde{\points}} = (\kernel(\tilde{\vx}_i, \tilde{\vx}_j))_{i,j=1}^N \in \R^{N \times N}$ a matrix.
For example \citet{Chai2019b} and \citet{Hamid2021} use information-theoretic acquisition functions.

Other frequently used BQ acquisition functions include the \emph{integral variance reduction} (IVR) and the \emph{negative integral variance} (NIV) mentioned in \Cref{sec:tax-sampling-active} that arise from the square error utility:
  \begin{align*}
    \IVR(&I_{\measure}(\fGP); I_{\measure}(\fGP)\mid\tilde{\dataset}) \\
     &\coloneqq s^{-1}\Exp_{\fGP}\big[
      (
      I_\measure(\fGP) - \Exp_{\fGP}[ I_\measure(\fGP)]
      )^2
      -
      (
      I_\measure(\fGP) \mid \tilde{\dataset} - \Exp_{\fGP}[ I_\measure(\fGP) \mid \tilde{\dataset} ]
      )^2
      \big]\\
    &= s^{-1}(s - \Var[ I_\measure(\fGP) \mid \tilde{\dataset} ])
  \end{align*}
  and
  \begin{align*}
    \NIV(I_{\measure}(\fGP)\mid\tilde{\dataset})
    & \coloneqq -s^{-1}\Exp_{\fGP}\big[
      (
      I_\measure(\fGP) \mid \tilde{\dataset} - \Exp_{\fGP}[ I_\measure(\fGP) \mid \tilde{\dataset}]
      )^2
      \big]\\
    & =  - s^{-1}\Var[ I_\measure(\fGP) \mid \tilde{\dataset} ].
\end{align*}
The inverse of the factor $s = \Var[ I_\measure(\fGP)]$ (not necessarily equal to $\kernel_{\measure\measure}$ for arbitrary $\fGP$) is a positive constant with respect to $\tilde{\points}$ and serves as normalization. The IVR is simply the (normalized) difference between the variance $s$ of the integral over the prior model and the variance of the integral over the model conditioned on the hypothetical dataset $\tilde{\dataset}$. Hence IVR selects nodes that shrink the variance of the integral maximally. Analogously, NIV chooses nodes that minimize the variance of the integral. 
The IVR and NIV yield the same maximizer and hence the same active nodes as both functions are strictly increasing transformations of each other.
  For the conjugate model, these two acquisition functions are analytic and simplify to 
  \begin{alignat}{3}
  \label{eq:ivr-conj}
   \IVR(I_{\measure}(\fGP); I_{\measure}(\fGP)\mid\tilde{\dataset})
    &= \rho^2(\tilde{\vx}_1,\dots, \tilde{\vx}_N)\quad &&\in [0, 1], \\
  \label{eq:niv-conj}
   \NIV(I_{\measure}(\fGP)\mid\tilde{\dataset})
    &= \rho^2(\tilde{\vx}_1,\dots, \tilde{\vx}_N) - 1 \quad &&\in [-1, 0].
\end{alignat}
By comparing Eqs.\@~\eqref{eq:mi-conj}, \eqref{eq:ivr-conj}, and~\eqref{eq:niv-conj} it is apparent that, in the conjugate case, all three acquisition functions (IVR, NIV, MI) can be expressed as functions of the squared correlation $\rho^2$ and as strictly increasing transformations of each other. Hence all three yield the same nodes \citep{Gessner20}. Figure~\ref{fig:acquisition-funcs} shows the acquisitions as functions of $\rho^2$.\footnote{Active sampling typically assumes that the cost of evaluating $f(\vx)$ is the same for all $\vx$ making maximizing the acquisition function reasonable. However, if the cost varies with $\vx$, an alternative approach may be preferable. The only one we are aware of in the context of BQ is due to \citet{Gessner20} who maximize acquisition rates $r(\tilde{\vx})= a(\tilde{\vx})/c(\tilde{\vx})$ based on some positive cost function $c(\tilde{\vx})$. For example, if $a$ is MI and $c$ measures the cost in time or monetary value, then $r$ would yield nodes that maximize bits per time or bits per dollar. In this cost-sensitive scenario, even if $\argmax a_1(\tilde{\vx}) = \argmax a_2(\tilde{\vx})$ for two different acquisition functions $a_1$ and $a_2$ (such as IVR and NIV), generally $\argmax r_1(\tilde{\vx}) \neq \argmax r_2(\tilde{\vx})$, even if they share the same cost function $c$. In fact, certain acquisition functions, like NIV, cannot be converted to meaningful acquisition rates at all. Cost-sensitive evaluations further play a role in multi-source and multi-fidelity active BQ where the acquisition strategy requires careful consideration.}

\begin{figure}
 \centering
  \includegraphics[width=0.7\textwidth]{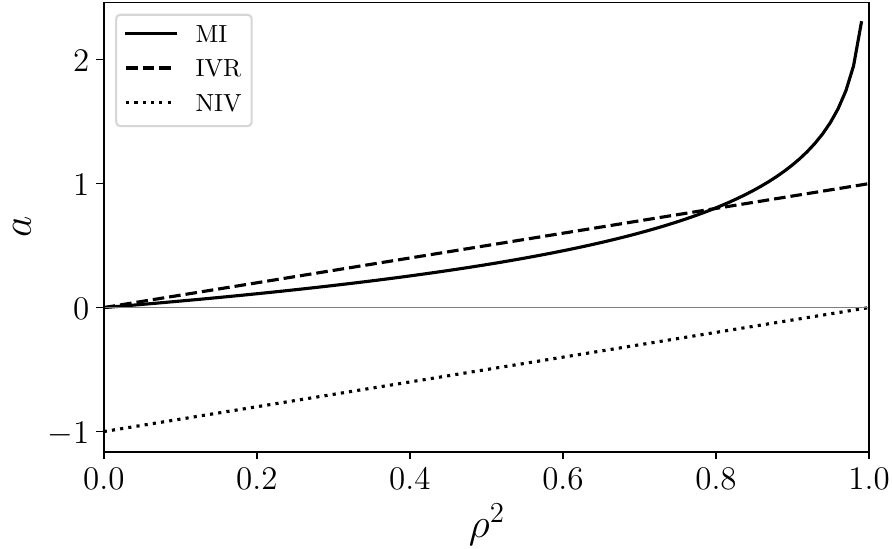}
  \caption{BQ acquisition functions for the conjugate model as functions of $\rho^2$. The three acquisition functions are given in Eqs.~\eqref{eq:mi-conj}, \eqref{eq:ivr-conj}, and~\eqref{eq:niv-conj}.}
  \label{fig:acquisition-funcs}
\end{figure}

  For \emph{sequential} active sampling as introduced in Algorithms~\ref{alg:bq-seq} and \ref{alg:active-sampl} in Sections~\ref{sec:tax-sampling-sequential} and \ref{sec:tax-sampling-active}, the formulas need only slight modification. Recall that a sequential sampler produces a small batch of nodes of size $n$ at every step of the iteration which is then added to the current dataset. Hence, denote the current dataset as $\dataset$ and the hypothetical batch dataset as $\tilde{\dataset} = \{ (\tilde{\vx}_1, \fGP(\tilde{\vx}_1)), \ldots (\tilde{\vx}_n, \fGP(\tilde{\vx}_n)) \}$, which overloads notation in this paragraph.
  Further, denote the current posterior as $\fGP_{\dataset}=\fGP\mid\dataset$. The acquisition function can be defined autoregressively (meaning that $\fGP_{\dataset}$ acts as the prior model in the current step) according to
    \begin{align*}
    \MI(I_{\measure}(\fGP_{\dataset}); \rvf_{\tilde{\points}})
  &= \Exp_{\rvf_{\tilde{\points}}}\big[\KL{ I_{\measure}(\fGP_{\dataset})\mid \tilde{\dataset}}{ I_{\measure}(\fGP_{\dataset})}\big], \\      
    \IVR(I_{\measure}(\fGP_{\dataset}); I_{\measure}(\fGP_{\dataset})\mid\tilde{\dataset})
    &= s^{-1}_{\dataset}(s_{\dataset} - \Var[ I_\measure(\fGP_{\dataset}) \mid \tilde{\dataset} ]), \\
    \NIV(I_{\measure}(\fGP_{\dataset})\mid\tilde{\dataset})
    & =  - s^{-1}_{\dataset}\Var[ I_\measure(\fGP_{\dataset}) \mid \tilde{\dataset} ], 
\end{align*}
with  $s_{\dataset} = \Var[ I_\measure(\fGP_{\dataset})]$ again a constant with respect to $\tilde{\points}$ that only serves to normalize the acquisition functions. The acquisition functions are myopic by definition as they do not consider the impact of future steps. For the conjugate model the autoregressive squared correlation is
\begin{align*}
  \label{eq:rho2-seq}
  \rho^2(\tilde{\vx}_1,\dots, \tilde{\vx}_n)
  &= (\vk_{\dataset; \measure\tilde{\points}}\Trans \mK_{\dataset;\tilde{\points}\tilde{\points}}^{-1}\vk_{\dataset; \measure\tilde{\points}})/\Sigma_{\dataset},
\end{align*}
where $\vk_{\dataset; \measure\tilde{\points}} = I_\measure(\vk_{\dataset; \tilde{\points}}) = ( \kernel_{\dataset; \measure}(\tilde{\vx}_1), \ldots \kernel_{\dataset; \measure}(\tilde{\vx}_n))$ is the kernel mean embedding of the posterior kernel function as in Eq.~\eqref{eq:gp-posterior-covariance} as a function of the $n$ hypothetical nodes, $\mK_{\dataset;\tilde{\points}\tilde{\points}}$ is the corresponding Gram matrix, and $\Sigma_\dataset$ is the integral variance in Eq.\@~\eqref{eq:bq-var}.
For the prevalent case $n=1$, the sequential squared correlation simplifies to
\begin{equation*}
  \rho^2(\tilde{\vx}) = \frac{\kernel_{\dataset;\measure}(\tilde{\vx})^2}{\Sigma_{\dataset}\kernel_{\dataset}(\tilde{\vx}, \tilde{\vx})}.
\end{equation*}

We mention two acquisition functions that are only indirectly related to the integral. \emph{Uncertainty sampling} (US), popular in experimental design, is defined for $n=1$ only by
  \begin{align*}
    \US(\fGP_{\dataset})
     \coloneqq \Exp_{\fGP}\big[
      (
      \fGP_{\dataset}(\tilde{\vx}) - \Exp_{\fGP_{\dataset}}[\fGP_{\dataset}(\tilde{\vx}) ]
      )^2p^2(\tilde{\vx})
      \big]
    =  \Var[\fGP_{\dataset}(\tilde{\vx})]p^2(\tilde{\vx}).
  \end{align*}
For the conjugate model we get $ \US(\fGP_{\dataset}) = k_{\dataset}(\tilde{\vx}, \tilde{\vx})p^2(\tilde{\vx})$.
Uncertainty sampling yields nodes at locations where the variance of $\fGP_{\dataset}$ and the density of the integration measure $p(\tilde{\vx})$ are both large.
By focusing on the variance of the model $\fGP$ instead of its integral $I_{\measure}(\fGP)$, US in a sense disregards the integral as the quantity of interest and may hence be more tractable or accessible than many other acquisition functions.
Uncertainty sampling is for example used in \citet{Gunter2014} and in a related (non-active) context in \citet{Adachi2022}.
The \emph{posterior variance contribution} (PVC) acquisition function \citep{WeiZhangBeer2020}
  \begin{align*}
   \PVC(\fGP_{\dataset}) \coloneqq \kernel_{\dataset; \measure}(\tilde{\vx}) p(\tilde{\vx})
  \end{align*}
  is a heuristic defined only for the conjugate model.
  Like US, $\PVC$ yields nodes likely to contribute much to the integral variance, but ignores spatial correlations between candidate nodes.
One can also proceed to the opposite direction and use an acquisition function developed for Bayesian quadrature to select evaluation locations for a purpose other than integration~\citep[e.g.,][]{Paul2018,Adachi2023,Adachi2023a,Adachi2024}.

Implementations of sequential active sampling loops and BQ acquisition functions for $n=1$ can for example be found in the Python library \emukit~\citep{Paleyes19, Paleyes2023} which we use in the experiments in Chapter~\ref{sec:experiments}.

Recall from \Cref{sec:conn-appr-theory} that BQ is equivalent to worst-case optimal integration in the RKHS $\rkhs(\kernel)$ of the covariance function.
This and related equivalences allow expressing all acquisition functions above as quantities involving different worst-case errors.
For example, we saw that the MI, IVR and NIV acquisition functions yield nodes that maximize the squared correlation in Eq.\@~\eqref{eq:rho2}
Equivalently, they minimize the integral variance $\Sigma_{\tilde{\dataset}} = \kernel_{\measure\measure} - \vk_{\measure\tilde{\points}}\Trans \mK_{\tilde{\points}\tilde{\points}}^{-1} \vk_{\measure\tilde{\points}}$.
By Proposition~\ref{prop:wce}, for given nodes the integral variance equals the smallest possible squared worst-case error in $\rkhs(\kernel)$ obtainable by a quadrature rule using these nodes.
Nodes $\vx_1, \ldots, \vx_N$ sampled by MI, IVR or NIV are thus worst-case optimal:
\begin{equation} \label{eq:obq-nodes}
  \begin{split}
  \vx_1, \ldots, \vx_N \in \argmin_{\tilde{\vx}_1, \ldots, \tilde{\vx}_N} \, \min_{ \vv \in \R^N} e_\kernel(\vv, \tilde{\points}),
  \end{split}
\end{equation}
where
\begin{equation*}
  e_\kernel(\vv, \tilde{\points}) = \sup_{ \norm[0]{g}_{\rkhs(k)} \leq 1} \, \abs[3]{\int_\domain g(\vx) \dif \measure(\vx) - \sum_{i=1}^N v_i g(\tilde{\vx}_i)} .
\end{equation*}
These nodes provide maximal information for integration in $\rkhs(\kernel)$.
Accordingly, \emph{optimal} Bayesian quadrature is sometimes used to refer to BQ with the nodes in Eq.\@~\eqref{eq:obq-nodes}.
These nodes are called optimal also in numerical analysis and approximation theory, where the corresponding worst-case error is known as the $N$th minimal error; see Chapters~2 and~3 in \citet{NovakWozniakowski2008} and Chapter~5 in \citet{Oettershagen2017}.

\section{Kernel parameter estimation}
\label{sec:param-estimation}

In Bayesian inference one is confronted with the challenging task of selecting an appropriate prior.
While other options exist, in Gaussian process modelling the prevalent approach is to select a parametrised kernel, such as the square exponential in~\eqref{eq:se-kernel} or a Matérn kernel in~\eqref{eq:matern-kernel} with a fixed smoothness parameter $\nu$, and estimate its parameters from the data~\citep[Ch.\@~5]{RasmussenWilliams2006}.
Most Bayesian quadrature methods use maximum likelihood estimation to set the kernel parameters.
Marginalisation and cross-validation are also occasionally used~\citep{Briol2019, Rathinavel2019}.
As our experiments in \Cref{sec:experiments} use maximum likelihood estimation, we briefly review it in the context of conjugate Bayesian quadrature; that is, when $\fGP = \gGP \sim \GP(m, k_\hparams)$ for a kernel with parameters $\hparams$.
As the conjugate Bayesian quadrature integral variance in~\eqref{eq:kernel-var} does not directly depend on the integrand evaluations $\vf_\points$, estimation of the kernel or its parameters from the data is essential if the variance is to measure integration uncertainty.

Let $\hparams$ be the kernel parameters and let the zeroth parameter $\hparam_0 = \sigma^2$ denote the kernel scale.
Let $\dataset$ be the dataset consisting of the nodes $\points = \{ \vx_1, \ldots, \vx_N \}$ and corresponding observations $\vf_{\points}$.
\emph{Maximum likelihood} (ML) estimation is a popular way to fit kernel parameters.
The ML estimate of $\hparams$ is 
\begin{equation*}
  \begin{split}
  \hat{\hparams}_\textup{ML}
  = \argmax_{\hparams} L(\hparams; \dataset)
  &= \argmax_{\hparams} p(\vf_{\points}\mid \points, \hparams),
  \end{split}
\end{equation*}
where $L(\hparams; \dataset)$ is the likelihood of the parameters.
From a probabilistic perspective, the ML estimator maximizes the probability $ p(\vf_{\points}\mid \points, \hparams)$ that the given dataset is observed under the marginalized model $\fGP$. 
Equivalently, the estimator $\hat{\hparams}_{ML}$ maximizes the log-likelihood $\log L(\hparams; \dataset)$.
For the conjugate BQ model the logarithm admits the closed form
\begin{equation}
  \label{eq:hyper-likelihood}
  \begin{split}
  \log L(\hparams; \dataset)
  ={}& -\frac{1}{2}(\vf_{\points} - \vm_{\points})\Trans \mK_{\points\points}^{-1}(\hparams)(\vf_{\points} - \vm_{\points})
  \\
  &-\frac{1}{2}\log{\det(\mK_{\points\points}(\hparams))} -\frac{N}{2}\log{2\pi},
  \end{split}
\end{equation}
where $\det(\mK_{\points\points}(\hparams))$ is the determinant of the Gram matrix $\mK_{\points\points}(\hparams)$ and the dependence on $\hparams$ has been made explicit. Combining the two equations yields
\begin{equation}
  \label{eq:hyper-ml}
  \begin{split}
  \hat{\hparams}_\textup{ML}
  ={}& \argmax_{\hparams} \bigg\{
   -\frac{1}{2}(\vf_{\points} - \vm_{\points})\Trans \mK_{\points\points}^{-1}(\hparams)(\vf_{\points} - \vm_{\points})
  \\
  &\hspace{2cm}-\frac{1}{2}\log{\det(\mK_{\points\points}(\hparams))} \bigg\}.  
  \end{split}
\end{equation}
In general, an analytic solution of Eq.~\eqref{eq:hyper-ml} only exists for the scale parameter
\begin{equation*}
  \hat{\sigma}^2_\textup{ML} = \frac{1}{N}(\vf_{\points} - \vm_{\points})\Trans \mK_{\points\points}^{-1}(\hparams_0)(\vf_{\points} - \vm_{\points}),
\end{equation*}
where $\hparams_0 = (\theta_0 = 1, \theta_1, \ldots)$, so that $\mK_{\points\points}(\hparams_0)$ is the unscaled kernel Gram matrix. 
If all parameters $\hparams$ need to be estimated, Eq.~\eqref{eq:hyper-ml} can be solved by an off-the-shelf optimizer as gradients of $ \log L(\hparams; \dataset)$ with respect to $\hparams$ are usually available.

The kernel parameters $\hparams$ are also called \emph{hyper-parameters} to distinguish them from the (possibly infinite) parameters of the model $\fGP$. Thus, maximizing  $L(\hparams; \dataset)$ is also called  \emph{maximum likelihood type~II} (instead of the standard type~I). Other common descriptors are \emph{maximum marginal likelihood} [since  $ p(\vf_{\points}\mid \points, \hparams)$ describes a marginal probability], or \emph{evidence maximization} [as $p(\vf_{\points}\mid \points, \hparams)$ is the evidence of the probabilistic model].

As $\hat{\hparams}_\textup{ML}$ is an estimator based on a limited dataset $\dataset$ overfitting can occur.
As a consequence, BQ methods with sequential active node selection as in \Cref{alg:bq-seq,alg:active-sampl} that compute the ML-estimate $\hat{\hparams}_\textup{ML}$ at every step anew require initial designs of appropriate size that are independent of the model $\fGP$. This is conceptually analogous to initial designs that are required by sequential active experimental design algorithms and Bayesian optimization.

\section{Numerical ill-conditioning}
\label{sec:num-ill-cond}

In practically all variants of Bayesian quadrature the computation of the posterior $I_\measure(\fGP) \mid \dataset$ and the fitting of the kernel parameters require one to solve---repeatedly in the case of parameter fitting---the linear system $\mK_{\points\points} \vz = \va$ for some vector $\va \in \R^N$.
Unless the kernel or the nodes have properties that can be exploited to simplify this task, one has to use some standard linear solver, the accuracy of which depends on floating-point accuracy and the magnitude of the condition number of the kernel Gram matrix,
\begin{equation*}
  \mathrm{cond}(\mK_{\points\points}) = \frac{\lambda_\textup{max}(\mK_{\points\points})}{\lambda_\textup{min}(\mK_{\points\points})} .
\end{equation*}
Here $\lambda_\textup{max}(\mA)$ and $\lambda_\textup{min}(\mA)$ denote the maximal and minimal eigenvalues of a symmetric positive-definite matrix $\mA$.
When the condition number is large, solving the linear system is prone to numerical errors and hence numerically unstable.
The condition numbers of commonly used kernels can grow fast when the nodes fill the domain.
Let $\domain = [0, 1]^d$ and
\begin{equation*}
  h = \min_{i = 1, \ldots, N } \, \sup_{ \vx \in [0, 1]^d } \, \min_{j \neq i} \norm[0]{\vx - \vx_j}, 
\end{equation*}
which quantifies how well the nodes fill the domain, being the radius of the largest open ball in $[0, 1]^d$ that intersects no subset of $\points$ containing $N-1$ nodes.
This quantity is closely related to the fill-distance that we shall use in \Cref{sec:guarantees}; see Eq.\@~\eqref{eq:fill-distance}.
The following theorem that links $h$ to $\mathrm{cond}(\mK_{\points\points})$ may be found in~\citet{Schaback1995}.
See also \citet[Ch.\@~12]{Wendland2005}, \citet{Bachoc2016}, and \citet{Diederichs2019}.
The general principle is that the smoother the kernel is, the faster the condition number grows as $h$ tends to zero, which is to say that the nodes fill the domain.

\begin{theorem}
  \label{thm:condition-number}
  Let $\domain = [0, 1]^d$.
If $\kernel$ is a Matérn kernel in~\eqref{eq:matern-kernel} with smoothness $\nu > 0$, then
  \begin{equation*}
    \mathrm{cond}(\mK_{\points\points}) \geq C h^{-\nu} ,
  \end{equation*}
  where $C > 0$ depends only on $\sigma$, $\ell$ and $\nu$.
If $\kernel$ is a square exponential kernel in~\eqref{eq:se-kernel}, then there is a constant $c > 0$, which depends only on $\sigma$ and $\ell$, such that
\begin{equation*}
  \mathrm{cond}(\mK_{\points\points}) \geq \exp(- c \, h \log h ).
  \end{equation*}
\end{theorem}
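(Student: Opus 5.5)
The plan is to bound the two eigenvalues in $\mathrm{cond}(\mK_{\points\points}) = \lambda_\textup{max}/\lambda_\textup{min}$ separately. The upper bound on $\lambda_\textup{min}$ comes from the power function of kernel interpolation, which is what the quantity $h$ is tailored to.

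\emph{Largest eigenvalue.} Take the Rayleigh quotient with a unit coordinate vector $\vectorbm{e}_i$. This gives
\begin{equation*}
\lambda_\textup{max}(\mK_{\points\points}) \geq \vectorbm{e}_i\Trans \mK_{\points\points} \vectorbm{e}_i = \kernel(\vx_i, \vx_i) = \sigma^2
\end{equation*}
for both kernel families, since both are stationary with $\Phi_\kernel(\bm{0}) = \sigma^2$. It therefore suffices to show $\lambda_\textup{min}(\mK_{\points\points}) \leq C' h^{\nu}$ in the Matérn case and $\lambda_\textup{min}(\mK_{\points\points}) \leq \sigma^2 \exp(c\, h \log h)$ in the square exponential case.

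\emph{Reduction of $\lambda_\textup{min}$ to a power function.} For a symmetric positive-definite matrix, $\lambda_\textup{min} = 1/\lambda_\textup{max}(\mK_{\points\points}^{-1}) \leq 1/(\mK_{\points\points}^{-1})_{ii}$ for every $i$. By the Schur complement formula,
\begin{equation*}
1/(\mK_{\points\points}^{-1})_{ii} = \kernel(\vx_i,\vx_i) - \vk_{\points_i}(\vx_i)\Trans \mK_{\points_i\points_i}^{-1}\vk_{\points_i}(\vx_i), \qquad \points_i = \points \setminus \{\vx_i\}.
\end{equation*}
The right-hand side is the GP posterior variance~\eqref{eq:gp-posterior-covariance} at $\vx_i$ given the other $N-1$ nodes, i.e.\ the squared power function $P_{\points_i}(\vx_i)^2$. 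Hence $\lambda_\textup{min} \leq \sup_{\vx \in [0,1]^d} P_{\points_i}(\vx)^2$ for every $i$. Choose $i$ attaining the minimum in the definition of $h$. Then the fill distance of $\points_i$ in $[0,1]^d$ is exactly $h$, which explains why $h$ is defined as it is.

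\emph{Power function estimates.} These are the main analytic input, and I would cite them from the scattered data literature rather than reprove them (\citet[Ch.\@~11]{Wendland2005}). On a domain satisfying an interior cone condition such as $[0,1]^d$, and for fill distance $h$ below a threshold depending on $\ell$ and $d$, they state:
\begin{itemize}
\item for a kernel whose RKHS is norm-equivalent to a Sobolev space of order $\tau = \nu + d/2$, as for Matérn, $\sup_\vx P(\vx)^2 \leq C h^{2\tau - d} = C h^{2\nu}$;
\item for the square exponential (Gaussian) kernel, $\sup_\vx P(\vx)^2 \leq \exp(c' \log(h)/h)$.
\end{itemize}

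\emph{Conclusion.} In the Matérn case, $h \leq \sqrt d$ (note $[0,1]^d$ has diameter $\sqrt d$), so $h^{-2\nu} \geq d^{-\nu/2} h^{-\nu}$. The bound $\mathrm{cond} \geq (\sigma^2/C) h^{-2\nu}$ then implies the stated $C h^{-\nu}$. For $h$ above the threshold, the bound follows trivially from $\mathrm{cond} \geq 1$ after shrinking the constant. In the square exponential case, $\exp(-c' \log(h)/h) \geq \exp(-c\, h \log h)$ for small $h$. The stated bound is in fact much weaker than what the argument delivers, and large $h$ is handled by adjusting $c$ and using $\mathrm{cond}\geq 1$.

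The main obstacle is bookkeeping of constants. The power function bounds need $h$ small relative to $\ell$, and their constants depend on $\sigma,\ell,\nu$ and the cone condition of $[0,1]^d$. One must check that the small-$h$ threshold and the trivial regime patch together into a single constant depending only on $(\sigma,\ell,\nu)$, with the dimension $d$ being fixed.
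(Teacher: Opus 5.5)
Your route is Schaback's uncertainty principle, which is the argument in \citet{Schaback1995} that the paper cites instead of proving the theorem. You bound $\lambda_\textup{max}\ge\sigma^2$, then $\lambda_\textup{min}\le 1/(\mK_{\points\points}^{-1})_{ii}=P_{\points_i}(\vx_i)^2$, then apply power-function estimates in the fill distance $h$ of $\points_i$. The Matérn case is complete. The passage from $h^{-2\nu}$ to $h^{-\nu}$ via $h\le\sqrt d$ works, and so does the large-$h$ patch, since $C h^{-\nu}\le C h_0^{-\nu}\le 1$ for $h\ge h_0$ once $C\le h_0^{\nu}$. The step that fails is the analogous patch for the square exponential kernel. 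Let $h_0$ be the threshold of the Gaussian power-function estimate. For $h_0\le h<1$ the target $\exp(-c\,h\log h)=\exp(c\,h\lvert\log h\rvert)$ is strictly larger than $1$ for every $c>0$. So no adjustment of $c$ lets you conclude from $\mathrm{cond}\ge 1$. The same obstruction arises for the presumably intended bound $\exp(-c\,h^{-1}\log h)$.

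What is missing is a uniform lower bound on the condition number strictly above one, and your own reduction supplies it. Let $N\ge 2$ and $j\neq i$. Since $\{\vx_j\}\subseteq\points_i$, the posterior variance at $\vx_i$ given $\points_i$ is at most that given $\vx_j$ alone, so
\begin{equation*}
\lambda_\textup{min}(\mK_{\points\points}) \le P_{\points_i}(\vx_i)^2 \le \kernel(\vx_i,\vx_i)-\frac{\kernel(\vx_i,\vx_j)^2}{\kernel(\vx_j,\vx_j)} = \sigma^2\big(1-e^{-\lVert\vx_i-\vx_j\rVert^2/\ell^2}\big) \le \sigma^2\big(1-e^{-d/\ell^2}\big).
\end{equation*}
Hence $\mathrm{cond}(\mK_{\points\points})\ge (1-e^{-d/\ell^2})^{-1}>1$. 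Because $h\lvert\log h\rvert\le 1/e$ on $(0,1)$ and $\exp(-c\,h\log h)\le 1$ for $h\ge 1$, taking $c\le e\log\big(1/(1-e^{-d/\ell^2})\big)$ proves the square exponential claim as literally stated. No power-function estimate is needed for that. Your Gaussian power-function bound matters only for the stronger version $\exp(-c\,h^{-1}\log h)$. There the range $[h_0,1)$ is closed by the same uniform bound, since $\lvert\log h\rvert/h\le\lvert\log h_0\rvert/h_0$ on that range.
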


The assumption $\domain = [0, 1]^d$ can be relaxed as described in Remark~\ref{rmk:domain-generalisations}.
Numerical ill-conditioning issues are likewise present in Gaussian process interpolation and kriging, where they are usually dealt with by introducing a parameter $\lambda > 0$ called the \emph{nugget} (alternatively, \emph{regularisation parameter} or \emph{jitter}) and replacing each occurrence of the Gram matrix $\mK_{\points\points}$ with $\mK_{\points\points} + \lambda \Id_N$, where $\Id_N$ is the $N \times N$ identity matrix~(\citealp{Neil1999}; \citealp[Sec.\@~3.4.3]{RasmussenWilliams2006}; \citealp{Ranjan2011, AdrianakisChallenor2012}).
\citet{GramacyLee2012} are strongly in favour of nuggets.
If the kernel is stationary and $\kernel(\vx, \vx) = 1$ for every $\vx \in \domain$, it is straightforward to compute that
\begin{equation*}
  \mathrm{cond}( \mK_{\points\points} + \lambda \Id_N) = \frac{\lambda_\textup{max}(\mK_{\points\points}) + \lambda}{\lambda_\textup{min}(\mK_{\points\points}) + \lambda} \leq \frac{N + \lambda}{\lambda},
\end{equation*}
which means that the condition number may increase at most linearly in the number of nodes.
In the context of conjugate Bayesian quadrature, the use of a nugget corresponds to replacing the mean~$\mu_\dataset$ and variance~$\Sigma_\dataset$ in~\eqref{eq:bq-mean} and~\eqref{eq:bq-var} with their regularised versions
\begin{align}  
  \mu_{\dataset, \lambda} &= m_\measure + I_\measure(\vk_\points)\Trans (\mK_{\points\points} + \lambda \Id_N  )^{-1} ( \vf_\points - \vm_\points), \label{eq:bq-mean-regularised} \\ 
  \Sigma_{\dataset, \lambda} &= \kernel_{\measure\measure} - \vk_{\measure\points}\Trans (\mK_{\points\points} + \lambda \Id_N)^{-1} \vk_{\measure\points}. \nonumber
\end{align}
The nugget can be seen as an additional hyper-parameter and fitted alongside other parameters as described in Section~\ref{sec:param-estimation}.

Theorem~\ref{thm:condition-number} shows that nuggets are necessary when $N$ is sufficiently large.
However, in \Cref{sec:guarantees} we shall see that the nugget has a limiting effect on the accuracy of conjugate Bayesian quadrature, which suggests that its use ought to be avoided when very high accuracy is being sought.
That is, there is a trade-off between numerical stability and accuracy known as \emph{Schaback's uncertainty principle}~\citep{Schaback1995}.\footnote{This principle only applies when one uses the basis $\{\kernel(\cdot, \vx_i)\}_{i=1}^N$ of kernel translates. Numerically more stable bases exist~\citep[Sec.\@~7.1]{MullerSchaback2009, Fasshauer2011}.}
Whether to use a nugget in Bayesian quadrature therefore depends on the modelling choices, the desired accuracy of the particular method, and the required level of code robustness.
Most GP libraries hardcode a minimal \emph{default nugget} that may depend on the type (at the time of writing \gpy~uses $\lambda = 10^{-8}$ while GPtorch uses $\lambda = 10^{-6}$ for floats and $\lambda = 10^{-8}$ for doubles) plus possibly a \emph{dynamic nugget} that depends on the Gram matrix $\mK_{\points\points}$ and is increased successively if solving the linear system fails.
\Cref{sec:exp-code} describes how nuggets are used in our experiments.
\\

\section{Prior mean estimation}
\label{sec:prior-mean-estimation}

The prevalent approach in the literature on BQ is to use a zero-mean prior $\gGP \sim \GP(0, \kernel)$.
However, like the kernel or its parameters, also the prior mean $\mean$ can be estimated from the data.
Suppose that we are in the conjugate setting described in Section~\ref{sec:conj-bq} where $\fGP = \gGP$.
The prior mean is typically estimated by postulating a parametric form
\begin{equation*}
  m(\vx) = \sum_{i=1}^q \beta_i \, g_i(\vx)
\end{equation*}
for $q \in \N$ fixed basis functions $g_i \colon \domain \to \R$ and coefficients $\vbeta = (\beta_1, \ldots, \beta_q) \in \R^q$ which are assigned some prior.\footnote{The approach that we describe is closely related to ordinary and universal kriging~\citep[Ch.\@~3]{Cressie1993}.}
A common approach is to place a non-informative prior $p(\vbeta, \sigma^2) \propto \sigma^{-2}$ on $\vbeta$ and the kernel scale parameter $\sigma^2$ and subsequently marginalize out these parameters~\citep{OHagan1991}.
Doing so yields an integral posterior which is Student's $t$ with $N - q$ degrees of freedom.
The mean $\tilde{\mu}_\dataset$ and variance $\tilde{\Sigma}_\dataset$ of the posterior have closed form expressions similar to those of $\mu_\dataset$ and $\Sigma_\dataset$ in Section~\ref{sec:conj-bq}, but they also involve evaluations of the basis functions $g_i$ at the nodes and their integrals~\citep[Sec.\@~2.2]{OHagan1991}.

When only an unknown constant trend is postulated (i.e., $m \equiv \beta_1$) the posterior mean is $\tilde{\mu}_\dataset = \vw_{\mathbf{1}}\Trans \vf_\points$, where
\begin{equation}
  \label{eq:normalised-bq-mean}
  \vw_{\mathbf{1}} = \big( \mI_N -  \frac{\mK_{\points\points}^{-1} \mathbf{1}_N \mathbf{1}_N\Trans}{\mathbf{1}_N\Trans \mK_{\points\points}^{-1} \mathbf{1}_N } \Big) \mK_{\points\points}^{-1} \vk_{\measure\points} + \frac{\mK_{\points\points}^{-1} \mathbf{1}_N\Trans}{\mathbf{1}_N\Trans \mK_{\points\points}^{-1} \mathbf{1}_N }
\end{equation}
and $\mathbf{1}_N$ is the $N$-vector of ones, and one obtains \emph{normalized} Bayesian quadrature.
The terminology is motivated by the weights $\vw_{\mathbf{1}}$ in~\eqref{eq:normalised-bq-mean} summing up to one, which does not generally happen in conjugate Bayesian quadrature with a fixed prior mean.
Weight normalization is particularly useful as a precaution against poorly set length-scale in high dimensions.
If the length-scale $\ell$ of a stationary kernel, such as square exponential in~\eqref{eq:se-kernel} or Matérn in~\eqref{eq:matern-kernel}, is very small relative to the typical distance $\norm[0]{\vx_i - \vx_j}$ of the nodes, the covariance matrix is almost diagonal and $\vw_{\mathbf{1}} \approx (\frac{1}{N}, \ldots, \frac{1}{N})$.
In contrast, the unnormalized weights in~\eqref{eq:conj-bq-mean-var} are very close to zero in this case, which means that the posterior is close to the prior.

Prior mean estimation can be alternatively achieved via the use of a flat prior limit~(\citealp{Karvonen2018}; we have already seen this technique in action in Proposition~\ref{prop:bayes-sard}) or maximum likelihood estimation~\citep{Rathinavel2019}.
When maximum likelihood estimation is used to set the covariance scale parameter $\sigma^2$ (see Section~\ref{sec:param-estimation}), the resulting posteriors, except for being Gaussians, differ little from the Student's $t$ posterior described above; many relevant posteriors are collected in \citet[Ch.\@~4]{Santner2003}.
In addition to the works already mentioned, prior mean estimation or normalized Bayesian cubature has been used in \citet{OHagan1992, Kennedy1998, PronzatoZhigljavsky2020}.
We also refer to \citet{Bezhaev1991} and \citet{DeVore2019} for non-probabilistic versions, whose relation to Bayesian quadrature has been described in Section~\ref{sec:conn-appr-theory}, of prior mean estimation.
Finally, we note that the theoretical guarantees available in the literature usually assume that the prior mean is fixed.

\section{Main limitations}
\label{sec:limitations}

Several factors that limit the applicability of Bayesian quadrature emerge from the practical issues reviewed in \Cref{sec:kern-mean-var,sec:sampling,sec:param-estimation,sec:num-ill-cond,sec:prior-mean-estimation}. The main limitations that have restricted Bayesian quadrature to a relatively narrow range of applications are (i) the need to have access to kernel means and variances and (ii) the cubic computational complexity in the number of nodes associated to Gaussian process methods. Mainly due to these two challenges a typical application of Bayesian quadrature features a small number of evaluations of a low-dimensional integrand and a simple integration measure.

\subsection{Kernel means and variances}

That the kernel mean $\kernel_\measure$ and variance $\kernel_{\measure\measure}$ are available in closed forms for a limited collection of kernel-measure pairs is a severe limitation.
Most Bayesian quadrature applications involve integration with respect to the uniform or Gaussian measure, and sometimes Bayesian quadrature is interpreted in a narrow sense to mean an algorithm or a class of algorithms for integration with respect to Gaussian measures.
In this respect Bayesian quadrature resembles Gaussian quadratures and other classical approaches that are designed for integration with respect to particularly tractable measures.

While \citet{Oates2017a} have used Dirichlet process mixtures to extend Bayesian quadrature for intractable integration distributions that can only be sampled from, their approach has not caught on and is bound to sacrifice many desirable properties of Bayesian quadrature, such as a straightforward theoretical guarantees.
An approach that is in principle general and powerful is to use the measure to design a kernel for which $\kernel_\measure$ and $\kernel_{\measure\measure}$ are easy to compute.
\citet{Oates2017b} and \citet{South2021} use Stein's method to design such kernels.
However, this simply transfers the trouble from computing integrals to computing a kernel and results in a model that is not easy to interpret.
We believe that no satisfactory solution can exist that would obviate the need to compute kernel means or other typically intractable integrals.
To construct a quadrature rule that encodes some properties of the integrand one needs access to some functions that effect the encoding.
When $N$ is small (e.g., the integrand is a computationally expensive simulation), it may in some cases make sense to approximate $\kernel_\measure$ and $\kernel_{\measure\measure}$ with, for example, Monte Carlo, if doing so has negligible computational cost in comparison to the integrand evaluations.

\subsection{Cubic computational complexity}

Computational cost is the second major limitation.
As discussed in \Cref{sec:num-ill-cond}, to implement Bayesian quadrature one has to solve linear systems defined by the typically dense $N \times N$ kernel matrix $\mK_{\points\points}$.
The cubic computational complexity in the number of datapoints that this gives rise to is a well-known computational bottleneck in all Gaussian process methods.
Over the past 20 years much effort has gone into the development of computationally scalable Gaussian process methods~\citep[for a recent review, see][]{LiuOngShen2020}.
These methods are predominantly \emph{approximate} in the sense of the definition given in \Cref{sec:tax-inf} and developed under the assumption that the data are noisy, so that the linear systems that need to be solved are defined by the regularised kernel matrix $\mK_{\points\points} + \lambda \Id_N$ rather than $\mK_{\points\points}$.
Because the nugget $\lambda$ is typically either zero or very small in Bayesian quadrature, many scalable Gaussian process methods cannot be imported to Bayesian quadrature as such.
For example, methods based on low-rank approximations of the kernel matrix require a positive nugget.
It is nevertheless surprising that no systematic study is available on the feasibility of applying existing approaches to scalable Gaussian process regression to Bayesian quadrature.

There exist a number of scalable approaches specifically to Bayesian quadrature (see the \emph{scalable} label in \Cref{sec:tax-inf}).
While some have been highly successful, all are tied to certain measure-kernel pairs and do not generalise easily or at all.
Except for some special cases~\citep[e.g.,][]{Rathinavel2019}, Bayesian quadrature is presently not a serious competitor to Monte Carlo, quasi-Monte Carlo, and related approaches to numerical integration in high dimensions or in other situations in which a large number of integrand evaluations are necessary.\footnote{This is not because Bayesian quadrature converges slower than the alternatives but because constants hidden by asymptotic notation tend to grow with dimension and to implement Bayesian quadrature one must solve a linear system. Note that, despites its classical dimension-independent convergence rate $\mathcal{O}(N^{-1/2})$, even Monte Carlo suffers from the curse of dimensionality in this sense~\citep{Tang2024}.}
We direct the reader to \citet[Sec.\@~42.3]{pnbook22} for an assessment of computational challenges that is in alignment with ours.

  Because they incorporate prior information about the object being modelled, Bayesian methods can be expected to perform better than other methods when the amount of data is limited, a setting in which solving linear systems is not a computational problem.
  This has resulted in a marked emphasis on applications in which integrand evaluations are expensive~\citep{Kennedy2000, Briol2019}.
  For example, \citet[Sec.\@~1.2]{Briol2019} ``focus on numerical integrals where the cost of evaluating the integrand forms a computational bottleneck, and for which numerical error is hence more likely to be non-negligible.''
  Model selection naturally plays all the more important a role in the low-data regime in which non-conjugate Bayesian quadrature methods that would become intractable even for moderately large datasets may shine.

\chapter{Theoretical guarantees}
\label{sec:guarantees}

This chapter contains a number of mathematical results that provide theoretical guarantees on the accuracy of conjugate Bayesian quadrature under certain assumptions on the covariance function $\kernel$, the integrand $\f$ and the coverage of the domain~$\domain$ by the nodes.
We shall work with a sequence $(\points_N)_{N=1}^\infty$ of sets $\points_N = \{ \vx_1, \ldots, \vx_N\} \subset D$ consisting of $N$ pairwise distinct nodes.
In spite of our notation, the node sets need not be nested (i.e., despite both being denoted $\vx_1$, the first nodes in $\points_N$ and $\points_{N+1}$ need not be equal).
To further simplify notation, we often replace subscripted $\points_N$ and $\dataset$ with $N$ to emphasise the dependency on the number of nodes.
For example, under this convention $\mu_N$ stands for $\mu_\dataset$ in~\eqref{eq:bq-mean} with $X = X_N$ and $\mu_{N,\lambda}$ for its regularized version in~\eqref{eq:bq-mean-regularised}.
Most theoretical guarantees we give state that
\begin{equation} \label{eq:generic-guarantee}
  \abs[0]{ I_P(\f) - \mu_{N, \lambda} } \leq C( r_N + \sqrt{\lambda} \, ) 
\end{equation}
for a positive constant $C$ and a decreasing positive sequence $(r_N)_{N=1}^\infty$ which tends to zero.
The rate with which $r_N$ tends to zero reflects the smoothness of $\kernel$ and $\f$.
A large collection of theoretical guarantees for Bayesian quadrature may be found in references such as \citet{Kanagawa2016, Briol2019, Karvonen2019a, Kanagawa2019, Karvonen2020b, Kanagawa2020}.
Proofs are relegated to \Cref{sec:proofs}, which also contains a more detailed comparison of our results to those in the literature.

We prioritize simplicity and ease of presentation over generality.
Accordingly, throughout this section we assume that the prior mean $\mean$ is identically zero, an assumption that could be relaxed to $\mean$ being an element of the RKHS of $\kernel$.
Furthermore, in most results of this section we make use of the following assumption on the domain and the integration distribution.

\begin{assumption}
  \label{assumption:theory}
  Assume that (i) $\domain = [0, 1]^d \subset \R^d$ and (ii) the probability measure $\measure$ has a density function $p$ such that $p_\textup{max} = \sup_{\vx \in [0, 1]^d} p(\vx) < \infty$.
\end{assumption}

\begin{remark} 
\label{rmk:domain-generalisations}
Basic translation and scaling arguments can be used to allow for any hyperrectangular domain $[a_1, b_1] \times \cdots \times [a_d, b_d]$.
More generally, the results below hold when $\domain$ satisfies an \emph{interior cone condition} and has a \emph{Lipschitz boundary}.
See \citet[Sec.\@~3]{Kanagawa2020} or \citet[App.\@~A]{Wynne2021} for details in the context of Gaussian processes.
\end{remark}

\section{Guarantees for generic nodes}
\label{sec:conv-rates}

We first provide guarantees for \emph{generic} nodes, by which we mean that no reference is made to the process by which the nodes are obtained and upper bounds of the form~\eqref{eq:generic-guarantee} are given in terms of the computable \emph{fill-distance} $h_\points$ and \emph{separation radius} $q_\points$, which are defined as
\begin{equation}
  \label{eq:fill-distance}
  h_{\points} = \sup_{ \vx \in [0, 1]^d } \, \min_{i = 1,\ldots, N} \norm[0]{ \vx - \vx_i} \quad \text{ and } \quad q_\points = \frac{1}{2} \min_{ i \neq j } \norm[0]{ \vx_i - \vx_j }
\end{equation}
for the domain $\domain = [0, 1]^d$.
Fill-distance equals the radius of the largest ball in $[0, 1]^d$ that does not contain any of the points in $\points$ while the separation radius is half the smallest distance between any two points in $\points$.
Because the fill-distance and separation radius are, at least in principle, computable for any set of nodes, error bounds expressed in terms of these quantities apply to all relevant Bayesian quadrature methods regardless of how the nodes have been sampled (\Cref{sec:tax-sampling}).
Choices made in the sampling axis of taxonomy affect the bounds through fill-distance and separation radius.

If, for a sequence of node sets $(\points_N)_{N=1}^\infty$, the fill-distance and separation radius are asymptotically comparable as $N \to \infty$, the point design is said to be \emph{quasi-uniform}.
This means that there exists a constant $c_\textup{qu} > 0$ such that
\begin{equation*}
  q_N \leq h_N \leq c_\textup{qu} \, q_N
\end{equation*}
for every $N \geq 1$.
This implies that there are positive constants $c_1$ and $c_2$ such that
\begin{equation} \label{eq:quasi-uniformity-rate}
  c_1 N^{-1/d} \leq h_N \leq c_2 N^{-1/d} \quad \text{ and } \quad c_1 N^{-1/d} \leq q_N \leq c_2 N^{-1/d}
\end{equation}
for every $N \geq 1$~\citep[Sec.\@~14.1]{Wendland2005}.
Regular grids are quasi-uniform.
For example, the equispaced nodes $X_N = \{1/N, 2/N, \ldots, (N-1)/N, 1\}$ on $\domain = [0, 1]$ have $q_N = h_N = 1/N$ for every $N \geq 1$.
How the fill-distance and certain related quantities behave for random points is discussed in \citet{KriegSonnleitner2022}.

We begin with two generic propositions that guarantee the convergence of Bayesian quadrature under very general conditions.
Recall from Section~\ref{sec:integration-in-rkhs} that $\rkhs(\kernel)$ is the RKHS of $\kernel$ on the domain $\domain$.
The RKHS consists of certain functions defined on $\domain$.

\begin{proposition}
  \label{prop:generic-guarantee-1}
  Suppose that Assumption~\ref{assumption:theory} holds, the function $k(\cdot, \vx)$ is continuous for every $\vx \in \domain$, and $\sup_{\vx \in \domain} k(\vx, \vx) < \infty$.
  If $f \in \rkhs(\kernel)$, the node sets are nested (i.e., $\points_N \subset \points_{N+1}$) and $h_N \to 0$ as $N \to \infty$, then
  \begin{equation*}
    \abs[0]{ I_P(f) - \mu_{N} } \to 0 \quad \text{ as } \quad N \to \infty.
  \end{equation*}  
\end{proposition}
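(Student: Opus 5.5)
By the error estimate~\eqref{eq:rkhs-error-bound}, $\abs[0]{I_\measure(\f) - \mu_N} \leq \norm[0]{\f}_{\rkhs(\kernel)} \Sigma_N^{1/2}$, and by Proposition~\ref{prop:wce} we have $\Sigma_N = \min_{\vv} e_k(\vv, \points_N)^2 = \norm[0]{\kernel_\measure - \Pi_N \kernel_\measure}_{\rkhs(\kernel)}^2$. Here $\Pi_N$ is the $\rkhs(\kernel)$-orthogonal projection onto $V_N = \operatorname{span}\{\kernel(\cdot, \vx) : \vx \in \points_N\}$. This identification follows from~\eqref{eq:wce-explicit}: minimising the RKHS norm of $\kernel_\measure - \sum_i v_i \kernel(\cdot,\vx_i)$ over $\vv$ is exactly the best approximation of $\kernel_\measure$ from $V_N$. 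The assumption $\sup_\vx k(\vx,\vx)<\infty$ together with Assumption~\ref{assumption:theory} ensures the hypotheses of Lemma~\ref{lemma:wce}, since $k(\cdot,\vx)$ is continuous and hence measurable. So it suffices to show that $\Sigma_N \to 0$.

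Nestedness gives $V_N \subseteq V_{N+1}$, so $\Sigma_N$ is non-increasing (this is Proposition~\ref{prop:monotone-var}). The projections $\Pi_N$ then converge strongly to the orthogonal projection $\Pi_\infty$ onto $V_\infty = \overline{\bigcup_N V_N}$. Hence $\Sigma_N \to \norm[0]{\kernel_\measure - \Pi_\infty \kernel_\measure}^2_{\rkhs(\kernel)}$, and the task reduces to showing $V_\infty = \rkhs(\kernel)$, or at least $\kernel_\measure \in V_\infty$.

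\textbf{Main step: density of $V_\infty$.} Take $g \in V_\infty^\perp$. By the reproducing property, $g(\vx) = \langle g, \kernel(\cdot,\vx)\rangle = 0$ for every $\vx \in S = \bigcup_N \points_N$. Because the sets are nested and $h_N \to 0$, $S$ is dense in $[0,1]^d$: any $\vx$ lies within $h_N$ of a point of $\points_N \subseteq S$. It remains to show that $g$ is continuous. For this I would use
\begin{equation*}
\abs[0]{g(\vx)-g(\vy)} \leq \norm[0]{g}_{\rkhs(\kernel)} \bigl(k(\vx,\vx) - 2k(\vx,\vy) + k(\vy,\vy)\bigr)^{1/2}.
\end{equation*}
The right-hand side tends to zero as $\vy \to \vx$, because $k(\cdot,\vx)$ and $k(\cdot,\vy)$ are continuous. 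The only delicate point is the term $k(\vy,\vy)$, which needs joint continuity along the diagonal, and separate continuity alone does not obviously give this.

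I would handle it by proving weak rather than norm continuity. Since $\sup_\vx k(\vx,\vx) < \infty$, the feature map $\vx \mapsto \kernel(\cdot,\vx)$ is bounded in $\rkhs(\kernel)$. Its inner products against the dense set $\operatorname{span}\{\kernel(\cdot,\vz)\}$ are continuous in $\vx$ by the separate continuity assumption. Boundedness plus convergence on a dense set gives weak continuity of the feature map. Consequently every $g \in \rkhs(\kernel)$ is continuous, being $g(\vx) = \langle g, \kernel(\cdot,\vx)\rangle$.

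A continuous function that vanishes on a dense set vanishes identically, so $g = 0$ and $V_\infty = \rkhs(\kernel)$. Therefore $\Sigma_N \to 0$, and so $\abs[0]{I_\measure(\f)-\mu_N}\to 0$.
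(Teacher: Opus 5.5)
Your proof is correct, but it takes a dual route to the paper's. The paper works with $f$ itself. It notes that the interpolants $m_N$ (which are the orthogonal projections $\Pi_N f$) have norms that are non-decreasing and bounded by $\lVert f\rVert_{\rkhs(\kernel)}$. It then deduces from $\lVert m_{N'} - m_N\rVert^2 = \lVert m_{N'}\rVert^2 - \lVert m_N\rVert^2$ that the sequence is Cauchy. The limit is identified as $f$ using continuity of RKHS functions (cited from Steinwart--Christmann) and density of the nodes. Finally, the integration error is bounded by $\sup_{\vx}\lvert f(\vx) - m_N(\vx)\rvert$.

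You instead apply the projection structure to the representer $\kernel_\measure$. You invoke strong convergence of nested projections and prove density of $\bigcup_N V_N$ via its annihilator. Since $I_\measure(f) - \mu_N = \langle f - \Pi_N f, \kernel_\measure\rangle = \langle f, \kernel_\measure - \Pi_N \kernel_\measure\rangle$, the two arguments are adjoint views of the same fact, namely $\Pi_N \to \mathrm{id}$ strongly on $\rkhs(\kernel)$.

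Your packaging concludes $\Sigma_N \to 0$. So the convergence is uniform over the unit ball of $\rkhs(\kernel)$, and the reported posterior variance itself vanishes. Your continuity argument is also self-contained. You are right that separate continuity does not give continuity of $\vx \mapsto k(\vx,\vx)$. A slightly shorter fix than weak continuity: elements of the span of translates are continuous, and $\sup_{\vx}\lvert g(\vx)\rvert \le \sup_{\vx}\sqrt{k(\vx,\vx)}\,\lVert g\rVert_{\rkhs(\kernel)}$. Hence every $g$ is a uniform limit of continuous functions, which is exactly the lemma the paper cites.

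The paper's packaging instead yields uniform convergence of the posterior mean function $m_N \to f$ on $\domain$, not just of its integral. It also proves the Cauchy property by hand rather than appealing to the abstract projection theorem.
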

\begin{proof}
  See \Cref{sec:proofs-conv-rates}.
\end{proof}

Proposition~\ref{prop:generic-guarantee-1} guarantees that the integral mean of Bayesian quadrature tends to the true integral if the point design is space-filling in the sense that $h_N \to 0$.
However, the proposition gives us no inkling on the rate of this convergence.
The next proposition ensures that \emph{at least} the familiar square-root rate of Monte Carlo can be attained.
That is, conjugate Bayesian quadrature can always beat Monte Carlo.

\begin{proposition}
  \label{prop:generic-guarantee-2}
  Suppose that Assumption~\ref{assumption:theory} holds and the nodes are independent samples from $\measure$.
  If $f \in \rkhs(\kernel)$, then there is $C > 0$, which depends only on $k$ and $P$, such that
  \begin{equation} \label{eq:BQ-square-root-rate}
    \mathbb{E}\big[ \abs[0]{ I_P(f) - \mu_{N} } \big] \leq C N^{-1/2}
  \end{equation}
  for every $N \geq 1$, where the expectation is with respect to the random nodes.
\end{proposition}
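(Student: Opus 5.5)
The argument combines the RKHS error bound~\eqref{eq:rkhs-error-bound} with the worst-case optimality of the Bayesian quadrature weights from Proposition~\ref{prop:wce}. Since $m \equiv 0$ and $f \in \rkhs(\kernel)$, we have $\abs[0]{I_P(f) - \mu_N} \leq \norm[0]{f}_{\rkhs(\kernel)} \Sigma_N^{1/2}$. Proposition~\ref{prop:wce} then gives
\begin{equation*}
  \Sigma_N^{1/2} = \min_{\vv \in \R^N} e_k(\vv, \points_N) \leq e_k(\vv_\textup{MC}, \points_N),
\end{equation*}
where $\vv_\textup{MC} = (1/N, \ldots, 1/N)$ are the Monte Carlo weights. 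The task therefore reduces to bounding the expected worst-case error of plain Monte Carlo with i.i.d.\ nodes from $\measure$. Jensen's inequality handles the move from the first to the second moment: $\mathbb{E}[e_k(\vv_\textup{MC}, \points_N)] \leq (\mathbb{E}[e_k(\vv_\textup{MC}, \points_N)^2])^{1/2}$.

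Next I would compute the second moment from Lemma~\ref{lemma:wce}:
\begin{equation*}
  e_k(\vv_\textup{MC}, \points_N)^2 = k_{\measure\measure} - \frac{2}{N}\sum_{i=1}^N k_\measure(\vx_i) + \frac{1}{N^2}\sum_{i,j=1}^N k(\vx_i, \vx_j).
\end{equation*}
Taking expectations over i.i.d.\ $\vx_i \sim \measure$ uses $\mathbb{E}[k_\measure(\vx_i)] = k_{\measure\measure}$ and, for $i \neq j$, $\mathbb{E}[k(\vx_i, \vx_j)] = k_{\measure\measure}$. The diagonal terms contribute $\mathbb{E}[k(\vx_i, \vx_i)] = \int_\domain k(\vx,\vx)\dif\measure(\vx)$. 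Collecting terms gives
\begin{equation*}
  \mathbb{E}[e_k^2] = \frac{1}{N}\Big(\int_\domain k(\vx,\vx)\dif\measure(\vx) - k_{\measure\measure}\Big) \leq \frac{1}{N}\int_\domain k(\vx,\vx)\dif\measure(\vx).
\end{equation*}
The final inequality holds because $k_{\measure\measure} = \norm[0]{k_\measure}_{\rkhs(\kernel)}^2 \geq 0$.

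Combining the pieces yields~\eqref{eq:BQ-square-root-rate} with
\begin{equation*}
  C = \norm[0]{f}_{\rkhs(\kernel)} \Big(\int_\domain k(\vx,\vx)\dif\measure(\vx)\Big)^{1/2}.
\end{equation*}
This constant depends on $f$ only through its RKHS norm, which is implicit in the statement.

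The main obstacle is justifying finiteness and well-definedness rather than the algebra. We need $\int k(\vx,\vx)\dif\measure(\vx) < \infty$ (and hence the integrability required by Lemma~\ref{lemma:wce}) and measurability so that expectations and Fubini apply. I would take this as part of the standing assumptions from Section~\ref{sec:bq-construction}, strengthened to square-integrability of $\sqrt{k(\vx,\vx)}$. For example, it holds when $\sup_{\vx} k(\vx,\vx) < \infty$, as for stationary kernels, using $p_\textup{max} < \infty$ from Assumption~\ref{assumption:theory}. I would also note that the Gram matrix is almost surely invertible under this sampling, so $\mu_N$ is defined. Even if it were not, the minimum over $\vv$ in Proposition~\ref{prop:wce} still exists, so the bound via $\vv_\textup{MC}$ is unaffected.
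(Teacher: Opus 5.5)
Your proposal is correct and follows essentially the same route as the paper. You bound $\Sigma_N$ by the squared worst-case error of the equal weights $(1/N,\ldots,1/N)$, compute its expectation over i.i.d.\ nodes as $\frac{1}{N}\bigl(\int_\domain k(\vx,\vx)\dif\measure(\vx) - k_{\measure\measure}\bigr)$, and conclude via~\eqref{eq:rkhs-error-bound}. You also make explicit three things the paper leaves implicit: the Jensen step $\mathbb{E}[\Sigma_N^{1/2}] \leq (\mathbb{E}[\Sigma_N])^{1/2}$, the dependence of $C$ on $\norm[0]{f}_{\rkhs(\kernel)}$, and the requirement $\int_\domain k(\vx,\vx)\dif\measure(\vx) < \infty$.
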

\begin{proof}
  See \Cref{sec:proofs-conv-rates}.
\end{proof}

Propositions~\ref{prop:generic-guarantee-1} and~\ref{prop:generic-guarantee-2} provide slow rates because they make few assumptions about the integrand, covariance function, and the nodes.
The rest of this section shows how smoothness assumptions on $\kernel$ and $\f$ improve the square-root rate in~\eqref{eq:BQ-square-root-rate}.

\subsection{Isotropic Matérn kernels}
\label{sec:guarantees-isotropic-matern}

The \emph{Sobolev space} $H^\alpha([0, 1]^d)$ of order $\alpha \in \N$ is a Hilbert space that consist of those functions $g$ for which the \emph{weak derivative} $\mathrm{D}^{\vn} g$ exists and is square-integrable for every non-negative multi-index such that $\abs[0]{\vn} = n_1 + \cdots + n_d \leq \alpha$.
That is, $g \in H^\alpha([0, 1]^d)$ if and only if
\begin{equation} \label{eq:sobolev-norm-weak-derivative}
  \bigg( \sum_{ \abs[0]{\vn} \leq \alpha} \int_{[0, 1]^d} [ \mathrm{D}^{\vn} g(\vx) ]^2 \dif \vx \bigg)^{1/2} < \infty.
\end{equation}
We use the Fourier transform to define the norm $\lVert \cdot \rVert_{H^\alpha([0,1]^d)}$ of $H^\alpha([0,1]^d)$; see~\eqref{eq:sobolev-norm-fourier} in \Cref{sec:sobolev-spaces}.
The resulting norm is equivalent to the norm defined by~\eqref{eq:sobolev-norm-weak-derivative}.
Any function whose classical derivatives up to order $\alpha$ exist and are square-integrable on $[0, 1]^d$ is an element of $H^\alpha([0, 1]^d)$.
Conversely, if $\alpha > r + d/2$ for $r \in \N$, then all functions in $H^\alpha([0, 1]^d)$ are $r$ times differentiable in the classical sense.
In particular, all functions in $H^\alpha([0, 1]^d)$ are continuous if $\alpha > d/2$.
Sobolev spaces $H^\alpha([0, 1]^d)$ of \emph{fractional smoothness} $\alpha > 0$ may be easily defined via Fourier transforms; details are again given in Section~\ref{sec:sobolev-spaces}.
Theorems~\ref{thm:matern-generic} and~\ref{thm:matern-quasi-uniform} provide theoretical guarantees for isotropic Matérn kernels based on the fill-distance.
For the purposes of these theorems, define the \emph{mesh ratio}
\begin{equation} \label{eq:mesh-ratio}
  \rho_\points = \frac{h_\points}{q_\points} \geq 1.
\end{equation}

\begin{theorem}[Generic nodes for Matérns]
  \label{thm:matern-generic}
  Suppose that Assumption~\ref{assumption:theory} holds and let $\kernel$ be a Matérn kernel~\eqref{eq:matern-kernel} with smoothness parameter $\nu > 0$.
  Let $\alpha = \nu + d/2 \geq \beta > d/2$.
  If $\f \in H^{\beta}([0, 1]^d)$ and $\lambda \geq 0$, then there is a positive constant $C$, which does not depend on $f$, $X$, $\lambda$, $\ell$ or $\sigma$, such that
  \begin{equation} \label{eq:matern-generic-bound}
    \begin{split}
    \lvert I_P(\f&) - \mu_{\dataset, \lambda} \rvert \\
    &\leq C\Big( c_{1,\ell,\nu} h_X^{\beta} \rho_X^{\alpha-\beta} + \frac{c_{2,\ell,\nu}}{\sigma \ell^{d/2}} \sqrt{\lambda} \, h_X^{d/2} q_X^{-(\alpha-\beta)} \Big) \norm[0]{\f}_{H^\beta([0,1]^d)},
    \end{split}
  \end{equation}
  where $c_{1, \ell, \nu} = \max\{ \ell / \sqrt{2\nu}, \sqrt{2\nu} / \ell \}$ and $c_{2,\ell,\nu} = \max\{1, \ell/\sqrt{2\nu}\}$. 
\end{theorem}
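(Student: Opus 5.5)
The plan is to reduce the integration error to an $L^1$ interpolation error and then use sampling inequalities for Sobolev functions, keeping track of how $\sigma$ and $\ell$ enter. Write $s_\lambda = \vk_\points(\cdot)\Trans(\mK_{\points\points}+\lambda\Id_N)^{-1}\vf_\points$ for the regularised kernel interpolant. Since $\mu_{\dataset,\lambda} = I_\measure(s_\lambda)$, Assumption~\ref{assumption:theory} gives
\begin{equation*}
  \lvert I_\measure(\f) - \mu_{\dataset,\lambda}\rvert \leq p_\textup{max}\,\norm[0]{\f - s_\lambda}_{L^1([0,1]^d)} \leq p_\textup{max}\,\norm[0]{\f - s_\lambda}_{L^2([0,1]^d)}.
\end{equation*}
It therefore suffices to bound the $L^2$ error of regularised interpolation of a function that is rougher than the RKHS.

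The main tool is a sampling inequality (Narcowich--Ward--Wendland, Arcangéli et al., Wendland Ch.~11). For $u\in H^\beta([0,1]^d)$ with $\beta>d/2$ and $h_\points$ small enough, it states that
\begin{equation*}
  \norm[0]{u}_{L^2} \leq C\big(h_\points^\beta \norm[0]{u}_{H^\beta} + h_\points^{d/2}\norm[0]{u|_\points}_{\infty}\big).
\end{equation*}
If $h_\points$ is large, the bound is trivial after adjusting $C$. I would apply this to $u = \f - s_\lambda$. The remaining work is to bound $\norm[0]{s_\lambda}_{H^\beta}$ and the residuals $\f(\vx_i)-s_\lambda(\vx_i)$.

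For the residuals, the standard regularised estimate is $\lvert \f(\vx_i)-s_\lambda(\vx_i)\rvert \leq \sqrt{\lambda}\,\norm[0]{s_{\f,\points}}_{\rkhs(k)}$. It comes from the optimisation characterisation $s_\lambda = \argmin_s \sum_i (\f(\vx_i)-s(\vx_i))^2 + \lambda\norm[0]{s}^2_{\rkhs(k)}$, compared against the exact interpolant $s_{\f,\points}$. The same argument gives $\norm[0]{s_\lambda}_{\rkhs(k)}\leq\norm[0]{s_{\f,\points}}_{\rkhs(k)}$. Since $\f\notin\rkhs(k)$ in general, I would control $\norm[0]{s_{\f,\points}}_{\rkhs(k)}$ by the Narcowich--Ward--Wendland band-limited argument. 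One introduces a band-limited function $\f_\eta\in H^\alpha$ with $\eta\sim q_\points^{-1}$ that agrees with $\f$ on $\points$. This $\f_\eta$ satisfies $\norm[0]{\f_\eta}_{H^\alpha}\lesssim q_\points^{-(\alpha-\beta)}\norm[0]{\f}_{H^\beta}$ and $\norm[0]{\f_\eta}_{H^\beta}\lesssim\norm[0]{\f}_{H^\beta}$, where an extension operator from $[0,1]^d$ to $\R^d$ is used first. Then $\norm[0]{s_{\f,\points}}_{\rkhs(k)}\leq\norm[0]{\f_\eta}_{\rkhs(k)}$ by the minimum-norm property. The $H^\beta$ norm of $s_\lambda$ is handled with the Bernstein-type inequality $\norm[0]{s}_{H^\alpha}\lesssim q_\points^{-(\alpha-\beta)}\norm[0]{s}_{H^\beta}$, or more simply by bounding $\norm[0]{s_\lambda}_{H^\beta}\leq\norm[0]{s_\lambda}_{H^\alpha}$ together with the $\f_\eta$ comparison. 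This produces the $h^\beta\rho^{\alpha-\beta}$ term via $h^\beta\cdot h^{\alpha-\beta}q^{-(\alpha-\beta)}$.

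The step I expect to be hardest is the explicit dependence on $\sigma$ and $\ell$ through norm equivalence between $\rkhs(k)$ and $H^\alpha$. The Matérn spectral density is proportional to $\sigma^2\ell^{d}(2\nu/\ell^2+\lVert\omega\rVert^2)^{-\alpha}$ up to constants depending only on $\nu,d$. This yields
\begin{equation*}
  \norm[0]{g}^2_{\rkhs(k)} \asymp \sigma^{-2}\ell^{-d}\int (2\nu/\ell^2+\lVert\omega\rVert^2)^{\alpha}\lvert\hat g(\omega)\rvert^2\dif\omega,
\end{equation*}
and comparing $(2\nu/\ell^2+\lVert\omega\rVert^2)^\alpha$ with $(1+\lVert\omega\rVert^2)^\alpha$ gives factors $\max\{1,\ell/\sqrt{2\nu}\}$ and $\max\{\ell/\sqrt{2\nu},\sqrt{2\nu}/\ell\}$ raised to suitable powers. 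In the $\sqrt{\lambda}$ term, $\sigma^{-1}\ell^{-d/2}$ enters through $\norm[0]{\f_\eta}_{\rkhs(k)}$. In the first term, the scale factors cancel between $s_\lambda$ and the Sobolev norms, leaving $c_{1,\ell,\nu}$. Getting the exponents to collapse exactly to the stated $c_{1,\ell,\nu}$ and $c_{2,\ell,\nu}$ will require careful bookkeeping, possibly absorbing powers into $C$ after restricting the powers via $\alpha-\beta\le\nu$. Combining the pieces gives~\eqref{eq:matern-generic-bound}.
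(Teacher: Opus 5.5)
\paragraph{Verdict: there is a genuine gap, in the first term of the bound.}
You apply the sampling inequality with $\kappa=\beta$ to $u = f - s_\lambda$. You then bound $\lVert u\rVert_{H^\beta}\le\lVert f\rVert_{H^\beta}+\lVert s_\lambda\rVert_{H^\beta}$, using $\lVert s_\lambda\rVert_{H^\beta}\le\lVert s_\lambda\rVert_{H^\alpha}\lesssim C_k^{-1}C_k' q_X^{-(\alpha-\beta)}\lVert f\rVert_{H^\beta}$.

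This chain only yields $h_X^{\beta}q_X^{-(\alpha-\beta)}$. That is larger than the claimed $h_X^\beta\rho_X^{\alpha-\beta} = h_X^{\alpha}q_X^{-(\alpha-\beta)}$ by the factor $h_X^{-(\alpha-\beta)}$. For quasi-uniform nodes it gives $N^{-(2\beta-\alpha)/d}$ instead of $N^{-\beta/d}$, and no convergence at all when $\alpha\ge 2\beta$.

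The factor $h_X^{\alpha-\beta}$ you invoke has no source in your argument. The Bernstein inequality cannot supply it either, because it bounds $\lVert s\rVert_{H^\alpha}$ by $\lVert s\rVert_{H^\beta}$, not the other way round.

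The missing idea is that $h_X^{\alpha-\beta}$ can only come from a sampling inequality with $\kappa=\alpha$, applied to a function that lies in $H^\alpha$; $f - s_\lambda$ does not. The paper therefore splits $f - m_{\dataset,\lambda} = (f - m_\dataset) + (m_\dataset - m_{\dataset,\lambda})$.
\begin{itemize}
\item The first piece vanishes on $X$. The $\kappa=\beta$ inequality gives $h_X^\beta\lVert f - m_\dataset\rVert_{H^\beta}$. The Narcowich--Ward--Wendland estimate $\lVert f - m_\dataset\rVert_{H^\beta}\lesssim C_k^{-1}C_k'\rho_X^{\alpha-\beta}\lVert f\rVert_{H^\beta}$, itself proved with $\tau=\beta$ and $\kappa=\alpha$, then supplies the $\rho_X$.
\item The second piece is in $H^\alpha$. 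The $\kappa=\alpha$ inequality gives $h_X^{\alpha}\lVert m_\dataset\rVert_{H^\alpha}\lesssim C_k^{-1}C_k' h_X^{\alpha}q_X^{-(\alpha-\beta)}\lVert f\rVert_{H^\beta}$, plus the data term.
\end{itemize}
With your own ingredients the repair is direct. Split instead $f - s_\lambda = (f - f_\eta) + (f_\eta - s_\lambda)$. Use $\kappa=\beta$ on the first piece, which is zero on $X$ and has $H^\beta$ norm $\lesssim\lVert f\rVert_{H^\beta}$. Use $\kappa=\alpha$ on the second, with $\lVert f_\eta - s_\lambda\rVert_{H^\alpha}\lesssim C_k^{-1}C_k' q_X^{-(\alpha-\beta)}\lVert f\rVert_{H^\beta}$ from your minimum-norm comparison.

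\paragraph{The sampling inequality you state is false.}
Your version has data term $h_X^{d/2}\lVert u|_X\rVert_\infty$. For $u\equiv 1$ the left-hand side is $1$, while the right-hand side is $O(h_X^\beta + h_X^{d/2})\to 0$.

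There are two correct forms:
\begin{itemize}
\item data term $h_X^{d/2}(\sum_i u(\vx_i)^2)^{1/2}$, which is Arcang\'eli et al.\ with $p=q=2$, as used in the paper;
\item data term $\lVert u|_X\rVert_\infty$ with no power of $h_X$.
\end{itemize}
Only the first produces the $h_X^{d/2}$ in the nugget term. It should be paired with the $\ell^2$ residual bound $(\sum_i [f(\vx_i) - s_\lambda(\vx_i)]^2)^{1/2}\le\sqrt{\lambda}\,\lVert s_{f,X}\rVert_{\rkhs(\kernel)}$, which is exactly what your least-squares comparison delivers.

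\paragraph{On the constants.}
The paper does not redo the spectral computation. It quotes explicit norm-equivalence constants $C_k$ and $C_k'$, and reads off $c_{1,\ell,\nu}$ from $C_k^{-1}C_k'$ and $c_{2,\ell,\nu}/(\sigma\ell^{d/2})$ from $C_k'$. Absorbing $\ell$-dependent powers into $C$ is not available to you, since $C$ must not depend on $\ell$.
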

\begin{proof}
  See \Cref{sec:proofs-isotropic-matern}.
\end{proof}

Theorem~\ref{thm:matern-generic} provides an error estimate for any given set of nodes $\points$ in terms of $h_\points$, $q_\points$, and $\rho_\points$.
Using~\eqref{eq:quasi-uniformity-rate} to control the fill-distance and separation radius and making sure that constants that depend on kernel parameters do not blow up yields the following variant of Theorem~\ref{thm:matern-generic}, which applies to a \emph{sequence} of node sets $(\points_N)_{N=1}^\infty$ produced by a hypothetical Bayesian quadrature method that is run indefinitely without a stopping criterion.

\begin{theorem}[Quasi-uniform nodes for Matérns]
  \label{thm:matern-quasi-uniform}
  Suppose that Assumption~\ref{assumption:theory} holds and let $(X_N)_{N=1}^\infty$ be a sequence of quasi-uniform node sets in $[0, 1]^d$.
  Let $\kernel$ be a Matérn kernel~\eqref{eq:matern-kernel} with fixed smoothness parameter $\nu > 0$ and scale and length-scale parameters $\sigma_N$ and $\ell_N$ which may vary with $N$ but satisfy $\sigma_N \geq \sigma_\textup{min}$ and $\ell_\textup{min} \leq \ell_N \leq \ell_\textup{max}$ for some positive $\sigma_\textup{min}$, $\ell_\textup{min}$ and $\ell_\textup{max}$.
  Let $\alpha = \nu + d/2 \geq \beta > d/2$.
  If $\f \in H^{\beta}([0, 1]^d)$ and $\lambda = \lambda_N \geq 0$, then there is a positive constant $C$, which does not depend on $f$, $N$ or $\lambda_N$, such that
  \begin{equation} \label{eq:matern-quasi-uniform-bound}
    \begin{split}
    \lvert I_P(\f) - {}& \mu_{N, \lambda_N} \rvert \\
    & \quad\leq C\big( N^{-\beta/d} + \sqrt{\lambda_N} N^{-1/2} N^{(\alpha-\beta)/d} \big) \norm[0]{\f}_{H^\beta([0,1]^d)}
    \end{split}
  \end{equation}
  for every $N \geq 1$.
\end{theorem}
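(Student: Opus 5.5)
The plan is to derive Theorem~\ref{thm:matern-quasi-uniform} directly from Theorem~\ref{thm:matern-generic}, applied with $X = X_N$, $\sigma = \sigma_N$, $\ell = \ell_N$ and $\lambda = \lambda_N$. Two things must then be controlled uniformly in $N$. The first is the kernel-parameter constants $c_{1,\ell_N,\nu}$, $c_{2,\ell_N,\nu}$ and $1/(\sigma_N \ell_N^{d/2})$. The second is the geometric quantities $h_{X_N}$, $q_{X_N}$ and $\rho_{X_N}$, which quasi-uniformity handles. The constant $C$ in \eqref{eq:matern-generic-bound} is independent of $f$, $X$, $\lambda$, $\ell$ and $\sigma$, so it can be reused unchanged for every $N$. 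Any new constant we introduce will depend only on $\nu$, $d$, $\sigma_\textup{min}$, $\ell_\textup{min}$, $\ell_\textup{max}$ and the quasi-uniformity constants.

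For the parameter constants, the map $\ell \mapsto \max\{\ell/\sqrt{2\nu}, \sqrt{2\nu}/\ell\}$ is continuous on $[\ell_\textup{min}, \ell_\textup{max}]$, hence bounded there. Concretely,
\begin{equation*}
c_{1,\ell_N,\nu} \leq \max\{\ell_\textup{max}/\sqrt{2\nu}, \sqrt{2\nu}/\ell_\textup{min}\}
\quad\text{and}\quad
c_{2,\ell_N,\nu} \leq \max\{1, \ell_\textup{max}/\sqrt{2\nu}\}.
\end{equation*}
For the remaining factor, $\sigma_N \ell_N^{d/2} \geq \sigma_\textup{min}\ell_\textup{min}^{d/2}$, so its reciprocal is bounded by a constant independent of $N$.

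For the geometry, quasi-uniformity gives $\rho_{X_N} = h_N/q_N \leq c_\textup{qu}$. Since $\alpha - \beta \geq 0$, this yields $\rho_{X_N}^{\alpha-\beta} \leq c_\textup{qu}^{\alpha-\beta}$. From \eqref{eq:quasi-uniformity-rate} we have $h_N \leq c_2 N^{-1/d}$, and therefore $h_N^\beta \leq c_2^\beta N^{-\beta/d}$. Similarly, $h_N^{d/2} \leq c_2^{d/2} N^{-1/2}$. Using the lower bound $q_N \geq c_1 N^{-1/d}$ together with $\alpha - \beta \geq 0$, we get $q_N^{-(\alpha-\beta)} \leq c_1^{-(\alpha-\beta)} N^{(\alpha-\beta)/d}$.

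Substituting these bounds into \eqref{eq:matern-generic-bound} gives \eqref{eq:matern-quasi-uniform-bound} with an enlarged constant. The only point requiring care, and the step I would check most carefully, is that \eqref{eq:quasi-uniformity-rate} genuinely follows from quasi-uniformity on $[0,1]^d$. This holds by a standard volume argument. The balls of radius $q_N$ around the nodes are disjoint, which gives $q_N \lesssim N^{-1/d}$. The balls of radius $h_N$ cover the cube, which gives $h_N \gtrsim N^{-1/d}$. Combining both with $q_N \leq h_N \leq c_\textup{qu} q_N$ yields the two-sided bounds. Since the paper cites this fact from Wendland, I would simply invoke it. Beyond that, the argument is bookkeeping, and the exponent signs $\alpha - \beta \geq 0$ ensure all monotonicity directions are correct.
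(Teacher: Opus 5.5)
Your proposal is correct and matches the paper's proof: apply Theorem~\ref{thm:matern-generic} to $X_N$, control $h_N$, $q_N$ and $\rho_N$ through quasi-uniformity and \eqref{eq:quasi-uniformity-rate}, and absorb the uniformly bounded factors $c_{1,\ell_N,\nu}$ and $c_{2,\ell_N,\nu}/(\sigma_N \ell_N^{d/2})$ into $C$. Your explicit bounds and the volume argument for \eqref{eq:quasi-uniformity-rate} fill in details that the paper leaves implicit or cites.
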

\begin{proof}
It follows from~\eqref{eq:quasi-uniformity-rate} that both $h_N$ and $q_N$ are of order $N^{-1/d}$ and that $\rho_N \geq 1$ is bounded from above.
The claim follows by inserting these rates and bounds in Theorem~\ref{thm:matern-generic} and using the assumed bounds on the scale and length-scale parameters to bound the constants $c_{1, \ell, \nu}$ and $c_{2,\ell,\nu} / (\sigma \ell^{d/2})$ and incorporate them into $C$.
\end{proof}

Theorems~\ref{thm:matern-generic} and~\ref{thm:matern-quasi-uniform} comprise two somewhat different cases.
\emph{Firstly}, if $\beta = \alpha$, then $f$ is an element of the RKHS of $\kernel$ (see Section~\ref{sec:integration-in-rkhs}).
In this case the terms $\smash{\rho_X^{\alpha-\beta}}$ and $\smash{q_X^{-(\alpha-\beta)}}$ in~\eqref{eq:matern-generic-bound} and $\smash{N^{(\alpha-\beta)/d}}$ in~\eqref{eq:matern-quasi-uniform-bound}, which slow down convergence, vanish and the bounds depend on the nodes only via the fill-distance.
The theorems do not yield improved rates of convergence if $f \in H^\gamma([0, 1]^d)$ for $\gamma > \alpha$ (i.e., when the model \emph{undersmooths}).
Since $H^\gamma([0, 1]^d) \subset H^\alpha([0, 1]^d)$, in this case one simply applies the theorems with $\beta = \alpha$.\footnote{However, bounds of order $h_\points^{2\alpha}$ or $N^{-2\alpha}$ are available if $f$ has smoothness $2\alpha$ and satisfies certain boundary conditions~\citep{Schaback1999, Schaback2018, TuoWangWu2020, Karvonen2020b, SloanKaarnioja2023, KarvonenSantinWenzel2025}. This case is known as \emph{superconvergence} or \emph{improved rate} of kernel-based interpolation.}
\emph{Secondly}, if $\beta < \alpha$ and $f \notin H^\alpha([0, 1]^d)$, then $\f$ is not in $\mathcal{H}(\kernel)$, so that the model \emph{oversmooths}.
The bound~\eqref{eq:rkhs-error-bound} links integration error to the BQ variance but holds only for functions in the RKHS. This gives rise to the distinction between the cases $\f \in \mathcal{H}(\kernel)$ and $\f \notin \mathcal{H}(k)$.

When $\lambda_N = 0$, Theorem~\ref{thm:matern-quasi-uniform} gives the rate of convergence $N^{-\beta/d}$ that is worst-case optimal in the Sobolev space $H^\beta([0,1]^d)$~\citep{NovakTriebel2006}.
Bayesian quadrature therefore \emph{adapts to misspecified smoothness}, being capable of attaining the optimal rate of convergence even when $\alpha > \beta$. This is convenient, as eliciting the smoothness of $\f$ tends to be difficult.
As evidenced by the term $N^{-1/2 + (\alpha-\beta)/d}$, for a positive nugget the rate given by Theorem~\ref{thm:matern-quasi-uniform} depends on the degree of smoothness misspecification, $\alpha - \beta \geq 0$.
In particular, if $\alpha \geq \beta + d/2$, Theorem~\ref{thm:matern-quasi-uniform} does not guarantee convergence unless $\lambda_N$ tends to zero sufficiently fast.
This may be due to sub-optimality of some estimates the proof uses.

For a machine learning practitioner or a statistician, randomly sampled nodes are undoubtedly the most relevant ones.
The following theorem, which has not appeared in the context of Bayesian quadrature before, guarantees that for Matérns the expected error decays with the worst-case optimal rate.

\begin{theorem}[Random nodes for Matérns] 
\label{thm:matern-random}
  Suppose that $\measure$ is uniform on $\domain = [0, 1]^d$.
  Let $\kernel$ be a Matérn kernel~\eqref{eq:matern-kernel} with fixed smoothness parameter $\nu > 0$ and scale and length-scale parameters $\sigma_N$ and $\ell_N$ which may vary with $N$ but satisfy $\sigma_N \geq \sigma_\textup{min}$ and $\ell_\textup{min} \leq \ell_N \leq \ell_\textup{max}$ for some positive $\sigma_\textup{min}$, $\ell_\textup{min}$ and $\ell_\textup{max}$.
  If $\f \in H^\alpha([0, 1]^d)$ for $\alpha = \nu + d/2$ and the nodes are sampled independently from $\measure$, then there is a positive constant $C$, which does not depend on $f$ or $N$, such that
  \begin{equation*}
    \mathbb{E}\big[ \abs[0]{ I_P(\f) - \mu_{N} } \big] \leq C N^{-\alpha / d}
  \end{equation*}
  for every $N \geq 1$. The expectation is with respect to the random nodes.
\end{theorem}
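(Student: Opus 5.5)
We bound the expected error by combining the RKHS error bound~\eqref{eq:rkhs-error-bound} with the generic fill-distance estimate. Since $\alpha = \nu + d/2$, the RKHS $\rkhs(\kernel)$ coincides with $H^\alpha([0,1]^d)$ up to equivalent norms, and we use Theorem~\ref{thm:matern-generic} with $\beta = \alpha$ and $\lambda = 0$. In this case the mesh-ratio factor $\rho_X^{\alpha-\beta}$ equals one, and we obtain the deterministic bound
\begin{equation*}
  \lvert I_P(\f) - \mu_N \rvert \leq C c_{1,\ell_N,\nu} \, h_{X_N}^{\alpha} \norm[0]{\f}_{H^\alpha([0,1]^d)},
\end{equation*}
valid for every realisation of the nodes. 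The assumed bounds $\ell_\textup{min} \leq \ell_N \leq \ell_\textup{max}$ make $c_{1,\ell_N,\nu}$ uniformly bounded, so these constants can be absorbed into $C$. In addition, $h_{X_N} \leq \sqrt{d}$ always holds because $\domain = [0,1]^d$. The whole problem therefore reduces to showing
\begin{equation*}
  \mathbb{E}\big[ h_{X_N}^\alpha \big] \lesssim N^{-\alpha/d}
\end{equation*}
for $N$ i.i.d.\ uniform points in $[0,1]^d$.

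Bounding this moment is the main obstacle. The naive approach fails: the expected fill-distance of random points is of order $(\log N / N)^{1/d}$, not $N^{-1/d}$, so the fill-distance bound loses a logarithmic factor. I would handle this in one of two ways.

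\emph{Option (a): localise the error.} The preferred route avoids bounding $h_{X_N}$ directly. Instead we use a localised error estimate: the integration error is at most $\int_\domain \lvert f - s_{f,X_N}\rvert \, p$, and the local power function bound gives
\begin{equation*}
  \lvert f(\vx) - s_{f,X_N}(\vx)\rvert \lesssim \norm{f}_{\rkhs} \, \delta(\vx)^{\nu},
\end{equation*}
where $\delta(\vx) = \min_i \lVert \vx - \vx_i\rVert$. This pointwise rate only gives $N^{-\nu/d}$, which is too weak. A better local route is to split $[0,1]^d$ into cubes of side $\asymp N^{-1/d}$ and apply sampling inequalities on unions of cubes, in the spirit of \citet{KriegSonnleitner2022} and of \citet{Kanagawa2020}, who prove such rates for random nodes with $L^2$-type averaging. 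Here $\mathbb{E}\,\delta(\vx)^{s}$ is of order $N^{-s/d}$ uniformly in $\vx$, with no logarithm.

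\emph{Option (b): use optimality directly.} Alternatively, we exploit that $\mu_N$ has the minimal worst-case error among all rules on $X_N$ (Proposition~\ref{prop:wce}). Then $\lvert I_P(f)-\mu_N\rvert \leq \norm{f}_{\rkhs}\, e_k(\vv,X_N)$ for any convenient weights $\vv$. We construct a rule from the random nodes by keeping, in each cube of a grid with $\asymp N$ cells, a local polynomial-reproducing rule whenever that cube contains enough points. This gives an error of order $N^{-\alpha/d}$ on well-covered cubes. The contribution of empty or poorly covered cubes is controlled in expectation, since each cube is poorly covered with probability $e^{-cN\cdot N^{-1}}$. This requires choosing cubes of side $(K/N)^{1/d}$ with $K$ a large constant so that this probability is a small constant. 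These bad cubes are then absorbed by the rougher generic bound together with Jensen's inequality.

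I expect this last step — converting the per-cube failure probability into an error contribution of order $N^{-\alpha/d}$ rather than $N^{-1/2}$ — to be the delicate part. I would first try handling bad cubes by merging each one with a neighbouring good cube, since a local rule on an enlarged cell still reproduces polynomials at a comparable scale. Finally, Jensen's inequality converts any $L^2$-in-expectation bound into the stated bound on $\mathbb{E}\lvert\cdot\rvert$.
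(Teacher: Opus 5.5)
\textbf{There is a gap: the one estimate that makes up the theorem is never proved.}

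You are right to abandon the first reduction through Theorem~\ref{thm:matern-generic} and a moment bound on $h_{X_N}$. Your Option~(b) then has the same skeleton as the paper's proof. By Proposition~\ref{prop:wce} and \eqref{eq:rkhs-error-bound}, $\mathbb{E}\lvert I_\measure(\f)-\mu_N\rvert \le \lVert \f\rVert_{\rkhs(\kernel)}\,\mathbb{E}[\Sigma_N^{1/2}]$. Norm-equivalence then reduces $\mathbb{E}[\Sigma_N^{1/2}]$ to the expected \emph{minimal} worst-case integration error in $H^\alpha([0,1]^d)$ for i.i.d.\ uniform nodes. This costs a factor $C_k^{-1}C_k'$, which is bounded because $\ell_\textup{min}\le\ell_N\le\ell_\textup{max}$; the $\sigma_N$-dependence cancels. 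At this point the paper invokes Theorems~1 and~2 and Corollary~2 of \citet{KriegSonnleitner2022}. These state exactly that this expected minimal error is $\lesssim N^{-\alpha/d}$ with no logarithm. That estimate is the whole substance of the theorem. Both of your options stop short of it, at the step you yourself call delicate.

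\textbf{Why your proposed repair does not work as stated.}
\begin{itemize}
\item With cells of side $(K/N)^{1/d}$ and $K$ fixed, a fixed positive fraction of cells is poorly covered. The bad cells therefore occupy total volume of order one. Bounding their contribution by any rougher bound that does not itself scale like $N^{-\alpha/d}$ destroys the rate.
\item Merging each bad cell with a single good neighbour also fails, because bad cells cluster. With high probability there are empty regions of diameter $\asymp(\log N/N)^{1/d}$, i.e.\ clusters of about $\log N$ adjacent bad cells. This is the same logarithm that ruins the fill-distance route.
\end{itemize}

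\textbf{What is actually needed.} You need a variable-scale construction. At each location $\vx$, use a local polynomial-reproducing rule at a scale $r(\vx)$ with tails $\Pr[r(\vx)>tN^{-1/d}]\le e^{-ct^d}$. You also need to control the overlap of the enlarged cells when the local Sobolev seminorms are summed via Cauchy--Schwarz. This is what Krieg and Sonnleitner carry out. The integration error is bounded by the $L^1$-approximation error of a linear sampling algorithm. For $H^\alpha=W^\alpha_2$ that error is governed by $\lVert \mathrm{dist}(\cdot,X_N)\rVert_{L^{2\alpha}}^{\alpha}$. Its expectation is $\lesssim N^{-\alpha/d}$ precisely because of the log-free bound $\mathbb{E}\,\delta(\vx)^s\lesssim N^{-s/d}$ that you note in Option~(a). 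Either cite that result or supply this construction.

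\textbf{A smaller error.} The pointwise bound $\lvert f(\vx)-s_{f,X_N}(\vx)\rvert\lesssim\delta(\vx)^{\nu}$ in Option~(a) is false for $\nu>1$. The local error depends on how well a whole neighbourhood of $\vx$ is covered by nodes, not only on the distance to the nearest node.
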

\begin{proof}
  See \Cref{sec:proofs-isotropic-matern}.
\end{proof}

\subsection{Product Matérn kernels}

For a product Matérn kernel in~\eqref{eq:product-matern-kernel}, theoretical guarantees can be derived from the results in \citet{Teckentrup2020} if the nodes form a sparse grid on $\domain = [0, 1]^d$ and the integrand is in a \emph{mixed-order Sobolev space}~\citep[see also][]{RiegerWendland2017, Nobile2018, RiegerWendland2020}.
For $\alpha \in \N$, the mixed-order Sobolev space $H^{\alpha}_\textup{mix}([0, 1]^d)$ is the tensor product of univariate Sobolev spaces $H^\alpha([0,1])$ and consists of $g \in L^2([0, 1]^d)$ whose weak derivative $\mathrm{D}^{\vn} g$ exists and is in $L^2([0,1]^d)$ for every multi-index $\vn = (n_1, \ldots, n_d)$ such that $n_i \leq \alpha$ for every $i$.
If $\kernel$ is a product Matérn kernel~\eqref{eq:product-matern-kernel} with smoothness $\nu > 0$ such that $\alpha = \nu + 1/2$ is a positive integer $f \in H^{\beta}_\textup{mix}([0, 1]^d)$ for an integer $\beta$ such that $1 \leq \beta \leq \alpha$, it follows from \citet[Thm.\@~3.11]{Teckentrup2020} that there is $C > 0$, which does not depend on $f$ or $N$, such that
\begin{equation*}
  \abs[0]{ I_\measure(\f) - \mu_{N} } \leq C N^{-\beta} (\log N)^{(\beta + 1)(d-1)} \norm[0]{f}_{H^\beta_\textup{mix}([0,1]^d)}
\end{equation*}
when the nodes form a certain sparse grid (see Figure~\ref{fig:deterministic-sampling-designs} for an example).
Due to the requirement that the nodes form a sparse grid, this result is much more restrictive than Theorems~\ref{thm:matern-generic} to~\ref{thm:matern-random}.
We do not go into further details so as to avoid having to review the construction sparse grids.
In general, error analysis for product Matérns is much less mature than for isotropic Matérns.

\subsection{Square exponential kernel}
\label{sec:guarantees-se-kernel}

Here we provide theoretical guarantees for conjugate Bayesian quadrature based on the square exponential kernel in~\eqref{eq:se-kernel}.
Unlike those for Matérn kernels, these guarantees apply only to functions in the RKHS of the kernel, $\rkhs(\kernel)$.
One can show~\citep{Minh2010} that the RKHS on any $\domain \subseteq \R^d$ with non-empty interior consists precisely of the functions
\begin{equation*}
  g(\vx) = \exp\Big( \! - \frac{\norm[0]{\vx}^2}{2 \ell^2} \Big) \sum_{ \vn \in \N_0^d } c_{\vn} \vx^\vn \quad \text{ such that } \quad \sum_{\vn \in \N_0^d} \ell^{2\abs[0]{\vn}} \vn! c_\vn^2 < \infty,
\end{equation*}
where the sums are over all non-negative $d$-dimensional multi-indices $\vn = (n_1, \ldots, n_d)$ and we denote $\vx^\vn = x_1^{n_1} \cdots x_d^{n_d}$ and $\vn! = n_1 ! \cdots n_d!$.
The RKHS is minuscule: it consists of infinitely differentiable functions but contains no polynomials except the zero function.
The current state of the theory is inadequate to provide theoretical guarantees for $\f \notin \rkhs(\kernel)$.

For Matérns, the length-scale $\ell$ only affects the norm of the RKHS, not which functions are contained therein; see~\eqref{eq:norm-equivalence} and~\eqref{eq:matern-norm-constants}.
However, for the square exponential kernel $\ell$ does affect the size of the RKHS and thus plays a role similar to the smoothness parameter $\nu$ in Mat\'ern kernels.
We therefore require that $\ell$ be kept fixed in the results below.

\begin{theorem}[Generic nodes for SE]
\label{thm:se-generic}
  Suppose that Assumption~\ref{assumption:theory} holds and let $\kernel$ be a square exponential kernel~\eqref{eq:matern-kernel} with fixed scale $\sigma > 0$ and length-scale $\ell > 0$.
  If $f \in \rkhs(\kernel)$ and $\lambda \geq 0$, then there are positive $c_1$ and $c_2$, which do not depend on $\f$, $X$ or $\lambda$, such that
  \begin{equation*}
    \lvert I_\measure(\f) - \mu_{\dataset, \lambda} \rvert \leq \big( 2 \operatorname{exp} ( c_1 h_X^{-1} \log h_X ) + \sqrt{\lambda} \operatorname{exp}(c_2 h_X^{-1} ) \big) \norm[0]{\f}_{\rkhs(\kernel)}.
  \end{equation*}
\end{theorem}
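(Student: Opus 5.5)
We reduce the integration error to an interpolation error and then use the standard sampling inequality for the square exponential kernel. Write $s_\lambda = \vk_\points(\cdot)\Trans(\mK_{\points\points} + \lambda \Id_N)^{-1}\vf_\points$ for the regularised kernel interpolant, so that $\mu_{\dataset,\lambda} = I_\measure(s_\lambda)$. By Assumption~\ref{assumption:theory},
\begin{equation*}
  \lvert I_\measure(\f) - \mu_{\dataset,\lambda}\rvert \leq \int_{[0,1]^d} \lvert \f(\vx) - s_\lambda(\vx)\rvert p(\vx) \dif \vx \leq p_\textup{max} \, \lVert \f - s_\lambda \rVert_{L^\infty([0,1]^d)}.
\end{equation*}
It therefore suffices to bound the sup-norm of the residual $u = \f - s_\lambda \in \rkhs(\kernel)$.

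Two properties of the regularised interpolant follow from its characterisation as the minimiser of $\sum_i (g(\vx_i) - \f(\vx_i))^2 + \lambda \lVert g \rVert_{\rkhs(\kernel)}^2$ over $\rkhs(\kernel)$. Comparing the objective at $g = s_\lambda$ with its value at $g = \f$, we obtain
\begin{equation*}
  \lVert s_\lambda \rVert_{\rkhs(\kernel)} \leq \lVert \f \rVert_{\rkhs(\kernel)} \quad \text{and} \quad \lVert (u(\vx_i))_{i=1}^N \rVert_{\ell^\infty} \leq \lVert (u(\vx_i))_{i=1}^N \rVert_{\ell^2} \leq \sqrt{\lambda} \, \lVert \f \rVert_{\rkhs(\kernel)}.
\end{equation*}
It follows that $\lVert u \rVert_{\rkhs(\kernel)} \leq 2 \lVert \f \rVert_{\rkhs(\kernel)}$. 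The two bounds on $u$ are what produce the factor $2$ and the factor $\sqrt{\lambda}$ in the statement.

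The key step is a sampling inequality for the Gaussian RKHS on the cube, following Rieger--Zwicknagl (``Sampling inequalities for infinitely smooth functions''). For $h_\points$ smaller than some $h_0$ depending only on $\ell$ and $d$, every $u \in \rkhs(\kernel)$ satisfies
\begin{equation*}
  \lVert u \rVert_{L^\infty([0,1]^d)} \leq \exp(c_1 h_\points^{-1} \log h_\points) \lVert u \rVert_{\rkhs(\kernel)} + \exp(c_2 h_\points^{-1}) \lVert u|_\points \rVert_{\ell^\infty}.
\end{equation*}
This would be proved by local polynomial reproduction of degree $m \sim h_\points^{-1}$ on cones; this is where the interior cone condition enters. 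One combines it with Taylor remainder bounds using the derivative estimates $\lVert \mathrm{D}^{\vn} u \rVert_\infty \lesssim C^{\lvert \vn \rvert} \sqrt{\vn!} \, \lVert u \rVert_{\rkhs(\kernel)}$, which follow from the RKHS characterisation of \citet{Minh2010}. The Lebesgue constant of the polynomial reproduction grows like $e^{c m}$, which gives the $\exp(c_2/h)$ factor. Balancing $m \sim 1/h$ gives the remainder $(C h)^{m}/\sqrt{m!} \approx \exp(c_1 h^{-1}\log h)$. I expect this sampling inequality to be the main obstacle; in the proof I would cite it rather than re-derive it.

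Inserting the two bounds on $u$ from the second paragraph into the sampling inequality gives
\begin{equation*}
  \lvert I_\measure(\f) - \mu_{\dataset,\lambda}\rvert \leq p_\textup{max}\bigl( 2 \exp(c_1 h_\points^{-1} \log h_\points) + \sqrt{\lambda} \exp(c_2 h_\points^{-1})\bigr) \lVert \f \rVert_{\rkhs(\kernel)}.
\end{equation*}
The factor $p_\textup{max}$ must then be removed. For $h_\points$ small, adjust the constants: decrease $c_1$ slightly and increase $c_2$, using $\exp(\epsilon h^{-1}\log h) \to 0$ and $\exp(\epsilon h^{-1}) \to \infty$. For $h_\points \geq h_0$, use instead the trivial bound
\begin{equation*}
  \lvert I_\measure(\f) - \mu_{\dataset,\lambda}\rvert \leq p_\textup{max} \, \sigma \, \lVert u \rVert_{\rkhs(\kernel)} \leq 2\sigma p_\textup{max} \lVert \f \rVert_{\rkhs(\kernel)},
\end{equation*}
which holds because $\lvert u(\vx) \rvert \leq \sqrt{\kernel(\vx,\vx)}\, \lVert u \rVert_{\rkhs(\kernel)}$. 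Since $h_\points \leq \sqrt{d}$ is bounded, this case is absorbed by shrinking $c_1$ further.
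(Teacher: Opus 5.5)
\paragraph{Relation to the paper.} Your core argument is the paper's route. The paper bounds the integration error by an $L^2$ interpolation error via \eqref{eq:error-L1L2} and then cites Theorem~6.1 of \citet{RiegerZwicknagl2010} with $q=2$. That theorem packages exactly the ingredients you spell out: the bounds $\lVert u\rVert_{\rkhs(\kernel)}\le 2\lVert \f\rVert_{\rkhs(\kernel)}$ and $\lVert u|_\points\rVert_{\ell^2}\le\sqrt{\lambda}\,\lVert \f\rVert_{\rkhs(\kernel)}$ from the regularised least-squares problem, together with the Gaussian sampling inequality valid for $h_\points\le h_0$.

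Working in $L^\infty$ is fine, and better than you use it. Since $\measure$ is a probability measure, $\lvert I_\measure(u)\rvert\le\lVert u\rVert_{L^\infty([0,1]^d)}$ holds with no factor $p_\textup{max}$. So your first absorption step is unnecessary, and Assumption~\ref{assumption:theory}(ii) is not even needed.

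A minor point: for $\lambda=0$, comparing objective values only gives $u|_\points=0$. The bound $\lVert s_0\rVert_{\rkhs(\kernel)}\le\lVert \f\rVert_{\rkhs(\kernel)}$ must instead come from the minimum-norm interpolation property.

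\paragraph{The gap.} The last step does not work. For $h_\points<1$ and any $c_1>0$ we have $\exp(c_1h_\points^{-1}\log h_\points)<1$. With $\lambda=0$ the claimed right-hand side is therefore strictly below $2\lVert \f\rVert_{\rkhs(\kernel)}$. Shrinking $c_1$ only pushes this factor up towards $1$. It can never absorb your trivial bound $2\sigma p_\textup{max}\lVert \f\rVert_{\rkhs(\kernel)}$, or even $2\sigma\lVert \f\rVert_{\rkhs(\kernel)}$, once $\sigma>1$.

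No other argument can close this, because the statement as written fails in that regime. For $\lambda=0$ the weights $\mK_{\points\points}^{-1}\vk_{\measure\points}$, and hence $\mu_\dataset$, do not depend on $\sigma$, whereas $\lVert \f\rVert_{\rkhs(\kernel)}$ scales like $\sigma^{-1}$. Concretely, take $d=1$, uniform $\measure$, $\points=\{1/2\}$ (so $h_\points=1/2$), and $\f=\kernel_\measure-w_1\kernel(\cdot,1/2)$ with $w_1=\kernel_\measure(1/2)/\kernel(1/2,1/2)$.
\begin{itemize}
\item The error is $\Sigma_\dataset^{1/2}\lVert \f\rVert_{\rkhs(\kernel)}=\sigma s_0^{1/2}\lVert \f\rVert_{\rkhs(\kernel)}$, where $s_0>0$ depends only on $\ell$.
\item The claimed bound is $2\cdot 4^{-c_1}\lVert \f\rVert_{\rkhs(\kernel)}$.
\item For large $\sigma$ the error exceeds the bound, whatever positive $c_1$ is chosen.
\end{itemize}

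\paragraph{What can be proved.} Your argument, like the cited sampling inequality, establishes the bound only for $h_\points\le h_0$. Alternatively, it establishes the bound for all $\points$ with the factor $2$ replaced by a constant $C$ depending on $\sigma$, $\ell$ and $d$; this uses your trivial bound $2\sigma\lVert \f\rVert_{\rkhs(\kernel)}$ on $h_\points\ge h_0$. The paper's one-line proof silently drops the same restriction $h_\points\le h_0$, as well as the $p_\textup{max}$ factor from \eqref{eq:error-L1L2}. You should state the result in one of these corrected forms rather than claim the absorption.
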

\begin{proof}
See \Cref{sec:proofs-guarantees-se-kernel}.
\end{proof}

Like Theorem~\ref{thm:matern-generic}, the above theorem applies to any given set of nodes.
By instead considering a quasi-uniform sequence of node sets generated by some sampler we obtain an explicit rate of convergence given by the following theorem.

\begin{theorem}[Quasi-uniform nodes for SE]
\label{thm:se-quasi-uniform}
  Suppose that Assumption~\ref{assumption:theory} holds and let $\kernel$ be a square exponential kernel~\eqref{eq:matern-kernel} with fixed scale $\sigma > 0$ and length-scale $\ell > 0$.
  If $f \in \rkhs(\kernel)$ and $\lambda = \lambda_N \geq 0$, then there are positive $c_1$ and $c_2$, which do not depend on $\f$, $N$ or $\lambda_N$, such that
  \begin{equation*}
    \abs[0]{ I_\measure(\f) - \mu_{\dataset, \lambda} } \leq \big( 2 e^{-c_1 N^{1/d} \log N} + \sqrt{\lambda_N} e^{c_2 N^{1/d}} \big) \norm[0]{\f}_{\rkhs(\kernel)}
  \end{equation*}
  for every $N \geq 1$.
\end{theorem}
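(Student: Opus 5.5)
This follows from Theorem~\ref{thm:se-generic} in the same way that Theorem~\ref{thm:matern-quasi-uniform} follows from Theorem~\ref{thm:matern-generic}: we insert the quasi-uniform fill-distance asymptotics~\eqref{eq:quasi-uniformity-rate} into the generic bound. The statement implicitly concerns a quasi-uniform sequence $(\points_N)_{N=1}^\infty$ (as its title indicates), so I would take that as an assumption. Then there are $0 < a_1 \leq a_2$ with $a_1 N^{-1/d} \leq h_N \leq a_2 N^{-1/d}$ for all $N \geq 1$. Since $\sigma$ and $\ell$ are fixed, Theorem~\ref{thm:se-generic} gives, with constants $\tilde c_1, \tilde c_2$ independent of $f$, $N$ and $\lambda_N$,
\begin{equation*}
  \lvert I_\measure(\f) - \mu_{N, \lambda_N} \rvert \leq \big( 2 \exp( \tilde c_1 h_N^{-1} \log h_N ) + \sqrt{\lambda_N} \exp( \tilde c_2 h_N^{-1} ) \big) \norm[0]{\f}_{\rkhs(\kernel)}.
\end{equation*}

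The nugget term is straightforward. Because $\tilde c_2 > 0$ and $h_N^{-1} \leq a_1^{-1} N^{1/d}$, we get $\exp(\tilde c_2 h_N^{-1}) \leq \exp(c_2 N^{1/d})$ with $c_2 = \tilde c_2 / a_1$.

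The approximation term needs more care. For $h < 1$ the exponent $\tilde c_1 h^{-1}\log h$ is negative, and $t \mapsto t \log(1/t)$ is increasing for $t \geq e$ with $t = h^{-1}$. Using $h_N^{-1} \geq a_2^{-1} N^{1/d}$, for $N$ large enough that $a_2^{-1}N^{1/d} \geq e$ we obtain
\begin{equation*}
  h_N^{-1}\log h_N^{-1} \geq a_2^{-1} N^{1/d} \big( \tfrac{1}{d}\log N - \log a_2 \big) \geq \tfrac{1}{2 d a_2} N^{1/d}\log N,
\end{equation*}
where the last inequality holds once $\log N \geq 2d\log a_2$. This gives $\exp(\tilde c_1 h_N^{-1}\log h_N) \leq \exp(-c_1 N^{1/d}\log N)$ with $c_1 = \tilde c_1/(2da_2)$.

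The main obstacle is handling the finitely many small $N$ where these monotonicity and logarithm estimates fail, and in particular the case $h_N \geq 1$, where $\log h_N \geq 0$ and the generic bound is not even decaying. For these $N$ I would not use the exponential form and would instead use the trivial bound. By~\eqref{eq:rkhs-error-bound} and Proposition~\ref{prop:monotone-var}, the unregularised error is at most $\Sigma_N^{1/2}\norm[0]{\f}_{\rkhs(\kernel)} \leq k_{\measure\measure}^{1/2}\norm[0]{\f}_{\rkhs(\kernel)} \leq \sigma\norm[0]{\f}_{\rkhs(\kernel)}$. The regularised version differs by a term that the proof of Theorem~\ref{thm:se-generic} already controls by the $\sqrt{\lambda_N}$ part. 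Since there are only finitely many exceptional $N$, decreasing $c_1$ (and increasing $c_2$ if needed) absorbs them: for $N=1$ the term $2e^{-c_1\cdot 0}=2$ has to dominate $\sigma$, which may require rescaling, or else we accept that the leading factor $2$ absorbs $\sigma$ after normalisation. If this bookkeeping fails, I would allow a multiplicative constant, consistent with how Theorem~\ref{thm:matern-quasi-uniform} is stated.
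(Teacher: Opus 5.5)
You follow the paper's route exactly. Its entire proof is the remark that the claim follows from Theorem~\ref{thm:se-generic} and~\eqref{eq:quasi-uniformity-rate}. Your estimates $\exp(\tilde c_2 h_N^{-1})\le\exp(c_2N^{1/d})$ and $h_N^{-1}\log h_N^{-1}\ge\frac{1}{2da_2}N^{1/d}\log N$ for $N\ge N_0$ are precisely what that remark leaves implicit, and they are correct. (It is $t\mapsto t\log t$, not $t\log(1/t)$, that is increasing, already for $t\ge e^{-1}$; your display nevertheless uses the right one.)

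The gap is the small-$N$ step, which cannot be closed as you propose. At $N=1$ the first term equals $2$ for every $c_1$. For $N\ge2$, shrinking $c_1$ only raises $2e^{-c_1N^{1/d}\log N}$ towards $2$, never past it. So, with $\lambda_N=0$, the exceptional $N$ can be absorbed only if the errors there stay below $2\lVert f\rVert_{\rkhs(\kernel)}$. Your bound $\Sigma_N^{1/2}\le\kernel_{\measure\measure}^{1/2}\le\sigma$ secures this only when $\sigma\le2$.

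The nugget term at those $N$ has a separate problem. You cannot lean on Theorem~\ref{thm:se-generic} there, since coarse node sets are exactly where it is in doubt. A direct estimate works instead. Diagonalise $\mK_{\points\points}$ in $m_N-m_{N,\lambda_N}=\lambda_N\,\vk_\points(\cdot)^{\intercal}\mK_{\points\points}^{-1}(\mK_{\points\points}+\lambda_N\Id_N)^{-1}\vf_\points$, and use $\lVert m_N\rVert_{\rkhs(\kernel)}\le\lVert f\rVert_{\rkhs(\kernel)}$ and $\lVert\kernel_\measure\rVert_{\rkhs(\kernel)}\le\sigma$. This gives $\lvert\mu_N-\mu_{N,\lambda_N}\rvert\le\sigma\,(\lambda_N/\lambda_\textup{min}(\mK_{\points_N\points_N}))^{1/2}\lVert f\rVert_{\rkhs(\kernel)}$, which finitely many $N$ let you absorb by enlarging $c_2$.

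For large $\sigma$ no bookkeeping helps, because the displayed inequality is then false. With $\lambda_N=0$ the weights $\mK_{\points\points}^{-1}\vk_{\measure\points}$ do not depend on $\sigma$, whereas $\lVert f\rVert_{\rkhs(\kernel)}$ scales like $\sigma^{-1}$. Hence the worst-case ratio $\sup_f\lvert I_\measure(f)-\mu_N\rvert/\lVert f\rVert_{\rkhs(\kernel)}=\Sigma_N^{1/2}$ equals $\sigma$ times a $\sigma$-independent number. That number is positive because $\kernel_\measure$ does not lie in the span of the translates $\kernel(\cdot,\vx_i)$. So the ratio exceeds $2$ at any fixed $N$ (e.g.\ $N=1$) once $\sigma$ is large, and the bound fails there whatever $c_1,c_2>0$ are. ``Normalising'' $\sigma$ away is not an option, since $\sigma$ is given and the bound is not scale-invariant.

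Your last fallback is therefore forced rather than optional. The argument proves the bound for $N\ge N_0$, or for all $N\ge1$ with an extra multiplicative constant depending on $\sigma$, $\ell$, $d$, $p_\textup{max}$ and the quasi-uniformity constants, as in Theorem~\ref{thm:matern-quasi-uniform}. The paper's one-line proof passes over the same point, which is already present in Theorem~\ref{thm:se-generic}. There, the absolute factor $2$, with the $p_\textup{max}$ from~\eqref{eq:error-L1L2} dropped, is only justifiable for sufficiently small $h_\points$.
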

\begin{proof}
  The claim follows from Theorem~\ref{thm:se-generic} and~\eqref{eq:quasi-uniformity-rate}.
\end{proof}
Note that the terms dependent on the nugget in Theorems~\ref{thm:se-generic} and~\ref{thm:se-quasi-uniform} blow up very fast as $h_X \to 0$ or $N \to \infty$.
These theorems are useful only when the nugget is close to zero (or tends to zero sufficiently fast).
When $\lambda_N = 0$, the convergence in Theorem~\ref{thm:se-quasi-uniform} is super-exponential and hence much faster than in Theorems~\ref{thm:matern-quasi-uniform} to~\ref{thm:matern-random}, which yield polynomial rates for Matérns.
This is because the class of functions these theorems apply to, the RKHS of the square exponential kernel, is significantly smaller than any Sobolev space.
Theorems~\ref{thm:se-generic} and~\ref{thm:se-quasi-uniform} do not therefore demonstrate that the square exponential kernel is in any way superior to a Matérn (or any other) kernel unless one is certain that $f$ is in $\rkhs(\kernel)$.
We note that on dimension one and for product designs in higher dimensions it is  possible to obtain slightly improved rates with explicit constants $c_1$ and $c_2$ by using results in \citet{Yarotsky2013, Karvonen2022-power-series}.

\section{Guarantees for sequential Bayesian quadrature}
\label{sec:active-bq-guarantees}

Consider a sequential setting where the nodes are obtained by sequential maximization of an acquisition function that is a strictly increasing transformation $\psi \colon [0, \infty) \to \R$ of the scalar squared correction in~\eqref{eq:rho2}.
That is, for each $n \geq 1$, the $(n+1)$th node $\vx_{n+1}$ satisfies
\begin{equation} \label{eq:sequential-selection-for-guarantees}
  \begin{split}
  \vx_{n+1} &\in \argmax_{\tilde{\vx}} \psi(\rho^2(\tilde{\vx})) \\
  &= \argmax_{\tilde{\vx}} \psi\big( \vk_{\measure\points(\tilde{\vx})}\Trans \mK_{\points(\tilde{\vx})\points(\tilde{\vx})}^{-1}\vk_{\measure\points(\tilde{\vx})} \big),
  \end{split}
\end{equation}
where $\points(\tilde{\vx}) = X \cup \{ \tilde{\vx} \} = \{\vx_1, \ldots, \vx_n, \tilde{\vx}\}$.
Examples include the MI, IVR, and NIV acquisition functions from \Cref{sec:sampling-active}.
The following theorem demonstrates that sequential conjugate Bayesian quadrature converges at least as fast as Monte Carlo.

\begin{theorem}[Sequential nodes]
  \label{thm:active-bq-guarantee}
  Suppose that Assumption~\ref{assumption:theory} holds, the function $k(\cdot, \vx)$ is continuous for every $\vx \in \domain$, and $\sup_{\vx \in \domain} k(\vx, \vx) < \infty$.
  If $f \in \rkhs(\kernel)$ and $N$ nodes are selected sequentially as in~\eqref{eq:sequential-selection-for-guarantees}, then there is a positive constant $C$, which depends only on $\domain$, $\measure$ and $\kernel$, such that 
    \begin{equation}
      \label{eq:sequential-bq-rate}
    \abs[0]{ I_\measure(\f) - \mu_N} \leq C N^{-1/2} \norm[0]{f}_{\rkhs(\kernel)} .
  \end{equation}
\end{theorem}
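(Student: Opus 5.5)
The plan is to reduce the claim to a bound on the integral variance and then control that variance by a greedy-descent recursion. Since $\f \in \rkhs(\kernel)$ and the prior mean is zero, the error estimate~\eqref{eq:rkhs-error-bound} gives $\abs[0]{I_\measure(\f) - \mu_N} \leq \norm[0]{\f}_{\rkhs(\kernel)} \Sigma_N^{1/2}$. It therefore suffices to show that $\Sigma_N \leq K/N$, where $K = \sup_{\vx \in \domain} \kernel(\vx,\vx) < \infty$. This would give the theorem with $C = \sqrt{K}$, a constant that depends only on $\kernel$ (and, through the setting, on $\domain$ and $\measure$).

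\emph{Step 1: the greedy rule is one-step variance minimisation.} Because $\psi$ is strictly increasing and $\kernel_{\measure\measure}$ does not depend on $\tilde{\vx}$, the rule~\eqref{eq:sequential-selection-for-guarantees} selects $\vx_{n+1}$ to minimise $\Sigma_{\points(\tilde{\vx})} = \kernel_{\measure\measure} - \vk_{\measure\points(\tilde{\vx})}\Trans \mK_{\points(\tilde{\vx})\points(\tilde{\vx})}^{-1}\vk_{\measure\points(\tilde{\vx})}$. I would rewrite this through the sequential (autoregressive) update of \Cref{sec:sampling-active}. Conditioning the current posterior on one additional point $\tilde{\vx}$ gives
\begin{equation*}
  \Sigma_{\points(\tilde{\vx})} = \Sigma_n - \frac{\kernel_{n;\measure}(\tilde{\vx})^2}{\kernel_n(\tilde{\vx},\tilde{\vx})},
\end{equation*}
where $\kernel_n$ is the posterior covariance~\eqref{eq:gp-posterior-covariance} based on $\points_n$ and $\kernel_{n;\measure}$ is its kernel mean. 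This is the standard Schur-complement (rank-one) update. Hence $\Sigma_{n+1} = \Sigma_n - \max_{\tilde{\vx}} \kernel_{n;\measure}(\tilde{\vx})^2/\kernel_n(\tilde{\vx},\tilde{\vx})$.

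\emph{Step 2: lower bound on the reduction (the main step).} First, $\Sigma_n = \int_\domain \kernel_{n;\measure}(\vx) \dif \measure(\vx)$, because $\Sigma_n$ is the double integral of $\kernel_n$ in~\eqref{eq:bq-var}. As $\measure$ is a probability measure, there is a point $\tilde{\vx}$ with $\kernel_{n;\measure}(\tilde{\vx}) \geq \Sigma_n$. Continuity of $\kernel(\cdot,\vx)$ and compactness of $[0,1]^d$ make $\kernel_{n;\measure}$ continuous, so its supremum is attained; alternatively one can work with a near-maximiser and let the slack $\varepsilon \to 0$. Second, $0 \leq \kernel_n(\tilde{\vx},\tilde{\vx}) \leq \kernel(\tilde{\vx},\tilde{\vx}) \leq K$, since the posterior variance never exceeds the prior variance. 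The denominator cannot vanish when $\Sigma_n > 0$: by Cauchy--Schwarz in $\rkhs(\kernel_n)$, $\abs[0]{\kernel_{n;\measure}(\tilde{\vx})} \leq \kernel_n(\tilde{\vx},\tilde{\vx})^{1/2}\Sigma_n^{1/2}$, so $\kernel_n(\tilde{\vx},\tilde{\vx}) = 0$ would force $\kernel_{n;\measure}(\tilde{\vx}) = 0 < \Sigma_n$. Combining these with Step~1 gives
\begin{equation*}
  \Sigma_{n+1} \leq \Sigma_n - \frac{\Sigma_n^2}{K}.
\end{equation*}
I expect this step to need the most care: justifying the rank-one update formula, and handling degenerate cases such as $\Sigma_n = 0$ (trivial) or a non-attained supremum.

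\emph{Step 3: solving the recursion.} Dividing the recursion by $\Sigma_n\Sigma_{n+1}$ and using $\Sigma_{n+1} \leq \Sigma_n$ (Proposition~\ref{prop:monotone-var}) gives $1/\Sigma_{n+1} \geq 1/\Sigma_n + 1/K$. For the initial step, the first node is arbitrary and $\Sigma_1 \leq \Sigma_0 = \kernel_{\measure\measure} \leq K$. Induction then gives $1/\Sigma_N \geq N/K$, that is, $\Sigma_N \leq K/N$. Inserting this into~\eqref{eq:rkhs-error-bound} yields~\eqref{eq:sequential-bq-rate}.
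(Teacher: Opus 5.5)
Your proof is correct. It shares the paper's outer structure: both reduce the claim via~\eqref{eq:rkhs-error-bound} to a bound on $\Sigma_N$, and both observe that the rule~\eqref{eq:sequential-selection-for-guarantees} is one-step minimisation of the integral variance.

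The routes differ in how the $N^{-1/2}$ rate for $\Sigma_N^{1/2}$ is obtained. The paper identifies the rule with the greedy algorithm of \citet{Santin2021} for approximating linear functionals in an RKHS and cites their Theorem~5.1. You instead prove the rate directly, using three ingredients:
\begin{itemize}
\item the rank-one update $\Sigma_{n+1} = \Sigma_n - \max_{\tilde{\vx}} \kernel_{n;\measure}(\tilde{\vx})^2/\kernel_n(\tilde{\vx},\tilde{\vx})$;
\item the averaging identity $\Sigma_n = \int_\domain \kernel_{n;\measure} \dif \measure$, which yields a point with $\kernel_{n;\measure}(\tilde{\vx}) \geq \Sigma_n$;
\item the recursion $\Sigma_{n+1} \leq \Sigma_n - \Sigma_n^2/K$, which gives $\Sigma_N \leq K/N$.
\end{itemize}
In substance this is the classical convergence argument for orthogonal greedy approximation of the representer $\kernel_\measure$. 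That representer lies in the closed convex hull of the elements $\kernel(\cdot,\vx)$, each of norm at most $\sqrt{K}$.

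Your route has three advantages. It gives an explicit constant $C = \sup_{\vx \in \domain} \kernel(\vx,\vx)^{1/2}$. It shows that neither Assumption~\ref{assumption:theory} nor continuity of $\kernel(\cdot,\vx)$ is actually needed; boundedness of the diagonal, measurability, and existence of the maximisers (which the statement presupposes) suffice. It also extends at once to a fixed-size arbitrary initial design, or to maximising the acquisition function only up to a factor $\gamma \in (0,1]$, at the cost of a constant factor.

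The paper's route is shorter. It also places the result in the greedy-approximation framework of \citet{Santin2021}, where the likely conservativeness of the $N^{-1/2}$ rate is discussed.
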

\begin{proof}
See \Cref{sec:proofs-active-bq-guarantees}.
\end{proof}

The rate~\eqref{eq:sequential-bq-rate} is likely conservative~\citep[Secs.\@~5 \&~7]{Santin2021}.
See \citet{Kanagawa2019, Wenzel2021, Wenzel2022, Wenzel2022-arxiv} for results on the convergence of Bayesian quadrature and related methods with acquisition functions other than~\eqref{eq:sequential-selection-for-guarantees}.

\section{Other guarantees}

Various other guarantees for Bayesian quadrature are available in the literature.
These include Bayesian quadrature with (i) nodes having strong symmetry properties~\citep[Sec.\@~4.4]{KarvonenSarkka2018}; (ii) nodes obtained by optimization~\citep{Briol2015, Kanagawa2019, Pronzato2023}; (iii) nodes sampled from distributions different from $\measure$~(\citealt{Briol2017}; see also \citealt{Bach2017}); (iv) nodes sampled from a determinantal point process~\citep{Belhadji2019, Belhadji2020-thesis, Belhadji2021}; and (v) nodes selected using a certain recombination algorithm~\citep[Sec.\@~6]{Adachi2022}.
\citet{CaiScarlett2024} have proved error bounds for BQ applied to the computation of normalizing constants of the form $\int_\domain \exp(-\lambda f(\vx)) \dif \vx$ with $\lambda > 0$.
Convergence guarantees for multi-output and multilevel Bayesian quadrature are provided in \citet[Sec.\@~3]{Xi2018} and \citet[Sec.\@~4]{Li2022}, respectively.
\citet{Chatalic2025} consider subsampling kernel mean embeddings approximated with Monte Carlo.
The equivalences reviewed in \Cref{sec:integration-in-rkhs} allow one to apply the wealth of results collected in \citet{Oettershagen2017} to Bayesian quadrature.

\chapter{Empirical illustrations}
\label{sec:experiments}

As mentioned in the introduction, the aim of our experiments is not to re-produce the performance of Bayesian quadrature methods, this having been done in the literature we have cited, nor to propose and evaluate new methods.
Rather, the aim is to illustrate the taxonomy of \Cref{sec:taxonomy} and the qualitative influence of its axes, \emph{sampling}, \emph{model} and \emph{inference}. In particular, the experiments show ablation curves that disentangle the effect of the axes for several integrands $f$ that differ in their degree of smoothness or (non-)stationarity. 
Where possible, we tie the qualitative or quantitative behavior of the BQ methods to the theoretical results from \Cref{sec:guarantees}. We hope the illustrations provide additional intuition on BQ methods for practitioners.

The available open source code for BQ is unfortunately still scarce in comparison to the diversity of methods proposed in the literature. Under this constraint, we have opted to implement the experiments in the open source Python library \emukit~\citep{Paleyes19, Paleyes2023} since, at the time of writing, it provided the highest diversity of BQ algorithms. It further enabled us to code in a single language. In particular, \emukit~contains a variety of kernel embeddings for the prevalent conjugate BQ model and a variety of node selection schemes. Another contender, the Python library \probnum~\citep{Wenger21}, did not contain adequate hyper-parameter tuning capability, nor state-of-the-art active sampling loops at the time of writing.\footnote{As the library seems to be under active development, this may have changed.} Unfortunately, scalable BQ methods are currently not available in \emukit~and are thus omitted here; some are implemented in the \matlab~library \gail~\citep{GAIL2020}. All experiments use exact inference.
Section~\ref{sec:exp-setup} describes the experimental setup and integrands (called test functions) in more detail. Section~\ref{sec:exp-results} illustrates and discusses results.

\section{Experimental setup}
\label{sec:exp-setup}

This section describes the experimental setup.

\subsection{Kernels (model axis)}
\label{sec:exp-kernels}

As all the experiments use the conjugate model, the models only differ by their choice of the kernel $k$. We select 5 different kernels to denote 5 different models. Three of the kernels are Matérn kernels in~\eqref{eq:matern-kernel} with smoothness parameter $\nu \in \{\frac{1}{2}, \frac{3}{2}, \frac{5}{2}\}$. As described in Sections~\ref{sec:integration-in-rkhs} and \ref{sec:guarantees-isotropic-matern}, these kernels are linked to Sobolev spaces. Matérn kernels are stationary.
  Further, we use the non-stationary and non-smooth Brownian motion kernel as in~\eqref{eq:brownian-kernel} and the stationary and infinitely smooth square exponential kernel in~\eqref{eq:se-kernel}.
  All selected kernels have analytic kernel embeddings with respect to $\measure$ that are available in \emukit. Table~\ref{tab:exp-hypers} summarizes the kernels and their parameters.
\begin{table}[h]
  \def\arraystretch{1.5} 
    \centering
    \begin{tabular}[t]{l|lcc}
{\bf Kernel}  & {\bf Kernel params.} & {\bf Stationary} & {\bf Smooth.}\\
\hline
\rowcolor{black!05} Matérn ($\nu=\frac{1}{2}$) & $\hparams = \{\sigma^2, \ell_1, \ldots, \ell_d\}$ & \cmarkg & 0\\
 Matérn ($\nu=\frac{3}{2}$) & $\hparams = \{\sigma^2, \ell_1, \ldots, \ell_d\}$  & \cmarkg & 1\\
 \rowcolor{black!05} Matérn ($\nu=\frac{5}{2}$) & $\hparams = \{\sigma^2, \ell_1, \ldots, \ell_d\}$ & \cmarkg & 2\\
 Brownian motion & $\hparams = \{\sigma^2\}$ & \xmarkr & 0\\
 \rowcolor{black!05} Square exponential & $\hparams = \{\sigma^2, \ell_1, \ldots, \ell_d\}$ & \cmarkg & $\infty$\\
\end{tabular}
\caption{Summary of the kernels used in the experiments. Here smoothness (smooth.) refers to the classical differentiability of the sample paths of the GP prior.}
\label{tab:exp-hypers}
\end{table}

\subsection{Node selection (sampling axis)}
\label{sec:exp-nodes}
Along the sampling axis we use 5 sampling schemes: two random ones and three deterministic ones to cover the mode of sampling. The two random schemes are random sampling from $\measure$ and the space-filling Latin hypercube sampling (Section~\ref{sec:sampling-random}). Two of the three deterministic samplers produce model-independent designs: the space-filling Sobol sequence and the classically motivated Legendre roots (Section~\ref{sec:deterministic-sampling}). The final deterministic node selection scheme uses a sequential, active sampler and the IVR acquisition function (Section~\ref{sec:sampling-active}). The samplers are summarized in Table~\ref{tab:exp-sampler}. Further details can be found in Section~\ref{sec:exp-details}.

\begin{table}[h]
  \def\arraystretch{1.5} 
    \centering
    \begin{tabular}[h]{l|lll}
{\bf Sampler}  & {\bf Mode} & {\bf Granular label} & {\bf Code}\\
\hline
\rowcolor{black!05} Random from $\measure$ & random & model-indep. & \numpy\\
 Latin hypercube        & random & model-indep. & \scipy\\
 \rowcolor{black!05} Sobol sequence         & deterministic & model-indep. & \scipy\\
 Legendre roots         & deterministic & model-indep. & \scipy\\
 \rowcolor{black!05} IVR                    & deterministic & model-dep., seq., act. & \emukit
\end{tabular}
\caption{Summary of the samplers used in the experiments. The ``code'' column indicates which library was used to create the nodes (more details in Section~\ref{sec:exp-details}).}
\label{tab:exp-sampler}
\end{table}

\subsection{Implementation details}
\label{sec:exp-code}
As the experiments use the conjugate model throughout, $\fGP=\gGP$ is a Gaussian process. To represent $\fGP$ we use the \gpy~Python library \citep{gpy2014} and for the kernel means and variances $\kernel_{\measure}$ and $\kernel_{\measure\measure}$ we use the Python library \emukit~\citep{Paleyes19, Paleyes2023} and its existing \gpy~quadrature wrappers.
The kernel parameters $\hparams$ (Table~\ref{tab:exp-hypers}) are fitted with maximum marginal likelihood (Section~\ref{sec:param-estimation}), which \gpy~defaults to.

We use the default settings for the nugget of both \emukit~and \gpy. The nugget $\lambda$ (Section~\ref{sec:num-ill-cond}) is composed of three terms:
\begin{equation*}
  \lambda = \lambda_{\textup{Emu}} + \lambda_{\textup{GPy}} + \lambda_{\textup{GPyDyn}} .
\end{equation*}
The first two terms, $\lambda_{\textup{Emu}}=10^{-10}$ and $\lambda_{\textup{GPy}}=10^{-8}$, are the fixed default nuggets of \emukit~and \gpy~respectively. While $\lambda_{\textup{Emu}}$ can be adjusted via the \emukit~interface, $\lambda_{\textup{GPy}}$ is not directly accessible to the user. The dynamic nugget term $\lambda_{\textup{GPyDyn}}$ is computed as follows. Let $\mA = \mK_{\points\points} +  (\lambda_{\textup{Emu}} + \lambda_{\textup{GPy}} )\Id_N$ be the kernel Gram matrix augmented with the fixed nugget terms and let $\bar{a}=\frac{1}{N}\sum_{i=1}^NA_{ii}$ be the mean of its diagonal elements. 
The third summand, $\lambda_{\textup{GPyDyn}}$, is a dynamic nugget that \gpy~adds automatically when the Cholesky decomposition of $\mA$ (used to solve the linear system of the kernel Gram matrix) fails. $\lambda_{\textup{GPyDyn}}$ is initialized to zero and then increased up to 5 times, taking the successive values $0$, $\bar{a}10^{-6}$ , $\bar{a}10^{-5}$ , $\bar{a}10^{-4}$, $\bar{a}10^{-3}$, and $\bar{a}10^{-2}$ until the Cholesky decomposition is successful. If the Cholesky decomposition fails with the largest dynamic nugget $\bar{a}10^{-2}$, the method finally gives up and throws an error. In contrast to the fixed nugget terms, the dynamic nugget depends on $\mK_{\points\points}$ and is reset to zero after each successful Cholesky decomposition.
Hence every Gram matrix in the experiments incorporates a nugget that is at least $10^{-8}$ but \emph{may} vary from one matrix to the next.

All nodes, except the ones obtained actively via IVR, can be pre-computed as their sampling is model-independent. In the settings where we pre-compute nodes we (i) first sample the nodes, (ii) then hand them to the \emukit~BQ wrapper as a fixed dataset, (iii) fit the hyper-parameters $\hparams$, and (iv) finally compute $I_{\measure}(\fGP)\mid\dataset$ using \emukit's kernel embeddings. When the nodes are random, we use the same \numpy~seed of~0 across all experiments. This means that random, model-independent nodes are identical across models and integrands $f$ for a given $N$, which may be reflected in some curves showing similar fluctuations. 
We elaborate on this further when describing the results below.
For active node selection with IVR, we fit the hyper-parameters successively after every newly obtained node.
IVR as an active node acquisition method further requires an initial design which we choose to be $N_{\textup{init}}=\min\{N, 5\}$ random nodes from $\measure$. This ensures that hyper-parameter optimization by maximum marginal likelihood does not fail. The initial design uses the same \numpy~seed of~0.

\subsection{Integrands (test functions)}
\label{sec:exp-func}

An integration problem is composed of an integrand or, \emph{test function}, $f$ and a measure $\measure$. For the experiments we choose $\measure$ to be the standard Lebesgue (i.e., uniform) measure on the unit interval $\domain=[0, 1]$.
The test functions $f$ cover a smooth and non-smooth as well as stationary and non-stationary\footnote{When referring to functions we use the word ``stationary'' as in \citet[p.\@~4]{RasmussenWilliams2006} who write \textit{``informally, stationarity means that the functions look similar at all $x$ locations''.}} integrands.
All test functions are continuous and do not exhibit ``jumps''.

In order to create distinct test functions with similar properties, we choose parameterized families of functions and sample the parameters for an individual test function from some distribution.
That is, $f$ is deterministic for each fixed set of parameters.
In total, for every parameterized family we produce $T=50$ deterministic integrands for evaluation. The two families are:

\begin{itemize}
\item \emph{Univariate, smooth, stationary integrands:} We use smooth test functions as in \citet[Sections~2 \&~3]{Filip19}. The integrand is defined as a finite Fourier series 
\begin{equation} \label{eq:fourier-test-func}
  f(x) = \sqrt{2}\sum_{j=0}^J{[a_j\cos(2\pi j x/L) + u_j\sin(2\pi j x/L)]} 
\end{equation}
with length-scale $L=5$ and $J=25$ terms and scalar coefficients $a_j$ and $u_j$ for $j=0,\dots, J$. The chosen values for $L$ and $J$ ensure that the function is reasonably variable in the domain $\domain$ yet not too easy to learn.
  All functions are infinitely smooth.
  Each of the $T=50$ sets of coefficients $\{a_j, u_j\}_{j=1}^J$ is produced according to $a_j, u_j\sim\Normal(0, 2(J+1))$.
  We set $a_0=u_0 = 0$ to define an offset.
\item \emph{Univariate, non-smooth, continuous, non-stationary integrands}: These are approximate Brownian motion paths represented by a truncated Karhunen--Loève expansion of the Brownian motion:
\begin{equation} \label{eq:BM-test-func}
  f(x) = \sqrt{2}\sum_{j=1}^J a_j \frac{\operatorname{sin}((j-\frac{1}{2})\pi x)}{(j-\frac{1}{2})\pi} .
\end{equation}
We choose $J=500$ terms. Each of the $T=50$ sets of sets of coefficients $\{a_j\}_{j=1}^J$ is sampled according to $a_j\sim\Normal(0, 1)$. The constructed functions resemble non-smooth, continuous functions quite well.
 \end{itemize}

\begin{table}[b]
  \def\arraystretch{1.5} 
    \centering
    \begin{tabular}[t]{lccl}
{\bf Name}  & {\bf Smooth} & {\bf Stationary} & {\bf $d$}\\
\hline
\rowcolor{black!05} Fourier [Eq.\@~\eqref{eq:fourier-test-func}]& \cmarkg & \cmarkg & 1\\
 Brownian motion [Eq.\@~\eqref{eq:BM-test-func}] & \xmarkr & \xmarkr & 1\\
\end{tabular}
\caption{Summary of the test functions used in the experiments.}
\label{tab:exp-f}
\end{table}

All of the above integrands can be represented as function handle $f(\cdot)$ in code as opposed to some fixed vector of observations $\vf_{\points}$.
This is achieved by constructing $f(\cdot)$ as a finite sum of basis functions. In our experiments, an explicit function handle is only really needed by active IVR sampling; the BQ model and inference do not require a function handle per se. 

The ground truth integrals are compute with \scipy's {\tt integrate.quad} method to high precision. Table~\ref{tab:exp-f} summarizes the tests functions. Illustrations of some of the $T=50$ integrands for each family are shown in Figure~\ref{fig:example-integrands}.

 \begin{figure}
  \centering
  \includegraphics[width=0.8\linewidth]{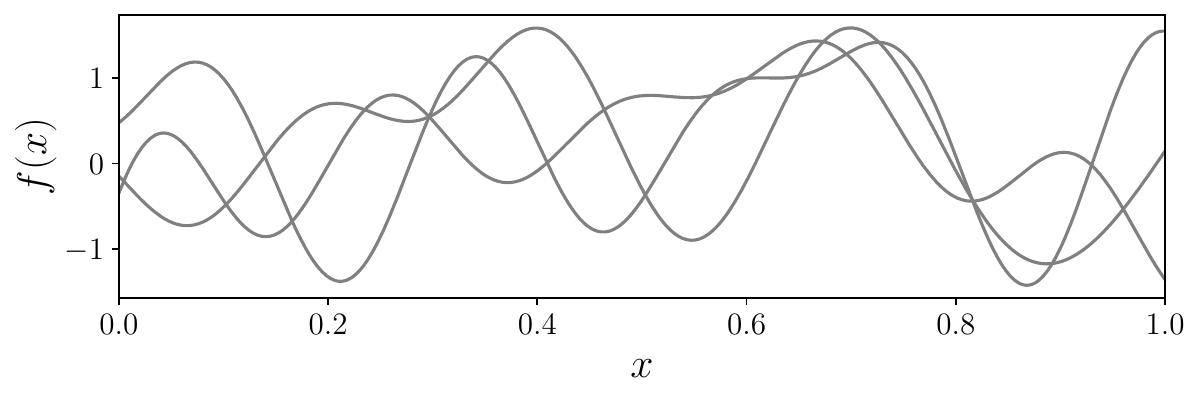} 
  \\
  \includegraphics[width=0.8\linewidth]{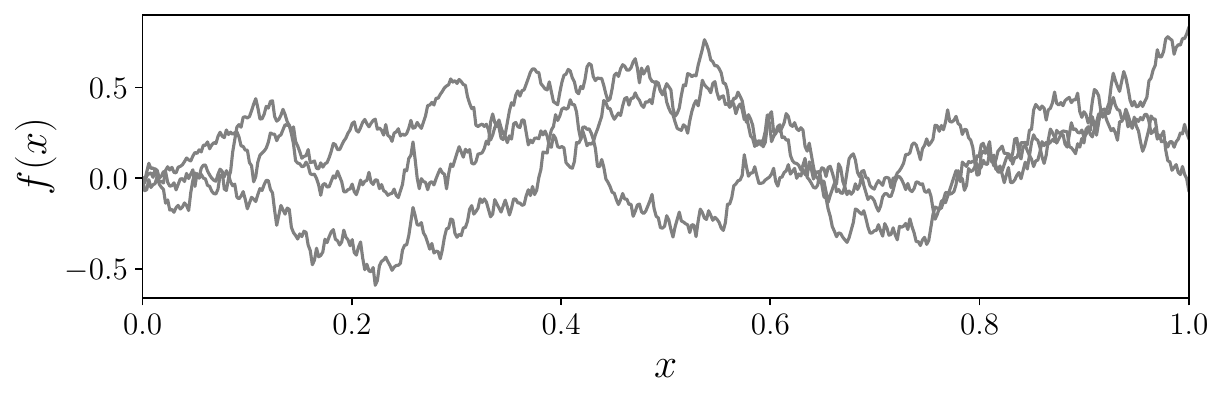}
  \caption{Examples of integrands. \emph{Top}: Three examples of smooth univariate integrands in~\eqref{eq:fourier-test-func}. \emph{Bottom}: Three example of approximate Brownian motion paths in~\eqref{eq:BM-test-func}.
  }
  \label{fig:example-integrands}
\end{figure}

\subsection{Description of individual experiments}
\label{sec:descr-indiv-exper}

We pair each of the five conjugate BQ models of Section~\ref{sec:exp-kernels} with the five node selection schemes of Section~\ref{sec:exp-nodes}.
This yields a total of  25 BQ methods, which we identify by the kernel name and sampling method. Then we evaluate each of the methods on the $T=50$ test integrands of each family of integrands of Section~\ref{sec:exp-func}, which amounts to 1250 combinations of test integrands and BQ methods per family.

Furthermore, we evaluate each of those 1250 combinations on several fixed budgets of size $N$ ranging from $N=1$ to some maximal $N_{f}$ in increments of 1. The maximal budget $N_f$ depends on the integrand family. We choose $N_f = 30$ and $N_f = 50$ for the smooth and non-smooth univariate problems, respectively.
These numbers are chosen to illustrate the performance change of the BQ methods across a relevant range of $N$.
In total, we execute $100\,000$ BQ runs, which were run sequentially on a general purpose \texttt{m1.xlarge} instance with 4 vCPUs.

\subsection{Scores}
\label{sec:scores}

Recall from Eqs.~\eqref{eq:bq-mean} and~\eqref{eq:bq-var} that $\mu_{\dataset}$ and $\Sigma_{\dataset}$ stand for the posterior mean and variance, respectively, of the conjugate BQ model.
The mean $\mu_{\dataset}$ serves as an estimator for the true integral value $I_{\measure}(f)$ and $\Sigma_{\dataset}$ describes the spread of the posterior distribution $ I_\measure(\fGP) \mid \dataset$.

In the experiments, we evaluate the behavior of the BQ methods with two empirical scores.
The first score is the \emph{error score}
\begin{equation*}
  \frac{1}{T}\sum_{t=1}^T\left|\frac{I_{\measure}(f_t) - \mu_{\dataset, t}}{I_{\measure}(f_t)}\right|.
\end{equation*}
The error score quantifies the performance of $\mu_{\dataset}$ as an estimator for the ground truth $I_{\measure}(f)$. More precisely, the error score is the average relative error between $\mu_{\dataset}$ and  $I_{\measure}(f)$ over the test functions.
The subscript $t$ indicates the $t$\textsuperscript{th} test integrand; the average is computed over the $T=50$ integrands for each family separately.\footnote{Occasional outliers, mainly from failed runs, have been discarded when computing the averages. Outliers are defined as scores that lie more than $1.5\times \Delta_{1-3}$ outside of the interquartile range $\Delta_{1-3}$ of all scores. This is the same definition that \scipy~uses for its default boxplots.}

The second score we use is the \emph{calibration score}
\begin{equation*}
  \frac{1}{T}\sum_{t=1}^T\frac{|I_{\measure}(f_t) - \mu_{\dataset, t}|}{\Sigma_{\dataset, t}^{1/2}}.
\end{equation*}
The calibration score quantifies the calibration of the BQ method, or how well the credible interval of the posterior describes the error between the estimator  $\mu_{\dataset}$ and the true integral value $I_{\measure}(f)$. 
The calibration score equals the average absolute error in units of the corresponding credible intervals.
For each test function, the 68\% credible interval centered around $I_{\measure}(\fGP) \mid \dataset = I_{\measure}(f)$ and 1 standard deviation wide to each side is \smash{$[\mu_{\dataset} - \Sigma_{\dataset}^{1/2}, \mu_{\dataset} + \Sigma_{\dataset}^{1/2}]$}, or equivalently \smash{$[-\Sigma_{\dataset}^{1/2}, \Sigma_{\dataset}^{1/2}]$} for the shifted value $I_{\measure}(\fGP) \mid \dataset -  \mu_{\dataset} = I_{\measure}(f) -  \mu_{\dataset}$. Taking the absolute value lets us compare the size of $|I_{\measure}(f) -  \mu_{\dataset}|$ (enumerator) to $\Sigma_{\dataset}^{1/2}$ (denominator).
For an individual test function $f$, the score is smaller than 1 if the true value $I_{\measure}(f)$ lies inside the credible interval and larger than 1 if it lies outside. 
Hence, a somewhat \emph{constant} calibration score across different $N$ (e.g., hovering around $1$) suggests that the BQ method is well calibrated. An increasing or decaying score suggests that the method is prone to over- or underconfidence, respectively~\citep{Karvonen2020b, Wang2021}. It is usually not as harmful for a method to be underconfident (too uncertain) than overconfident (too certain).\footnote{The calibration score is a simple proxy to get a general idea of the calibration and provides by no means a complete picture. We refer to \citet{Cockayne2022} for a formal treatment of calibration (see in particular Remark~9).}

\section{Results}
\label{sec:exp-results}

This section describes the qualitative behavior of the BQ methods on the families of test functions.

\subsection{Smooth test functions (sampling axis)}
\label{sec:smooth-test-funct-sampling}

Figure~\ref{fig:sampling-config003} depicts the performance of the 5 models along the \emph{sampling} axis for the family of univariate smooth test functions. Each row shows a single model with kernels of increasing smoothness from top to bottom. The first 4 rows show models of finite, and hence of lesser, smoothness than the test functions (Brownian motion, Matérn~$\sfrac{1}{2}$, Matérn~$\sfrac{3}{2}$, and Matérn~$\sfrac{5}{2}$), while the last row shows a model with matching (infinite) smoothness (square exponential). The left and right columns show the error score and the calibration score, respectively. From the plots we can make several qualitative observations.

\afterpage{%
\begin{figure}
  \centering
  \includegraphics[width=\textwidth]{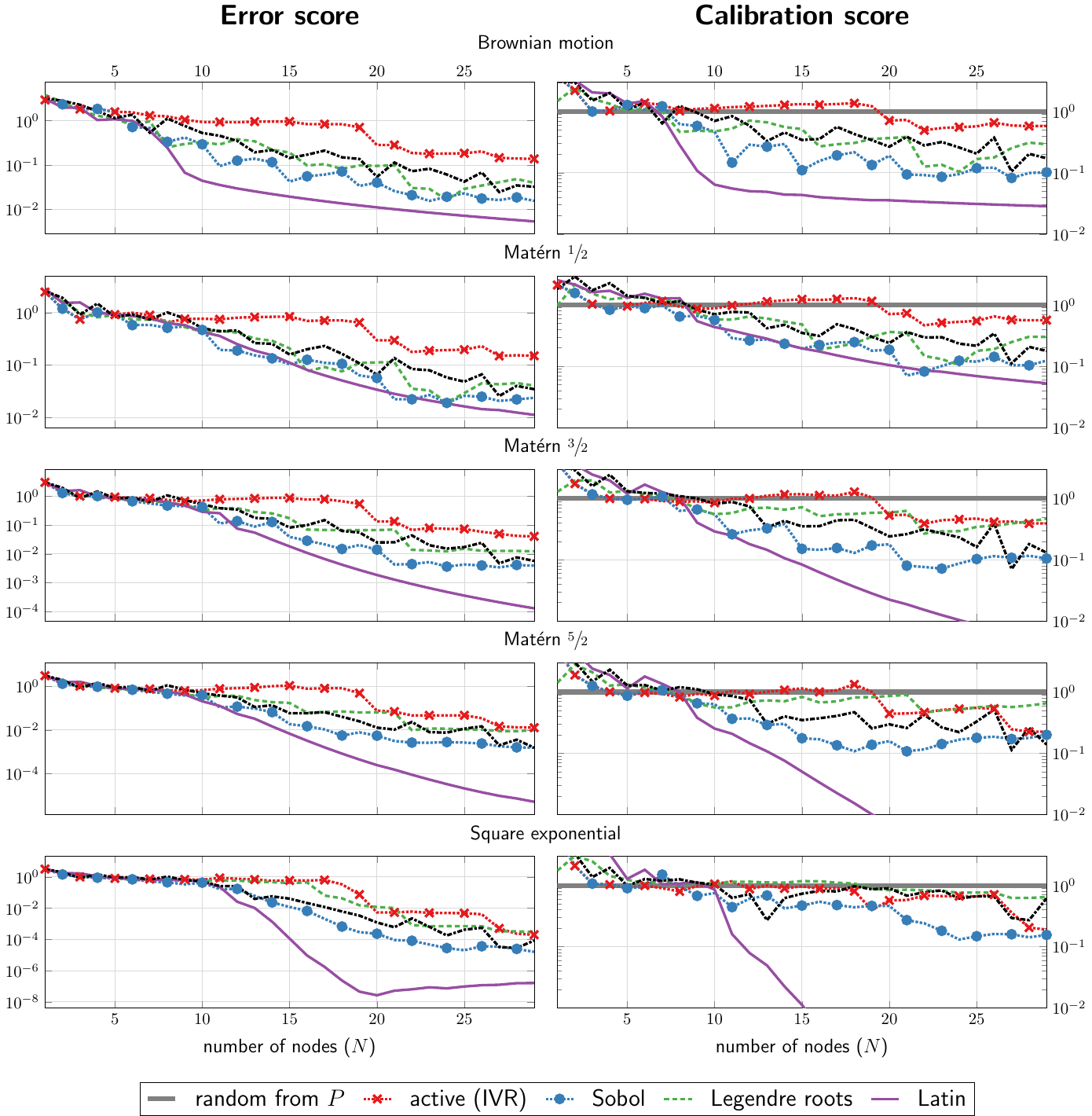}
  \caption{Performance of the models along the \emph{sampling} axis on \emph{univariate smooth functions}. 
  Left column: error scores. Right column: calibration scores.
  Each row shows a single model with kernels of increasing smoothness from top to bottom. The first 4 rows show models of finite, and hence of lesser, smoothness than the test functions (Brownian motion, Matérn~$\sfrac{1}{2}$, Matérn~$\sfrac{3}{2}$, and Matérn~$\sfrac{5}{2}$), while the last row shows a model with matching (infinite) smoothness (square exponential). The left and right columns show the error score and the calibration score, respectively. From the plots we can make several qualitative observations.
  It is apparent that the sampling choice has a major effect on the error score (left column).
  In this particular example, Legendre roots seem to outperform all other sampling methods considered, followed by IVR, while random samples from $\measure$ perform the worst across all models. Further, all models are well calibrated or under-confident for \smash{$N \gtrapprox 10$}. See main text for more details.
  }
  \label{fig:sampling-config003}
\end{figure}
\clearpage
}

  First, it is apparent that the sampling choice has a major effect on the error score (left column).
  In this particular example, Legendre roots seem to outperform all other sampling methods considered, followed by IVR. Random samples from $\measure$ perform the worst across all models. The Sobol sequence and Latin hypercube perform similarly, and generally better than random samples from $\measure$.
  While not covered by the theory in Section~\ref{sec:guarantees}, the superiority of Legendre roots is likely related to the well-known phenomenon in numerical analysis that nodes which are denser near the boundary of the domain than in its interior are good for approximating and integrating infinitely smooth functions~\citep{Trefethen2008, RiegerZwicknagl2014}. In Figure~\ref{fig:sampling-config017} we observe that the performance of Legendre nodes differs little from other sampling choices when the integrands are non-smooth.

  Second, from the right column we see that all models are well calibrated or underconfident for \smash{$N \gtrapprox 10$}, as the calibration scores hover around a constant value or tend to zero.
   There is a ``kink'' in the calibration score for random nodes (red) at $N \approx 18$ for nodes sampled from $\measure$ that vanishes at $N=20$ (coinciding with a drop of the corresponding error scores). This is an artifact of a ``gap'' in the randomly sampled nodes that gets filled at $N = 20$.
  The other, space-filling designs do not exhibit this fluctuation.
  The ``kink'' is less pronounced for more well-specified (i.e., smoother) models.
  The ``kinks'' do not average out across the $T=50$ integrands because the random nodes $\points$ are identical for a given $N$ and across all experiments as the same random seed was used (see Section~\ref{sec:exp-code}). When the model does not oversmooth a univariate stationary integrand, the sampling choice therefore appears to mostly affect the performance of the mean estimator $\mu_{\dataset}$. Calibration is decent and largely unaffected by sampling. This makes the BQ methods usable in this particular setting.

 The third observation we make is the flattening of the performance score in the bottom left plot of Figure~\ref{fig:sampling-config003} for the Legendre nodes at around $10^{-7}$. This is likely due to the positive nugget used in the experiments (see Section~\ref{sec:exp-code}). \Cref{thm:matern-generic,thm:matern-quasi-uniform,thm:se-generic,thm:se-quasi-uniform} suggest that a positive nugget may either greatly slow the rate of convergence or altogether prevent convergence after a certain accuracy has been reached.

\subsection{Smooth test functions (model axis)}
\label{sec:smooth-test-funct-model}

Figure~\ref{fig:kernel-config003} shows the same curves as Figure~\ref{fig:sampling-config003} but this time along the \emph{model} axis. That is, rows are now grouped according to the sampling choice such that the effect of the kernel can be observed in each subplot. The nodes $\points$ are identical on each row and $N$ across all test functions if sampling is model-independent (rows 1--4). For deterministic nodes this is fulfilled trivially; for random nodes this is because the same random seed was used across experiments (see Section~\ref{sec:exp-code}). Hence, solely the effect of the model can be observed in each plot. The nodes on row~5 differ across $N$, as well as models and test functions, as the IVR sampler is active and model-dependent. Again we make some qualitative observations.

\afterpage{%
\begin{figure}[t]
  \centering
  \includegraphics[width=\textwidth]{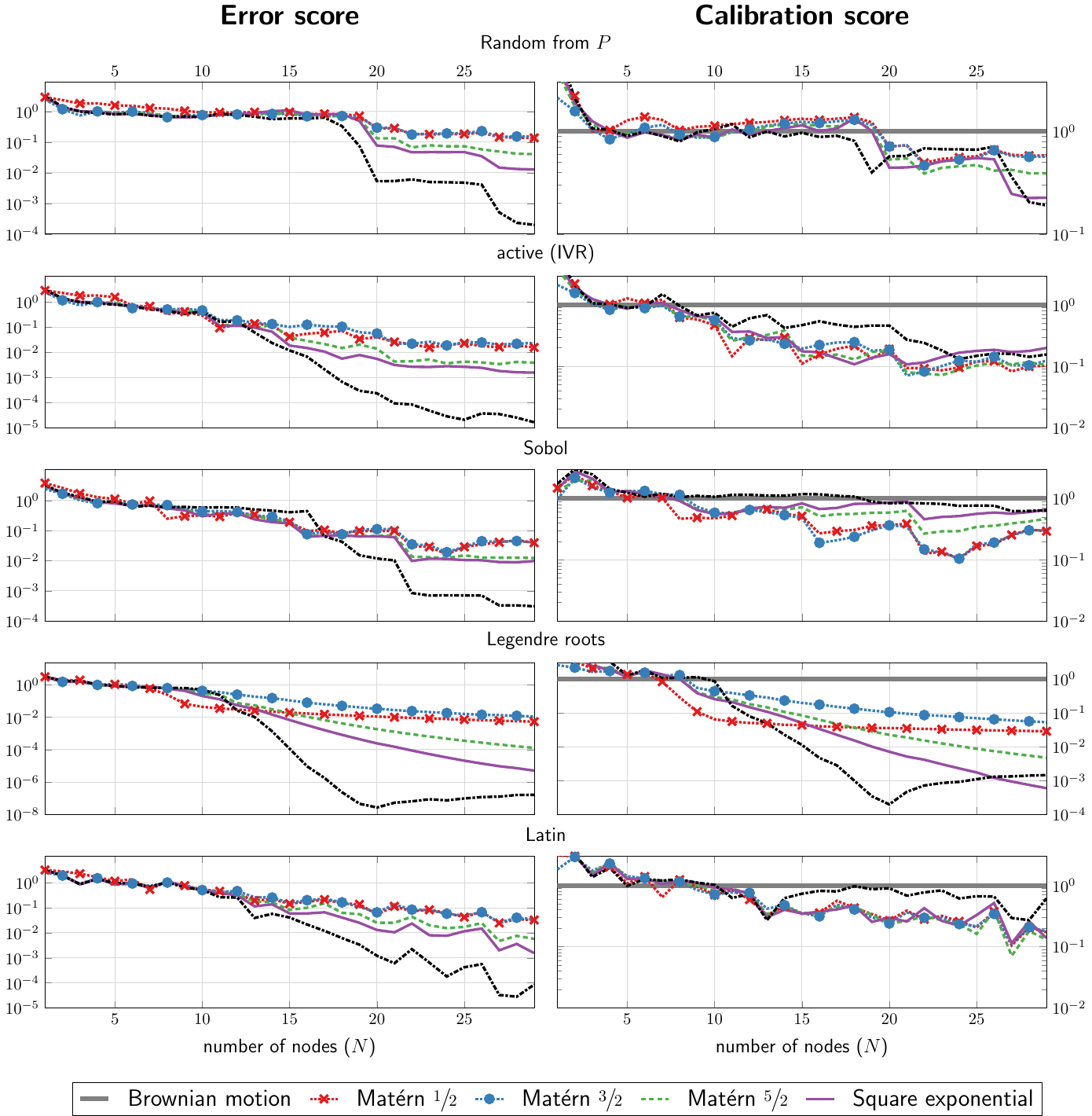}
  \caption{
    Performance of the models along the \emph{model} axis on the family of \emph{univariate smooth functions}.
    Left column: error scores. Right column: calibration scores.
    Rows are now grouped according to the sampling choice such that the effect of the kernel can be observed in each subplot. The nodes $\points$ are identical on each row and $N$ across all test functions if sampling is model-independent (rows 1--4). For deterministic nodes this is fulfilled trivially; for random nodes this is because the same random seed was used across experiments (see Section~\ref{sec:exp-code}). Hence, solely the effect of the model can be observed in each plot. The nodes on row~5 differ across $N$, as well as models and test functions, as the IVR sampler is active and model-dependent.
    The models perform better with increasing smoothness; the infinitely differentiable square exponential kernel that matches the smoothness of the test functions performs best across all sampling choices. All models are well calibrated or underconfident.
    See Section~\ref{sec:smooth-test-funct-model} for more details.
 }
  \label{fig:kernel-config003}
\end{figure}
\clearpage
}

First, the right column shows that all models are well calibrated or underconfident. The ``kink'' mentioned in Section~\ref{sec:smooth-test-funct-sampling} at $N \approx 18$ for random nodes from $\measure$ can be observed in the top right plot. There are no substantial differences between the models, except that the infinitely smooth square exponential model is generally the least underconfident.
Because the test integrands are infinitely smooth, this is not surprising.

Second, the error score (left column) decreases as the kernel becomes smoother.
In particular, the infinitely smooth square exponential kernel performs better than other kernels across all sampling choices.
This is consistent with the theoretical guarantees of Section~\ref{sec:guarantees}, which indicate that smoother kernels can attain faster rates of convergence provided that the integrand is sufficiently smooth.
In conclusion, models that undersmooth the truth are decently calibrated but have larger integration errors than models that do not undersmooth.

\subsection{Non-smooth test functions (sampling axis)}
\label{sec:non-smooth-test-sampling}

Figure~\ref{fig:sampling-config017} depicts the performance of the 5 models along the \emph{sampling} axis for the family of approximate Brownian motion paths. Again, each row shows a single model with kernels of increasing smoothness from top to bottom. The first two rows show models that are continuous but non-differentiable and match the smoothness (or roughness) of the test functions. The models on rows 3--5 are increasingly smooth and thus oversmooth the truth. In particular, the square exponential kernel on row~5 is ``infinitely smoother'' than the test functions. Again, the left and right columns show the error score and the calibration score, respectively. From the plots we can make the following qualitative observations.

First, the sampling choice has no apparent effect on the error score for the models with matching smoothness (rows~1--2); solely random sampling from $\measure$ performs slightly poorer. 
This is quite surprising and unlike the behavior we observed in Section~\ref{sec:smooth-test-funct-sampling} for the family of smooth test functions.
As expected, calibration is good for all sampling schemes.

\afterpage{%
\begin{figure}[t]
  \centering
  \includegraphics[width=\linewidth]{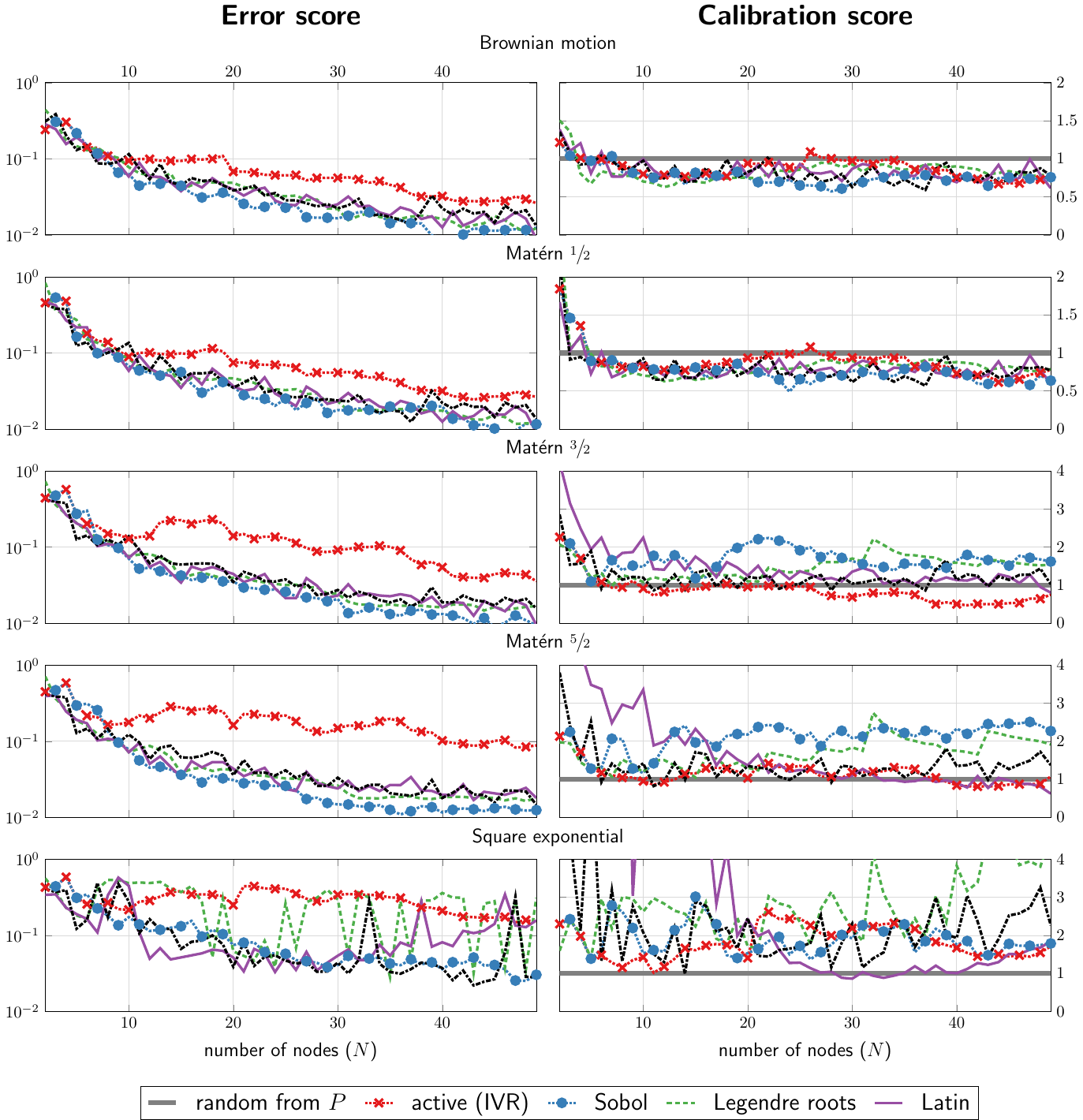}
  \caption{
    Performance of the models along the \emph{sampling} axis on the family of \emph{approximate Brownian motion paths}. 
    Left column: error scores. Right column: calibration scores.
    Each row shows a single model with kernels of increasing smoothness from top to bottom. The first two rows show models that are continuous but non-differentiable and match the smoothness (or roughness) of the test functions. The models on rows 3--5 are increasingly smooth and thus oversmooth the truth. In particular, the square exponential kernel on row~5 is ``infinitely smoother'' than the test functions.
    The sampling scheme has no visible effect on the error score for models with matching smoothness. 
    Random sampling from $\measure$ is slightly worse than other schemes.
    All sampling schemes yield well calibrated models.
    The smoother then model, the worse it performs in both scores when nodes are sampled from $\measure$.    
    See Section~\ref{sec:non-smooth-test-sampling} for more details.
    }
  \label{fig:sampling-config017}
\end{figure}
\clearpage
}

Second, models that oversmooth (rows~3--5) perform successively worse in error score. Interestingly, random sampling from $\measure$ reinforces the adverse effect of oversmoothing.
This may be explained by the presence of the mesh ratio term $\rho_\points^{\alpha - \beta}$ [mesh ratio is define in Eq.\@~\eqref{eq:mesh-ratio}] in \Cref{thm:matern-generic}: when there is significant oversmoothing ($\alpha - \beta$ is large), this term is large.
Moreover, the mesh ratio is large if the nodes exhibit clustering, which explains the behavior of random sampling.
The calibration score is less stable and indicates increasing overconfidence across sampling schemes for models that oversmooth.

Unlike undersmoothing, oversmoothing the truth can have a detrimental effect on error and calibration scores across all sampling choices. Oversmoothing the truth slightly can have an undesirable effect on the calibration score while the error score is mostly unaffected unless random sampling from $\measure$ is used. At the same time, the choice of sampler is less important for models with matching smoothness. Random sampling seems to be the poorest choice for all models, a fact that may be explained by the observation above on the magnitude of the mesh ratio for random nodes.

\subsection{Non-smooth test functions (model axis)}
\label{sec:non-smooth-test-model}

Figure~\ref{fig:kernel-config017} shows the same curves as Figure~\ref{fig:sampling-config017} but this time along the \emph{model} axis. That is, rows are now grouped according to the sampling choice such that the effect of the kernel can be observed in each subplot. Again, the nodes $\points$ are identical on each row and $N$ across all test functions if sampling is model-independent (rows 1--4).
Hence, solely the effect of the model can be observed in each plot. Again, the nodes in the bottom row~5 differ across $N$, as well as models and test functions, as the IVR sampler is active and model-dependent. We observe the following qualitative behavior.

Unsurprisingly, models that match the roughness of the integrand (Brownian and Matérn~$\sfrac{1}{2}$) perform best on both error and calibration scores for all sampling schemes. 
Models that oversmooth slightly (Matérn~$\sfrac{3}{2}$ and Matérn~$\sfrac{5}{2}$) have equally good error scores (exception described below) but appear to be somewhat overconfident.
The square exponential kernel that oversmooths heavily performs poorly across all sampling schemes.
Random sampling from $\measure$ (row~1) is the only sampling scheme where even slight oversmoothing has an adverse effect on the the error score.
This effect was given theoretical explanation in \Cref{sec:non-smooth-test-sampling}.
Random sampling should therefore be avoided whenever there is a risk of oversmoothing.
However, random sampling can be expected to perform well and be well-calibrated if smoothness is correctly specified.

\afterpage{%
\begin{figure}[t]
  \centering
  \includegraphics[width=\linewidth]{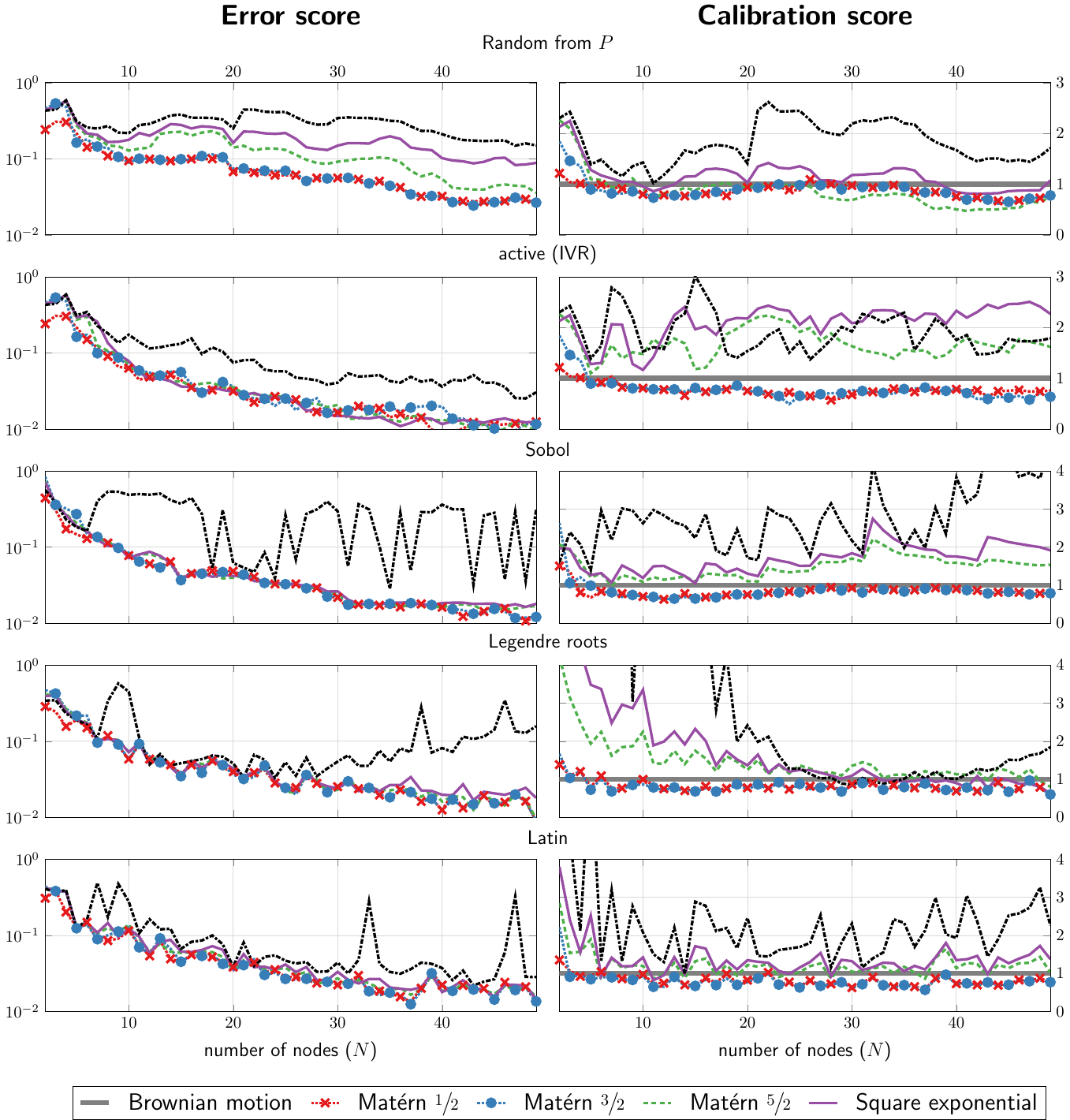}
  \caption{
    Performance of the models along the \emph{model} axis on the family of \emph{approximate Brownian motion paths}.
    Left column: error scores. Right column: calibration scores.
    Each row shows a single sampling choice for models with increasing smoothness. 
    The nodes $\points$ are identical on each row and $N$ across all test functions if sampling is model-independent (rows 1--4).
  Hence, solely the effect of the model can be observed in each plot.   
  The nodes in the bottom row~5 differ across $N$, as well as models and test functions, as the IVR sampler is active and model-dependent. 
  Models that match the smoothness of the integrand perform best on both scores for all sampling schemes. 
  Models perform poorer in error score and can be increasingly overconfident the more they oversmooth the truth; unsurprisingly, the square exponential kernel that oversmooths the truth infinitely performs much poorer than all other models.    
    See Section~\ref{sec:non-smooth-test-model} for more details.
    }
  \label{fig:kernel-config017}
\end{figure}
\clearpage
}

\section{Further experimental details}
\label{sec:exp-details}

This section contains further details on the implementation of the experiments.
The samplers of Table~\ref{tab:exp-sampler} were implemented as follows:

\begin{itemize}
  \item \emph{Random from $\measure$} (\emph{random}): Nodes are random samples from the integration measure $\measure$. We use the method
  \begin{equation*}
    \text{\texttt{uniform} \: of \: \numpy's \texttt{random.default\_rng}}
  \end{equation*}
  when $\measure$ is the uniform measure on the unit interval $[0, 1]$.

  \item \emph{Latin hypercube} (\emph{random}): Nodes are samples of a Latin hypercube \citep{MacKay1979}, a distribution of space-filling pseudo hyper-grids. We use the method
  \begin{equation*}
    \text{ \texttt{sample} \: of \: \scipy's  \texttt{stats.qmc.LatinHypercube} }
  \end{equation*} 
  with default arguments.

  \item \emph{Sobol sequence} (\emph{deterministic}): Nodes are the Sobol sequence in dimension $d$. We use the method
  \begin{equation*}
    \text{\texttt{random\_base2} \: of \: \scipy's \texttt{stats.qmc.Sobol} }
  \end{equation*}
  with \texttt{scramble=False}.
  A Sobol sequence produces points for $N$, an integer power of 2. For $N$ which are not powers of 2, we use the next larger sequence and truncate it.

  \item \emph{Legendre roots} (\emph{deterministic}): Nodes are the roots of some Legendre polynomial. We use the method 
  \begin{equation*}
    \text{\texttt{roots\_legendre} \: of \: \scipy's \texttt{special}}
  \end{equation*}
  module.
  
  \item \emph{Integral variance reduction} (\emph{sequential, active, model-dependent}): Nodes are collected sequentially and myopically with batch size $n=1$. The acquisition function  $a(\tilde{\vx}\mid \measure)$ is the integral variance reduction (IVR) acquisition function from Section~\ref{sec:sampling-active}. We use
  \begin{equation*}
    \begin{split}
    &{\tt IntegralVarianceReduction} \\
     &\hspace{2cm}\text{of \: \emukit's {\tt quadrature.acquisitions}}
    \end{split}
  \end{equation*}
  module as well as its sequential acquisition loop.
\end{itemize}

\chapter{Proofs for Chapter~\ref{sec:guarantees}} \label{sec:proofs}

This chapter contains proofs for the results in \Cref{sec:guarantees}.
We begin with some preliminaries.

\section{Reproducing kernel Hilbert spaces}
\label{sec:rkhs-appendix}

Some basic properties of RKHSs have appeared already in \Cref{sec:conn-appr-theory}.
A kernel $\kernel \colon \domain \times \domain \to \R$ is positive-semidefinite if the kernel Gram matrix $\mK_{\points\points} = (\kernel(\vx_i, \vx_j))_{i,j=1}^N$ is positive-semidefinite for any $N \in \N$ and any set $\points = \{\vx_1, \ldots, \vx_N \} \subseteq \domain$ of points.
The classical Moore--Aronszajn theorem~\citep{Aronszajn1950} states that for every positive-semidefinite kernel there is a unique Hilbert space $\rkhs(\kernel)$, the RKHS of $\kernel$, which consists of real-valued functions defined on $\domain$ and is equipped with an inner product $\langle \cdot, \cdot \rangle_{\rkhs(\kernel)}$ and the corresponding norm $\norm{\cdot}_{\rkhs(\kernel)}$ such that the kernel has the following reproducing property:
\begin{equation} \label{eq:reproducing-property}
  \langle g, \kernel(\cdot , \vx) \rangle_{\rkhs(\kernel)} = g(\vx) \quad \text{ for all } \quad g \in \rkhs(\kernel) \text{ and } \vx \in \domain.
\end{equation}
Whenever it is necessary to be precise about the domain of the functions in the RKHS, we say that $\rkhs(\domain)$ is an RKHS \emph{on} $\domain$.
From the Riesz representation theorem it follows that the reproducing property extends to any continuous linear functional on $\rkhs(\kernel)$.
Namely, if $L \colon \rkhs(\kernel) \to \R$ is a continuous linear functional, then
\begin{equation*}
  L(g) = \langle g, \kernel_L \rangle_{\rkhs(\kernel)} \quad \text { for all } \quad g \in \rkhs(\kernel),
\end{equation*}
where $\kernel_L$, the representer of $L$, is defined as $\kernel_L(\vx) = L(k(\cdot, x))$.
Selecting $L = I_\measure$ yields
\begin{equation*}
  I_\measure(g) = \langle g, \kernel_\measure \rangle_{\rkhs(\kernel)},
\end{equation*}
where $\kernel_\measure(\vx) = \int_\domain \kernel(\vx', \vx) \dif \measure(\vx')$ is the kernel mean embedding in~\eqref{eq:kernel-mean}.

It is usually not straightforward to decide whether or not a given function is an element of the RKHS of a given kernel.
There are two characterisations of the RKHS that are occasionally helpful.
If there are functions $\{\varphi_i\}_{i=1}^\infty$ such that
\begin{equation*}
  \kernel(\vx, \vx') = \sum_{i=1}^\infty \varphi_i(\vx) \varphi_i(\vx') \quad \text{ for all } \quad \vx, \vx' \in \domain,
\end{equation*}
then every $g \in \rkhs(\kernel)$ may be written as $g = \sum_{i=1}^\infty c_i \varphi_i$ (the convergence is both in $\norm{\cdot}_{\rkhs(\kernel)}$ and pointwise) for some $c_i \in \R$ such that $\sum_{i=1}^\infty c_i^2 < \infty$~\citep[Sec.\@~2.1]{Paulsen2016}.
Conversely, if no such expansion for square-summable $c_i$ exists, then $g \notin \rkhs(\kernel)$.

If the kernel is stationary on $\R^d$, in that there is a function $\Phi_\kernel \colon \R^d \to \R$ such that $\kernel(\vx, \vx') = \Phi_k(\vx - \vx')$ for all $\vx, \vx' \in \R^d$, then the RKHS of $\kernel$ on $\R^d$ may be expresssed in terms of Fourier transforms~(\citealp{KimeldorfWahba1970}; \citealp[Thm.\@~10.12]{Wendland2005}).
Namely, if $\Phi_\kernel$ is continuous and integrable, then $\rkhs(\kernel)$ is 
\begin{equation} \label{eq:RKHS-Fourier-Rd}
  \Set[\bigg]{ g \in C(\R^d) \cap L^2(\R^d) }{ \norm{g}_{\rkhs(k)}^2 = \frac{1}{(2\pi)^{d/2}} \int_{\R^d} \frac{\abs[0]{\widehat{g}(\vxi)}^2}{\widehat{\Phi}_k(\vxi)} \dif \vxi < \infty},
\end{equation}
where $\widehat{g}(\vxi) = (2\pi)^{-d/2} \int_{\R^d} g(\vx) e^{-\mathrm{i} \vxi\Trans \vx} \dif \vx$ denotes the Fourier transform.
To use the Fourier characterisation on a subset of $\R^d$ we use a standard result on restrictions of RKHSs~\citep[Sec.\@~1.4.2]{Berlinet2004}.
Let $\rkhs(\kernel, \R^d)$ denote the RKHS in~\eqref{eq:RKHS-Fourier-Rd} and let $\rkhs(\kernel)$ be the RKHS on $\domain \subset \R^d$.
Then $\rkhs(\kernel)$ consists of functions $g \colon \domain \to \R$ for which there exists an \emph{extension} $g_e \in \rkhs(\kernel, \R^d)$ such that $g_e(\vx) = g(\vx)$ for all $\vx \in \domain$ (i.e., $g_e|_\domain = g$).
Moreover,
\begin{equation*}
  \norm[0]{g}_{\rkhs(\kernel)} = \min\Set{ \norm[0]{g_e}_{\rkhs(\kernel, \R^d)} }{ g_e \in \rkhs(\kernel, \R^d) \text{ is s.t. } g_e|_\domain = g}.
\end{equation*}

\section{Sobolev spaces} \label{sec:sobolev-spaces}

In Section~\ref{sec:guarantees-isotropic-matern}, we defined the Sobolev space $H^\alpha([0, 1]^d)$ of integer smoothness $\alpha$ as the space of functions whose weak derivatives up to order $\alpha$ exist and are in $L^2([0, 1]^d)$.
To allow fractional smoothness we may turn to Fourier transforms.
For any $\alpha > 0$, we let $H^{\alpha}(\R^d)$ be the Hilbert space of functions $g \in L^2(\R^d)$ such that
\begin{equation} \label{eq:sobolev-norm-fourier}
  \norm[0]{ g }_{H^{\alpha}(\R^d)}^2 = \int_{\R^d} (1 + \norm[0]{\vxi}^2)^\alpha \abs[0]{ \widehat{g}(\vxi) }^2 \dif \vxi < \infty.
\end{equation}
To illustrate the relationship of this norm to condition in~\eqref{eq:sobolev-norm-weak-derivative} expressed using weak derivatives, consider $\alpha \in \N$ and $d = 1$.
We may then use the binomial theorem, the properties of the Fourier transform, and Parseval's identity to compute
\begin{equation*}
  \begin{split}
    \norm[0]{ g }_{H^{\alpha}(\R)}^2 = \int_{\R} (1 + \xi^2)^\alpha \abs[0]{ \widehat{g}(\xi) }^2 \dif \xi &= \sum_{n=0}^\alpha \binom{\alpha}{n} \int_{\R} \abs[0]{ \xi^{n} \widehat{g}(\xi) }^2 \dif \xi \\
    &= \sum_{n=0}^\alpha \binom{\alpha}{n} \int_{\R} [ \mathrm{D}^n g(x) ]^2 \dif x,
  \end{split}
\end{equation*}
which is, up to binomial coefficient factors for each integral, equal to the left-hand side of~\eqref{eq:sobolev-norm-weak-derivative} defined on $\R$ instead of on $[0, 1]$.
A fractional Sobolev space and its norm on $[0, 1]^d$ can be defined via restrictions in a way analogous to how we related $\rkhs(\kernel)$ to $\rkhs(\kernel, \R^d)$ in Section~\ref{sec:rkhs-appendix}.
One can then show that $g$ is in the resulting space if and only if it satisfies~\eqref{eq:sobolev-norm-weak-derivative}.

\section{Proofs for Section~\ref{sec:conv-rates}} \label{sec:proofs-conv-rates}

This section contains proofs for the results in Section~\ref{sec:conv-rates}.

\subsection{Proofs for generic nodes}

It is a basic result~\citep[e.g.,][Sec.\@~3.2]{Kanagawa2018} that the regularised posterior mean
\begin{equation*}
  m_{\dataset, \lambda}(\vx) = \vk_\points(\vx)\Trans (\mK_{\points\points} + \lambda \Id_N  )^{-1} \vy
\end{equation*}
is the unique solution to a regularised least-squares problem:
\begin{equation} \label{eq:mean-regularised-least-squeres}
  m_{\dataset, \lambda} = \argmin_{s \in \rkhs(\kernel)} \Big\{ \sum_{i=1}^N [\f(\vx_i) - s(\vx_i)]^2 + \lambda \norm[0]{s}_{\rkhs(\kernel)}^2 \Big \}.
\end{equation}
Because $m_\dataset$ is an interpolant to $\f$, which is to say that $m_\dataset(\vx_i) = \f(\vx_i)$ for every $i = 1,\ldots,N$, and an element of $\rkhs(\kernel)$, we have from~\eqref{eq:mean-regularised-least-squeres} that 
\begin{equation} \label{eq:regularisation-norm-bound}
  \sum_{i=1}^N [\f(\vx_i) - m_{\dataset,\lambda}(\vx_i)]^2 + \lambda \norm[0]{m_{\dataset,\lambda}}_{\rkhs(\kernel)}^2 \leq \lambda \norm[0]{m_{\dataset}}_{\rkhs(\kernel)}^2.
\end{equation}
Consequently,
\begin{equation} \label{eq:regularised-nodeset-error}
  \Big(\sum_{i=1}^N [\f(\vx_i) - m_{\dataset,\lambda}(\vx_i)]^2 \Big)^{1/2} \leq \sqrt{\lambda} \norm[0]{m_{\dataset}}_{\rkhs(\kernel)}.
\end{equation}
We also need to recall the minimum-norm interpolation property~\eqref{eq:minimum-norm-interpolation} of the conditional mean, which states that
\begin{equation} \label{eq:minimum-norm-interpolation-proofs}
  m_\dataset = \argmin_{s \in \rkhs(\kernel)} \Set[]{ \norm[0]{s}_{\rkhs(\kernel)} }{ s(\vx_i) = \f(\vx_i) \text{ for every } i = 1, \ldots, N},
\end{equation}
and its particular consequence $\norm[0]{m_\dataset}_{\rkhs(\kernel)} \leq \norm[0]{\f}_{\rkhs(\kernel)}$ if $\f \in \rkhs(\kernel)$.

Observe that~\eqref{eq:bq-mean} and its regularised variant, followed by Jensen's inequality, yield
\begin{equation} \label{eq:bq-integral-bound-interpolant}
  \begin{split}
  \abs[0]{ I_P(\f) - \mu_{\dataset, \lambda} } &= \Big\lvert \int_\domain [ \f(\vx) - m_{\dataset,\lambda}(\vx) ] \dif \measure(\vx) \Big\rvert \\
  &\leq \int_\domain \abs[0]{ \f(\vx) - m_{\dataset,\lambda}(\vx) } \dif \measure(\vx).
  \end{split}
\end{equation}
Assumption~\ref{assumption:theory} and a second application of Jensen's inequality then produce
\begin{equation} \label{eq:error-L1L2}
  \begin{split}
    \abs[0]{ I_P(\f) - \mu_{N, \lambda} } &\leq p_\textup{max} \int_{[0, 1]^d} \abs[0]{ \f(\vx) - m_{\dataset,\lambda}(\vx) } \dif \vx \\
    &\leq p_\textup{max} \norm[0]{\f - m_{\dataset,\lambda}}_{L^2([0,1]^d)}.    
    \end{split}
\end{equation}
To bound the integration error it is therefore sufficient to bound the error of approximating $f$ with $m_{\dataset,\lambda}$ in $L^2([0,1]^d)$ norm.
This is what is done in most of the proofs below.

\begin{proof}[Proof of Proposition~\ref{prop:generic-guarantee-1}]
Because the node sets are assumed nested, $m_{N+1}$ interpolates $f$ at $\vx_1, \ldots, \vx_N$.
It thus follows from~\eqref{eq:minimum-norm-interpolation-proofs} that for any sequence of nodes the norm $\norm[0]{m_N}_{\rkhs(\kernel)}$ is non-decreasing in $N$.
Moreover, $\norm[0]{m_{N}}_{\rkhs(\kernel)} \leq \norm[0]{\f}_{\rkhs(\kernel)}$ since $\f \in \rkhs(\kernel)$.
From~\eqref{eq:gp-posterior-mean} and the reproducing property~\eqref{eq:reproducing-property} we get
\begin{equation*}
  \langle m_{N'}, m_N \rangle_{\rkhs(\kernel)} = \langle m_N, m_N \rangle_{\rkhs(\kernel)} = \vf_\points\Trans \mK_{\points\points}^{-1} \vf_\points
\end{equation*}
for any $N' \geq N$.
Therefore 
\begin{equation*}
  \begin{split}
  \lVert m_{N'} - m_N \rVert_{\rkhs(\kernel)}^2 &= \lVert m_{N'} \rVert_{\rkhs(\kernel)}^2 - 2 \langle m_{N'}, m_N \rangle_{\rkhs(\kernel)} + \lVert m_N \rVert_{\rkhs(\kernel)}^2 \\
  &= \lVert m_{N'} \rVert_{\rkhs(\kernel)}^2 - \lVert m_N \rVert_{\rkhs(\kernel)}^2,
  \end{split}
\end{equation*}
which tends to zero as $N \to \infty$ because $\norm[0]{m_N}_{\rkhs(\kernel)}$ is non-decreasing and bounded.
Hence $(m_N)_{N=1}^\infty$ is Cauchy in $\rkhs(\kernel)$ and consequently has a limit $m \in \rkhs(\kernel)$.
We show that the assumptions imply $m = f$.
It is a standard result that the continuity of $\kernel(\cdot, \vx)$ for all $\vx \in \domain$ and the boundedness of $k(\vx, \vx)$ imply that every element of $\rkhs(\kernel)$ is continuous~\citep[e.g.,][Lem.\@~4.28]{Steinwart2008}.
Hence $m$ and $f$ are continuous.
The assumption $h_N \to 0$ is equivalent to $\{\vx_i\}_{i=1}^\infty$ being dense in $\domain$.
A continuous function is fully determined by its values at at a dense set.
Because $m(\vx_i) = f(\vx_i)$ for every $i \in \N$, it thus follows that $m = f$.
Therefore $\lVert f - m_N \rVert_{\rkhs(\kernel)} \to 0$ as $N \to \infty$.
From the reproducing property and the Cauchy--Schwarz inequality it then follows that $m_N$ tends to $f$ uniformly on $\domain$:
\begin{equation*} 
\begin{split}
  \sup_{\vx \in \domain} \abs[0]{ m_{N}(\vx) - f(\vx) } &= \sup_{\vx \in \domain} \abs[0]{\langle m_{N} - f, \kernel(\cdot, \vx) \rangle_{\rkhs(\kernel)} } \\
  &\leq \norm[0]{m_{N} - f}_{\rkhs(\kernel)} \sup_{\vx \in \domain} \norm[0]{\kernel(\cdot, \vx)}_{\rkhs(\kernel)} \\
&= \norm[0]{m_{N} - f}_{\rkhs(\kernel)} \sup_{\vx \in \domain} \sqrt{\smash[b]{k(\vx, \vx)}} \\
&= C_k \norm[0]{m_{N} - f}_{\rkhs(\kernel)},
\end{split}
\end{equation*}
where $C_k = \sup_{\vx \in \domain} \sqrt{\smash[b]{k(\vx, \vx)}} < \infty$.
Equation~\eqref{eq:bq-integral-bound-interpolant} and Assumption~\ref{assumption:theory} yield
\begin{align*}
  \abs[0]{ I_\measure(\f) - \mu_{N} } \leq p_\textup{max} \int_{[0,1]^d} \abs[0]{ \f(\vx) - m_{N}(\vx) } \dif \vx &\leq C_k \, p_\textup{max} \norm[0]{f - m_{N}}_{\rkhs(\kernel)} \\
  &\to 0.
\end{align*}
This concludes the proof.
\end{proof}

Proposition~\ref{prop:generic-guarantee-2} is likely an old result. 
We follow the proof of Corollary~2.8 in \citet{Graf2013}.

\begin{proof}[Proof of Proposition~\ref{prop:generic-guarantee-2}]
Let $\vu = (\frac{1}{N}, \ldots, \frac{1}{N}) \in \R^N$. By Proposition~\ref{prop:wce} and~\eqref{eq:wce-explicit},
\begin{equation*}
  \begin{split}
    \Sigma_N = \min_{\vv \in \R^N} e_k(\vv, X)^2 &\leq e_k(\vu, X)^2 \\
    &= \kernel_{\measure\measure} - 2\vu\Trans\vk_{\measure\points} + \vu\Trans \mK_{\points\points} \vu \\
&= \kernel_{\measure\measure} - \frac{2}{N} \sum_{i=1}^N \kernel_\measure(\vx_i) + \frac{1}{N^2} \sum_{i=1}^N \sum_{j=1}^N \kernel(\vx_i, \vx_j).
  \end{split}
\end{equation*}
Since $\vx_i \sim \measure$ are i.i.d., taking expectation and using the definitions of $\kernel_\measure$ and $\kernel_{\measure\measure}$ yields
\begin{equation*}
\begin{split}
  \mathbb{E}[ \Sigma_N ] &\leq \mathbb{E}[\kernel_{\measure\measure}] - \frac{2}{N} \sum_{i=1}^N \mathbb{E}[\kernel_\measure(\vx_i)] + \frac{1}{N^2} \sum_{i=1}^N \sum_{j=1}^N \mathbb{E}[\kernel(\vx_i, \vx_j)] \\
&= \kernel_{\measure\measure} - \frac{2}{N} \sum_{i=1}^N \kernel_{\measure\measure} + \frac{1}{N^2} \sum_{i \neq j} \kernel_{\measure\measure}  + \frac{1}{N^2} \sum_{i=1}^N \int_{[0,1]^d} \kernel(\vx, \vx) \dif \measure(\vx) \\
&= \Big( 1 - 2 + \frac{N^2 - N}{N^2} \Big) \kernel_{\measure\measure} + \frac{1}{N} \int_{[0,1]^d} \kernel(\vx, \vx) \dif \measure(\vx) \\
&= \frac{1}{N} \Big( \int_{[0,1]^d} \kernel(\vx, \vx) \dif \measure(\vx) - \kernel_{\measure\measure} \Big).
\end{split}
\end{equation*}
The claim follows from~\eqref{eq:rkhs-error-bound}.
\end{proof}

\subsection{Proofs for Section~\ref{sec:guarantees-isotropic-matern}} \label{sec:proofs-isotropic-matern}

In this section we use $\lesssim$ to denote an inequality which holds up to a constant coefficient independent of $\f$, $X$, the nugget $\lambda$, the length-scale $\ell$ and the scale $\sigma$.
Let $\alpha > d/2$ and $\nu = \alpha - d/2 > 0$.
Here we assume that $\kernel$ is the isotropic Matérn kernel of order $\nu$ given by
\begin{equation*}
  \begin{split}
    \kernel(\vx, \vx') &= \Phi_\kernel(\norm[0]{\vx - \vx'}) \\
    &= \sigma^2 \frac{2^{1-\nu}}{\Gamma(\nu)} \Big( \frac{\sqrt{2\nu} \norm[0]{\vx - \vx'}}{\ell} \bigg)^\nu K_\nu \Big( \frac{\sqrt{2\nu} \norm[0]{\vx - \vx'}}{\ell} \Big),
  \end{split}
\end{equation*}
where $\Gamma$ denotes the gamma function and $K_\nu$ the modified Bessel function of the second kind.
The length-scale $\ell > 0$ controls how ``wiggly'' the processes are, while the regularity parameter $\nu > 0$ determines the differentiability of a process with Matérn covariance.
Because the Fourier transform of a Matérn kernel is
\begin{equation*}
  \widehat{\Phi}_\kernel(\vxi) = \sigma^2 \frac{2^{\d/2} \Gamma(\nu + d/2)}{\Gamma(\nu)} \Big( \frac{2\nu}{\ell^2} \Big)^\nu \Big( \frac{2\nu}{\ell^2} + \norm[0]{\vxi}^2 \Big)^{-(\nu + d/2)},
\end{equation*}
one may compute from the Fourier definition of the Sobolev norm that the RKHS of a Matérn kernel is \emph{norm-equivalent} to a Sobolev space of order $\nu$, which in this case means that $\rkhs(\kernel) = H^\alpha([0, 1]^d)$ as sets of functions and
\begin{equation} \label{eq:norm-equivalence}
  C_k \norm[0]{g}_{H^\alpha([0,1]^d)} \leq \norm[0]{g}_{\rkhs(\kernel)} \leq C_k' \norm[0]{g}_{H^\alpha([0,1]^d)}.
\end{equation}
for the positive constants
\begin{equation} \label{eq:matern-norm-constants}
  C_k = c \frac{1}{\sigma \ell^{d/2}} \max\Big\{1, \frac{\sqrt{2\nu}}{\ell} \Big\}^{-1}, \quad C_k' = c' \frac{1}{\sigma \ell^{d/2}} \max\Big\{1, \frac{\ell}{\sqrt{2\nu}} \Big\},
\end{equation}
where $c$ and $c'$ are positive constants that depend only on $d$ and $\nu$.
See, for example, Lemma~3.4 and its proof in \citet{Teckentrup2020} for the details of this derivation.
Consequently,
\begin{equation} \label{eq:matern-norm-constants-ratio}
  \hspace{-0.1cm} C_k^{-1} C_k' \lesssim \max\Big\{1, \frac{\ell}{\sqrt{2\nu}} \Big\} \max\Big\{1, \frac{\sqrt{2\nu}}{\ell} \Big\} = \max\Big\{ \frac{\ell}{\sqrt{2\nu}}, \frac{\sqrt{2\nu}}{\ell} \Big\}.
\end{equation}

The following result is Theorem~3.2 in \citet{Arcangeli2012} with $n = d$, $p = q = 2$, $s = \tau$, $r = \kappa$, $\varkappa = 2$ and $\Omega = [0, 1]^d$.
The requirement in \citet{Arcangeli2012} that $h_X$ be sufficiently small can be eliminated by inflating the constant $C$.

\begin{theorem}[Sobolev sampling inequality] \label{thm:sobolev-sampling-inequality}
  Let $\kappa > d/2$ and $\tau \in [0, \kappa]$.
  Then there is a positive constant $C$, which does not depend on $u$, $X$, $\lambda$, $\ell$ or $\sigma$, such that
  \begin{equation*}
    \norm[0]{u}_{H^\tau([0, 1]^d)} \leq C \Big( h_X^{\kappa - \tau} \norm[0]{u}_{H^\kappa([0,1]^d)} + h_X^{d/2 - \tau} \Big[ \sum_{i=1}^N u(\vx_i)^2 \Big]^{1/2} \Big)
  \end{equation*}
  for any $u \in H^\kappa([0,1]^d)$ and any nodes $X = \{\vx_1, \ldots, \vx_N\} \subset [0, 1]^d$.
\end{theorem}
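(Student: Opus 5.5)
The statement is Theorem~3.2 of \citet{Arcangeli2012} specialised to $n = d$, $p = q = 2$, $s = \tau$, $r = \kappa$, $\varkappa = 2$ and $\Omega = [0,1]^d$, so the primary plan is to invoke that result. Two points then need checking. First, the constant does not depend on $\lambda$, $\ell$ or $\sigma$, which is immediate because none of these appear in the inequality. Second, the smallness condition $h_X \leq h_0$ of \citet{Arcangeli2012} must be removed. For $h_0 < h_X \leq \sqrt{d}$ (the fill-distance in the unit cube never exceeds $\sqrt{d}$) we have $h_X^{\kappa-\tau} \geq h_0^{\kappa-\tau}$. Combined with $\norm[0]{u}_{H^\tau} \leq \norm[0]{u}_{H^\kappa}$ for $\tau \leq \kappa$, this gives
\begin{equation*}
  \norm[0]{u}_{H^\tau([0,1]^d)} \leq h_0^{-(\kappa-\tau)} h_X^{\kappa-\tau} \norm[0]{u}_{H^\kappa([0,1]^d)}.
\end{equation*}
Enlarging $C$ by the factor $h_0^{-(\kappa-\tau)}$ therefore covers all $h_X$.

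To sketch why the inequality itself holds, I would use the standard local-to-global argument. First prove the case $\tau = 0$. Cover $[0,1]^d$ by a family of cubes (or star-shaped Lipschitz patches) of diameter comparable to $h_X$ with bounded overlap, chosen so that each patch contains a subset of $X$ that is a norming set for polynomials of degree below $\lceil \kappa \rceil$. On a reference patch, a Bramble--Hilbert/Deny--Lions argument gives
\begin{equation*}
  \norm[0]{u}_{L^2} \lesssim |u|_{H^\kappa} + \Big(\sum_{\vx_i} u(\vx_i)^2\Big)^{1/2}.
\end{equation*}
The Sobolev embedding $H^\kappa \subset C$, valid because $\kappa > d/2$, makes the point values meaningful. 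Rescaling to patch size $h_X$ produces the factors $h_X^{\kappa}$ and $h_X^{d/2}$. Summing the squared local estimates over the bounded-overlap cover yields the global $L^2$ bound.

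The case of general $\tau \in [0,\kappa]$ then follows from an interpolation inequality $\norm[0]{u}_{H^\tau} \lesssim \norm[0]{u}_{L^2}^{1-\tau/\kappa} \norm[0]{u}_{H^\kappa}^{\tau/\kappa}$, together with Young's inequality. The exponents combine to give $h_X^{\kappa-\tau}$ and $h_X^{d/2-\tau}$.

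The main obstacle is the covering step. One must guarantee that each patch of size $\asymp h_X$ contains a polynomial-unisolvent subset with uniformly controlled norming constant, and near the boundary of the cube the patches must remain uniformly Lipschitz with comparable cone conditions. The cube's interior cone condition makes this feasible, and this is exactly the technical content supplied by \citet{Arcangeli2012}. Accordingly, in the written proof I would rely on their theorem and add only the constant-inflation remark above.
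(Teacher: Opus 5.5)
Your proposal is correct and follows the paper's proof. Both cite Theorem~3.2 of \citet{Arcangeli2012} with $n = d$, $p = q = 2$, $s = \tau$, $r = \kappa$, $\varkappa = 2$, $\Omega = [0,1]^d$, and both remove the smallness requirement on $h_X$ by enlarging $C$; your inflation step, which uses $\lVert u \rVert_{H^\tau([0,1]^d)} \leq \lVert u \rVert_{H^\kappa([0,1]^d)}$ and $h_X^{\kappa-\tau} \geq h_0^{\kappa-\tau}$ for $h_X > h_0$, makes explicit what the paper only asserts.
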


The following result is a version of Theorem~4.2 in \citet{Narcowich2006} that makes explicit the dependency on the norm-equivalence constants.
See also Theorem~16 in \citet{Wynne2021}.
A proof is contained in the proof of Theorem~4.2 in \citet{Karvonen2020b}.

\begin{proposition} \label{prop:sobolev-prop-1}
  Let $\alpha \geq \beta > d/2$ and suppose that $\f \in H^\beta([0, 1]^d)$.  
  Then
  \begin{equation*}
    \norm[0]{\f - m_{\dataset}}_{H^\beta([0,1]^d)} \lesssim C_k^{-1} C_k' \rho_X^{\alpha-\beta} \norm[0]{\f}_{H^\beta([0,1]^d)}
  \end{equation*}
  for any pairwise distinct nodes $X = \{\vx_1, \ldots, \vx_N\} \subset [0, 1]^d$.
\end{proposition}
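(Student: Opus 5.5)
The proof follows the escaping-the-native-space argument of \citet{Narcowich2006}: replace $\f$ by a smooth surrogate that lies in $\rkhs(\kernel) = H^\alpha([0,1]^d)$ and interpolates $\f$ on $X$. Such band-limited (or smooth) interpolants exist: for any $\f \in H^\beta([0,1]^d)$ and separation radius $q_X$ there is $f_q \in H^\alpha([0,1]^d)$ satisfying
\begin{equation*}
  f_q(\vx_i) = \f(\vx_i) \text{ for } i = 1, \ldots, N, \quad
  \norm[0]{f_q}_{H^\alpha([0,1]^d)} \lesssim q_X^{-(\alpha-\beta)} \norm[0]{\f}_{H^\beta([0,1]^d)},
\end{equation*}
together with $\norm[0]{f_q}_{H^\beta([0,1]^d)} \lesssim \norm[0]{\f}_{H^\beta([0,1]^d)}$. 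I would obtain this by extending $\f$ to $\R^d$, taking a band-limited interpolant with bandwidth of order $q_X^{-1}$ (Narcowich--Ward--Wendland), restricting back to $[0,1]^d$, and using Bernstein's inequality to trade $\alpha-\beta$ derivatives for a factor $q_X^{-(\alpha-\beta)}$. This is the key input and the hardest step; I would take it from the literature rather than reprove it. All constants here depend only on $d$, $\alpha$ and $\beta$, not on kernel parameters.

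Since $m_\dataset$ depends on $\f$ only through $\vf_\points$, the posterior mean for $\f$ coincides with that for $f_q$. Write
\begin{equation*}
  \f - m_\dataset = (\f - f_q) + (f_q - m_\dataset).
\end{equation*}
The second term $u = f_q - m_\dataset$ lies in $H^\alpha([0,1]^d)$ and vanishes on $X$. Applying Theorem~\ref{thm:sobolev-sampling-inequality} with $\kappa = \alpha$ and $\tau = \beta$ removes the data term and gives
\begin{equation*}
  \norm[0]{u}_{H^\beta} \lesssim h_X^{\alpha-\beta} \norm[0]{u}_{H^\alpha}.
\end{equation*}
Next, $\norm[0]{u}_{H^\alpha} \le C_k^{-1} \norm[0]{u}_{\rkhs(\kernel)}$ by \eqref{eq:norm-equivalence}. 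Because $m_\dataset$ is the $\rkhs(\kernel)$-orthogonal projection of $f_q$ onto the span of the translates, the minimum-norm property \eqref{eq:minimum-norm-interpolation-proofs} gives
\begin{equation*}
  \norm[0]{u}_{\rkhs(\kernel)} \le \norm[0]{f_q}_{\rkhs(\kernel)} \le C_k' \norm[0]{f_q}_{H^\alpha}.
\end{equation*}
Chaining these estimates with the bound on $\norm[0]{f_q}_{H^\alpha}$ yields
\begin{equation*}
  \norm[0]{u}_{H^\beta} \lesssim C_k^{-1} C_k' \, h_X^{\alpha-\beta} q_X^{-(\alpha-\beta)} \norm[0]{\f}_{H^\beta} = C_k^{-1} C_k' \rho_X^{\alpha-\beta} \norm[0]{\f}_{H^\beta}.
\end{equation*}

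The first term satisfies $\norm[0]{\f - f_q}_{H^\beta} \lesssim \norm[0]{\f}_{H^\beta}$ by the triangle inequality and the $H^\beta$ bound on $f_q$. It is absorbed into the claim because $\rho_X \ge 1$, and because \eqref{eq:matern-norm-constants-ratio} together with the form of the constants gives $C_k^{-1} C_k' \gtrsim 1$. The point to check is that the lower bound $C_k^{-1}C_k' \gtrsim \max\{\ell/\sqrt{2\nu},\sqrt{2\nu}/\ell\} \ge 1$ holds up to a constant depending only on $d$ and $\nu$. Adding the two terms then gives the proposition, with the implicit constant independent of $\ell$, $\sigma$ and $X$.
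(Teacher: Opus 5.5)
Your proposal is correct and is the same argument the paper relies on: the paper gives no proof of its own but defers to the Narcowich--Ward--Wendland band-limited-surrogate argument (as carried out in the proof of Theorem~4.2 of \citet{Karvonen2020b}, using the same interpolating surrogate that appears in the proof of Proposition~\ref{prop:sobolev-prop-2}), and you track $C_k$ and $C_k'$ through it correctly. The absorption step is even simpler than you make it: applying \eqref{eq:norm-equivalence} to any nonzero $g$ already forces $C_k \le C_k'$, so $C_k^{-1}C_k' \ge 1$ without inspecting the explicit constants (note that \eqref{eq:matern-norm-constants-ratio} is only an upper bound, so it is not what gives you the lower bound).
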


\begin{proposition} \label{prop:sobolev-prop-2}
  Let $\alpha \geq \beta > d/2$ and suppose that $f \in H^\beta([0,1]^d)$.
  Then
  \begin{equation*}
    \norm[0]{m_\dataset}_{\rkhs(\kernel)} \lesssim C_k' q_X^{-(\alpha-\beta)} \norm[0]{f}_{H^\beta([0,1]^d)}
  \end{equation*}
  and
  \begin{equation*}
    \norm[0]{m_\dataset}_{H^\alpha([0,1]^d)} \lesssim C_k^{-1} C_k' q_X^{-(\alpha-\beta)} \norm[0]{f}_{H^\beta([0,1]^d)}
  \end{equation*}
  for any pairwise distinct nodes $X = \{\vx_1, \ldots, \vx_N\} \subset [0, 1]^d$.
\end{proposition}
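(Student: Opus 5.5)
The plan is a band-limited extension argument: construct an interpolant of $f$ with controlled $H^\alpha$-norm, then use the minimum-norm property \eqref{eq:minimum-norm-interpolation-proofs} of $m_\dataset$ in $\rkhs(\kernel)$.

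\emph{Construct the interpolant.} The key tool is the standard result of Narcowich, Ward and Wendland on band-limited interpolation. For $f \in H^\beta([0,1]^d)$, first extend $f$ to $f_e \in H^\beta(\R^d)$ with $\norm[0]{f_e}_{H^\beta(\R^d)} \lesssim \norm[0]{f}_{H^\beta([0,1]^d)}$; this uses a bounded Stein-type extension operator, available because the cube is Lipschitz. Then choose a band-limited function $g_\sigma$ with Fourier support in a ball of radius proportional to $1/q_X$ such that $g_\sigma(\vx_i) = f(\vx_i)$ for all $i$ and $\norm[0]{g_\sigma}_{H^\beta(\R^d)} \lesssim \norm[0]{f_e}_{H^\beta(\R^d)}$. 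The constant depends only on $d$ and $\beta$, and in particular not on $X$, $\ell$ or $\sigma$. Band-limitedness gives a Bernstein inequality:
\begin{equation*}
  \norm[0]{g_\sigma}_{H^\alpha(\R^d)} \lesssim q_X^{-(\alpha-\beta)} \norm[0]{g_\sigma}_{H^\beta(\R^d)}.
\end{equation*}
The reason is that $(1+\norm[0]{\vxi}^2)^{\alpha-\beta} \lesssim q_X^{-2(\alpha-\beta)}$ on the Fourier support, using $q_X \le 1$ on the cube.

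\emph{First bound.} Restrict $g_\sigma$ to $[0,1]^d$; the restriction interpolates $f$ at $X$. By the minimum-norm interpolation property,
\begin{equation*}
  \norm[0]{m_\dataset}_{\rkhs(\kernel)} \le \norm[0]{g_\sigma|_{[0,1]^d}}_{\rkhs(\kernel)}.
\end{equation*}
Next apply the norm equivalence \eqref{eq:norm-equivalence} on the cube. The restriction norm is the minimum over extensions, so it suffices to use the whole-space version of the equivalence applied to the extension $g_\sigma$ itself, which gives
\begin{equation*}
  \norm[0]{g_\sigma|_{[0,1]^d}}_{\rkhs(\kernel)} \le C_k' \norm[0]{g_\sigma}_{H^\alpha(\R^d)}.
\end{equation*}
Chaining with the Bernstein bound and the extension bound yields $\norm[0]{m_\dataset}_{\rkhs(\kernel)} \lesssim C_k' q_X^{-(\alpha-\beta)} \norm[0]{f}_{H^\beta([0,1]^d)}$.

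\emph{Second bound.} Apply the lower norm equivalence $\norm[0]{m_\dataset}_{H^\alpha([0,1]^d)} \le C_k^{-1}\norm[0]{m_\dataset}_{\rkhs(\kernel)}$ and combine it with the first bound.

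\emph{Main obstacle.} The hardest step is getting the band-limited interpolant with $X$-independent constants. I would cite Narcowich--Ward--Wendland (2006, Lemma 3.4 / Theorem 3.5), or the version in Wynne et al.\ (2021). Both are stated with separation radius $q_X$ and constants depending only on $d$ and $\beta$. The remaining work is to check that the scaling conventions, the $(1+\norm[0]{\vxi}^2)$-weighted norm and the Matérn Fourier transform, pass through \eqref{eq:matern-norm-constants} without hidden dependence on $\ell$ or $\sigma$.
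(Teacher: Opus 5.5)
Your proposal is correct and is essentially the paper's proof: the paper cites Lemma~A.1 of \citet{Karvonen2020b} for an interpolant $f_\beta \in H^\alpha([0,1]^d)$ with $\norm[0]{f_\beta}_{H^\alpha([0,1]^d)} \lesssim q_X^{-(\alpha-\beta)} \norm[0]{f}_{H^\beta([0,1]^d)}$, which is essentially the extension, band-limited interpolation and Bernstein construction you unpack, and then uses the minimum-norm property~\eqref{eq:minimum-norm-interpolation-proofs} and the two-sided norm equivalence~\eqref{eq:norm-equivalence} exactly as you do. The only slip is that on $[0,1]^d$ one has $q_X \le \sqrt{d}/2$ rather than $q_X \le 1$, which only changes a $d$-dependent constant in the Bernstein step.
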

\begin{proof}
  The proof is a combination of the a standard bound for band-limited functions and the norm-equivalence~\eqref{eq:norm-equivalence}.
  By Lemma~A.1 in \citet{Karvonen2020b} there is a function $\f_\beta \in H^\alpha([0, 1]^d)$ such that $\f_\beta(\vx_i) = f(\vx_i)$ for every $i = 1, \ldots, N$ and
  \begin{equation} \label{eq:f-beta-norm-bound}
    \norm[0]{\f_\beta}_{H^\alpha([0,1]^d)} \lesssim q_X^{-(\alpha-\beta)} \norm[0]{\f}_{H^\beta([0,1]^d)}.
  \end{equation}
  Since $\f_\beta$ interpolates $\f$ and, by the norm-equivalence, is an element of $\rkhs(\kernel)$, it is one of the functions over which the minimum in~\eqref{eq:minimum-norm-interpolation-proofs} is taken.
  Therefore $\norm[0]{m_\dataset}_{\rkhs(\kernel)} \leq \norm[0]{\f_\beta}_{\rkhs(\kernel)}$.
  Consequently, norm-equivalence and~\eqref{eq:f-beta-norm-bound} give
  \begin{equation*}
    \begin{split}
      \norm[0]{m_\dataset}_{H^\alpha([0,1]^d)} \leq C_k^{-1} \norm[0]{m_\dataset}_{\rkhs(\kernel)} &\leq C_k^{-1} \norm[0]{\f_\beta}_{\rkhs(\kernel)} \\
      &\leq C_k^{-1} C_k' \norm[0]{\f_\beta}_{H^\alpha([0,1]^d)} \\
      &\lesssim C_k^{-1} C_k' q_X^{-(\alpha-\beta)} \norm[0]{\f}_{H^\beta([0,1]^d)},
      \end{split}
  \end{equation*}
  which is the claim.
\end{proof}

\begin{proof}[Proof of \Cref{thm:matern-generic}]
The triangle inequality gives
\begin{equation*}
  \norm[0]{\f - m_{\dataset,\lambda}}_{L^2([0,1]^d)} \leq \norm[0]{\f - m_{\dataset}}_{L^2([0,1]^d)} + \norm[0]{m_{\dataset} - m_{\dataset,\lambda}}_{L^2([0,1]^d)}.
\end{equation*}
Since $\f - m_{\dataset} \in H^\beta([0,1]^d)$ and $m_{\dataset} - m_{\dataset,\lambda} \in H^\alpha([0,1]^d)$ by the norm-equivalance, we may apply \Cref{thm:sobolev-sampling-inequality} with (a) $\tau = 0$, $\kappa = \beta$ and $u = \f - m_{\dataset, \lambda}$ and (b) $\tau = 0$, $\kappa = \alpha$ and $u = m_\dataset - m_{\dataset, \lambda}$.
This yields
\begin{equation*}
  \begin{split}
    \lVert \f - m_{\dataset,\lambda} &\rVert_{L^2([0,1]^d)} \\
    \lesssim{}& h_X^{\beta} \norm[0]{\f - m_{\dataset}}_{H^\beta([0,1]^d)} + h_X^\alpha \norm[0]{m_{\dataset} - m_{\dataset,\lambda}}_{H^\alpha([0,1]^d)} \\
    &+ h_X^{d/2} \Big( \sum_{i=1}^N [ m_\dataset(\vx_i) - m_{\dataset,\lambda}(\vx_i)]^2 \Big)^{1/2}.
    \end{split}
\end{equation*}
From~\eqref{eq:regularised-nodeset-error} we get
\begin{equation*}
  h_X^{d/2} \Big( \sum_{i=1}^N [ m_\dataset(\vx_i) - m_{\dataset,\lambda}(\vx_i)]^2 \Big)^{1/2} \leq h_X^{d/2} \sqrt{\lambda} \norm[0]{m_\dataset}_{\rkhs(\kernel)},
\end{equation*}
while~\eqref{eq:regularisation-norm-bound} gives
\begin{equation*}
  \begin{split}
  \norm[0]{ m_{\dataset} - m_{\dataset,\lambda} }_{H^\alpha([0, 1]^d)} &\leq \norm[0]{ m_{\dataset} }_{H^\alpha([0, 1]^d)} + \norm[0]{ m_{\dataset,\lambda} }_{H^\alpha([0, 1]^d)} \\
  &\leq 2 \norm[0]{ m_{\dataset} }_{H^\alpha([0, 1]^d)}.
  \end{split}
\end{equation*}
Therefore
\begin{equation*}
  \begin{split}
    \norm[0]{\f - m_{\dataset,\lambda}}_{L^2([0,1]^d)} \lesssim{}& h_X^{\beta} \norm[0]{\f - m_{\dataset}}_{H^\beta([0,1]^d)} + h_X^\alpha \norm[0]{ m_{\dataset} }_{H^\alpha([0, 1]^d)} \\
    &+ h_X^{d/2} \sqrt{\lambda} \norm[0]{m_\dataset}_{\rkhs(\kernel)}.
  \end{split}
\end{equation*}
Applying Propositions~\ref{prop:sobolev-prop-1} and~\ref{prop:sobolev-prop-2} and the definition $\rho_X = h_X / q_X$ then yields
\begin{equation*}
  \begin{split}
    \norm[0]{\f - m_{\dataset,\lambda}}_{L^2([0,1]^d)} \lesssim{}& C_k^{-1} C_k' h_X^{\beta} \rho_X^{\alpha-\beta} \norm[0]{\f}_{H^\beta([0,1]^d)} \\
    &+ C_k' h_X^{d/2} q_X^{-(\alpha-\beta)} \sqrt{\lambda} \norm[0]{\f}_{H^\beta([0,1]^d)}.
  \end{split}
\end{equation*}
Using~\eqref{eq:matern-norm-constants} and~\eqref{eq:matern-norm-constants-ratio} yields
\begin{equation*}
  \begin{split}
    \lVert \f - m_{\dataset,\lambda} &\rVert_{L^2([0,1]^d)} \\
    \lesssim{}& \max\Big\{ \frac{\ell}{\sqrt{2\nu}}, \frac{\sqrt{2\nu}}{\ell} \Big\} h_X^{\beta} \rho_X^{\alpha-\beta} \norm[0]{\f}_{H^\beta([0,1]^d)} \\
    &+ \frac{1}{\sigma \ell^{d/2}} \max\Big\{1, \frac{\ell}{\sqrt{2\nu}} \Big\} h_X^{d/2} q_X^{-(\alpha-\beta)} \sqrt{\lambda} \norm[0]{\f}_{H^\beta([0,1]^d)},
    \end{split}
\end{equation*}
which, when inserted in~\eqref{eq:error-L1L2}, concludes the proof.
\end{proof}

\begin{proof}[Proof of Theorem~\ref{thm:matern-random}]
Recall the definitions and results from Section~\ref{sec:integration-in-rkhs}. Because $\rkhs(\kernel)$ is norm-equivalent to the Sobolev space $H^\alpha([0, 1]^d)$, Theorems~1 and~2 and Corollary~2 in \citet{KriegSonnleitner2022} imply that
\begin{equation*}
  \begin{split}
    \mathbb{E}[ \Sigma_N^{1/2} ] &= \mathbb{E}\Big[ \min_{\vv \in \R^N} \sup_{ \norm[0]{g}_{\rkhs(\kernel)} \leq 1} \, \abs[2]{ \int_{[0,1]^d} g(\vx) \dif \vx - \sum_{i=1}^N v_i g(\vx_i) } \Big] \\
    &\leq C_k^{-1} \mathbb{E}\Big[ \min_{\vv \in \R^N} \, \sup_{ \norm[0]{g}_{H^\alpha([0,1]^d)} \leq 1} \, \abs[2]{ \int_{[0,1]^d} g(\vx) \dif \vx - \sum_{i=1}^N v_i g(\vx_i) } \Big]
    \end{split}
\end{equation*}
decays as $N^{-\alpha / d}$, where the second inequality uses~\eqref{eq:norm-equivalence}.
From~\eqref{eq:rkhs-error-bound} we then obtain
\begin{equation*}
  \mathbb{E}[ \abs[0]{ I_\measure(\f) - \mu_N } ] \leq \norm[0]{\f}_{\rkhs(k)} \mathbb{E}[ \Sigma_N^{1/2} ],
\end{equation*}
so that the claim follows from~\eqref{eq:norm-equivalence} and~\eqref{eq:matern-norm-constants-ratio} and the assumption on the boundedness of the length-scale.
\end{proof}

\subsection{Proofs for Section~\ref{sec:guarantees-se-kernel}} \label{sec:proofs-guarantees-se-kernel}

\begin{proof}[Proof of Theorem~\ref{thm:se-generic}]
  The theorem is an immediate consequence of~\eqref{eq:error-L1L2} and Theorem~6.1 in \citet{RiegerZwicknagl2010} with $q=2$.
\end{proof}

Theorems~\ref{thm:se-generic} and~\ref{thm:se-quasi-uniform} have previously appeared as Theorem~2.20 in \citet{Karvonen2019a}.
Theoretical guarantees for explicitly active Bayesian quadrature based on the square exponential kernel may be found in \citet[Sec.\@~4.1]{Kanagawa2019}.
Further guarantees, including on the unbounded domain $\domain = \R^d$, could be derived from the results in~\citet{KuoWozniakowski2012, KuoSloanWozniakowski2017, Karvonen2021a, Karvonen2022-power-series}.

\section{Proofs for Section~\ref{sec:active-bq-guarantees}} \label{sec:proofs-active-bq-guarantees}

\begin{proof}[Proof of Theorem~\ref{thm:active-bq-guarantee}]
Because the transformation $\psi$ is strictly increasing, maximization of~\eqref{eq:sequential-selection-for-guarantees} is equivalent to maximization of 
\begin{equation*}
  \vk_{\measure\points(\tilde{\vx})}\Trans \mK_{\points(\tilde{\vx})\points(\tilde{\vx})}^{-1}\vk_{\measure\points(\tilde{\vx})} ,
\end{equation*}
which in turn is equivalent to minimization of the integral variance
\begin{equation*}
  \Sigma_{\dataset_n(\tilde{\vx})} = \kernel_{\measure\measure} - \vk_{\measure\points(\tilde{\vx})}\Trans \mK_{\points(\tilde{\vx})\points(\tilde{\vx})}^{-1}\vk_{\measure\points(\tilde{\vx})}
\end{equation*}
given the augmented data $\dataset_n(\tilde{\vx}) = \dataset_n \cup \{(\tilde{\vx}, \f(\tilde{\vx}))\} = \{(\vx_i, \f(\vx_i))\}_{i=1}^n \cup \{(\tilde{\vx}, \f(\tilde{\vx}))\}$.
From the interpretation (see \Cref{sec:integration-in-rkhs,sec:sampling-active}) of the integral variance as a worst-case error in the RKHS $\rkhs(\kernel)$, we see that the sequential nodes that we consider are equal to the nodes selected by the \emph{greedy algorithm} in Section~4 of \citet{Santin2021}.
We may therefore invoke Theorem~5.1 in \citet{Santin2021} and Proposition~\ref{prop:wce}, which together give
\begin{equation*}
  \Sigma_{\dataset_n}^{1/2} = \min_{\vv \in \R^N} e_k(\vv, \points_N) \leq C N^{-1/2}
\end{equation*}
for a positive $C$ that depends only on $\domain$, $\measure$ and $\kernel$.
The claim then follows from~\eqref{eq:rkhs-error-bound}.
\end{proof}

\FloatBarrier

\section*{Acknowledgements}

MM's work was mainly performed while at the University of Tübingen. MM gratefully acknowledges financial support by the European Research Council through ERC StG Action 757275 / PANAMA; the DFG Cluster of Excellence “Machine Learning - New Perspectives for Science”, EXC 2064/1, project number 390727645; the German Federal Ministry of Education and Research (BMBF) through the T\"{u}bingen AI Center (FKZ: 01IS18039A); and funds from the Ministry of Science, Research and Arts of the State of Baden-W\"{u}rttemberg.
TK was generously supported by the Research Council of Finland projects 338567 (``Scalable, adaptive and reliable probabilistic integration''), 359183 (``Flagship of Advanced Mathematics for Sensing, Imaging and Modelling''), and 368086 (``Inference and approximation under misspecification'').
TK acknowledges the research environment provided by ELLIS Institute Finland.
We thank Vesa Kaarnioja for correcting some of our misapprehensions about Bayesian quadrature with periodic covariance and two anonymous reviewers for comments and suggestions that helped to improve the survey.

\backmatter  

\printbibliography

\end{document}